\newcommand{\blinducb}{BlindUCB}
\newcommand{\biascorrecteducb}{BClinUCB}
\newcommand{\algorithms}{\mathcal{A}}
\newcommand{\instances}{\mathcal{I}}
\newcommand{\expectation}[1]{\mathbb{E}\left[#1\right]}
\newcommand*{\defeq}{\mathrel{\vcenter{\baselineskip0.5ex \lineskiplimit0pt
			\hbox{\footnotesize.}\hbox{\footnotesize.}}}%
	=}
\renewcommand{\vec}[1]{\boldsymbol{#1}}
\newcommand{\R}{\mathbb{R}}
\newtheorem{Thm}{Theorem}%[section]
\newtheorem{Lem}[Thm]{Lemma}
\title{Machine Learning for Online Algorithm Selection under Censored Feedback}
\author{
    Anonymous authors \textsuperscript{\rm 1}\\
}
\author{
    Alexander Tornede, \textsuperscript{\rm 1}
    Viktor Bengs, \textsuperscript{\rm 1}
    Eyke Hüllermeier \textsuperscript{\rm 2} \\
}
\begin{document}

\maketitle

\begin{abstract}
In online algorithm selection (OAS), instances of an algorithmic problem class are presented to an agent one after another, and the agent has to quickly select a presumably best algorithm from a fixed set of candidate algorithms. For decision problems such as satisfiability (SAT), quality typically refers to the algorithm's runtime. As the latter is known to exhibit a heavy-tail distribution, an algorithm is normally stopped when exceeding a predefined upper time limit. As a consequence, machine learning methods used to optimize an algorithm selection strategy in a data-driven manner need to deal with right-censored samples, a problem that has received little attention in the literature so far. 
In this work, we revisit multi-armed bandit algorithms for OAS and discuss their capability of dealing with the problem. Moreover, we adapt them towards runtime-oriented losses, allowing for partially censored data while keeping a space- and time-complexity independent of the time horizon. In an extensive experimental evaluation on an adapted version of the ASlib benchmark, we demonstrate that theoretically well-founded methods based on Thompson sampling perform specifically strong and improve in comparison to existing methods.
\end{abstract}

\section{Introduction}
Algorithm selection (AS) considers the automatic selection of an algorithm most suitable for solving an instance of an algorithmic problem class, such as the boolean satisfiability problem (SAT) or the traveling salesperson problem (TSP). Suitability is often quantified in terms of a loss function (or performance measure), such as solution quality or the algorithm's runtime\,---\,we shall focus on the latter in the remainder of this work. AS is motivated by the phenomenon of performance complementarity, roughly stating that the best algorithm within a pool of candidate algorithms will typically vary from instance to instance \cite{no_free_lunch_wolpert1997no,kerschke2019automated}.

Considering AS in an online setting, i.e., \textit{online} algorithm selection (OAS), the problem can be posed as an iterative decision making problem, where instances arrive and decisions have to be made in an online manner. This setting suggests modeling the task as a contextual multi-armed bandit problem \cite{chuLRS11_linucb}. In contrast to standard AS, one usually does not assume an upfront training dataset, consisting of loss function measurements for some of the algorithms on some training instances. Instead, in the online setting, the algorithm is provided feedback in the form of an evaluation of the loss function for the selected algorithm on the current instance, after the selection has been made. 

In practice, algorithm runtime distributions often exhibit a heavy-tail property \cite{heavy-tailed_distributions_in_combinatorial_search_gomesSC97}, meaning that some algorithms can take extremely long to terminate on some instances. To avoid an online AS system from stalling, it is common to run algorithms with a timeout after which the algorithm is forcefully terminated if it did not solve the instance before. Correspondingly, one may end up with unsolved instances and \textit{right-censored} runtimes \cite{survial_analysis_kleinbaum2010}: although the true runtime is not known precisely, it is known to exceed the timeout. Needless to say, learning algorithms should try to exploit such censored data, which nevertheless provide a kind of ``weak supervision''. Although different forms of the OAS problem has been studied quite intensively in the literature (cf.\ Section \ref{sec:related_work}), censoring has hardly been considered so far. %Hence, current online algorithm selection approaches have very little means of handling such right-censored feedback in an appropriate manner. 
%Moreover, the space and time complexity of some existing methods depend on the time horizon (e.g.\ \citet{online_as_main_degrooteCBK18}), rendering their application at least debatable in an online setting. 
%, such that approximation techniques such as windowing from the domain of stream learning \cite{gama2012survey} need to be applied \cite{van2014algorithm,rossi2012meta}.

To alleviate this situation, we revisit well-known linear contextual bandit algorithms for the OAS problem under censored data and and discuss their weaknesses. Thereupon, we present theoretically grounded adaptations of these algorithms, incorporating improvements toward learning from censored data and employing selection criteria tailored toward runtime-based loss functions from the field of AS. In an extensive experimental study, we show that our approaches improve in terms of performance upon comparable existing approaches while featuring a time- and space-complexity independent of the time horizon.

%Although our approaches perform about $10\%$ worse than existing methods due to their simple linear model complexity, they have a sound theoretical foundation and perform selection up to $\approx 100$ times faster than competitors while featuring a time- and space-complexity independent of the time horizon, which makes them stand out as true online algorithms. %Moreover, to foster more research into the direction of OAS under censored data, we release a benchmarking framework based on the famous ASlib benchmark \cite{as_lib}, which researchers can use to easily implement their own approaches facilitating an easy and fair comparison to existing methods. 

\section{The Online Algorithm Selection Problem}
\label{sec:from_offline_to_online}
The online algorithm selection problem is an iterative decision problem, which comprises a problem instance space $\instances$ featuring instances of an algorithmic problem class, and a set of candidate algorithms $\algorithms$, which can solve such instances. The instances arrive iteratively over time, so that, at each time step $t$, an algorithm $s(h_t,i_t) = a_t \in \algorithms$ has to be chosen by the \emph{algorithm selector}
%\footnote{The algorithm selector depends on the current time step indicating that its selection strategy can be adapted over the time.} 
$s: \mathcal{H} \times \instances \rightarrow \algorithms$ decides about the algorithm used to solve the instance $i_t \in \instances$. Here, $h_t$ denotes the history of the selection process, consisting of triplets $\{(i_k, a_k, l_k)\}_{k=1}^{t-1}$ informing about the problem instances encountered so far, the algorithms that have been applied, and the corresponding evaluations in terms of losses $l_k = l(i_k,a_k)$ determined by a loss function $l: \instances \times \algorithms \rightarrow \mathbb{R}$. This process either continues ad infinitum (unlimited horizon) or ends at some final time step $T$ (final horizon). Consequently, the goal is to minimize the (average) loss achieved over the course of time, which, for the final horizon case, means to minimize $\mathcal{L}(s) = T^{-1} \sum\nolimits_{t=1}^T l(i_t, s(h_t, i_t))$.
In light of this, the optimal selector, called oracle or \emph{virtual best solver} (VBS), is defined as
\begin{equation}\label{eq:oracle}
    s^*(h_t,i_t) \defeq \arg\min\nolimits_{a \in \algorithms} \mathbb{E}[l(i_t,a) \, | \, h_t] \, ,
\end{equation} where the expectation accounts for possible randomness in the application of algorithm $a$.
%Note that in contrast to the standard offline algorithm selection problem \cite{algorithm_selection_problem_rice76}, no prior training set of instances is assumed.
%\subsection{Online Algorithm Selection Solutions}
%When considering a loss $l$ based on the algorithm's runtime, a complete enumeration over all options $a \in \algorithms$ of $l(i,a)$ is nonsensical since the instance $i$ is solved once the first algorithm choice finishes. 
To approximate the VBS, any intelligent selection strategy has to leverage the historical data and perform some kind of \textit{online learning}. This is a challenging task, especially as it involves the well-known exploration-exploitation dilemma of sequential decision making: The learner is constantly faced with the question of whether to acquire new information about the effectiveness of hitherto less explored algorithms, possibly improving but also taking the risk of large losses, or rather to exploit the current knowledge and choose algorithms that are likely to yield reasonably good results. 
%in order to keep the loss for solving the current problem instance as low as possible (exploitation).
%

One way of constructing an OAS $s$ is to learn, based on the history $h,$ a (cheap-to-evaluate) surrogate loss function $\widehat{l}_h: \instances \times \algorithms \rightarrow \mathbb{R}$ and invoke the principle of ``estimated'' loss minimization:
\begin{equation}\label{eq:shi}
s(h,i) \defeq \arg\min\nolimits_{a \in \algorithms} \widehat{l}_h(i,a) \, .
\end{equation}
%With such a surrogate, the previously nonsensical enumeration of the algorithms becomes a valid strategy as the algorithms no longer have to be run, and a canonical OAS approach can be constructed by replacing $l$ by $\widehat{l}$ in \eqref{eq:oracle}.
For this purpose, we assume instances $i \in \instances$ to be representable in terms of $d$-dimensional feature vectors $f(i) \in \mathbb{R}^d$, where $f: \instances \rightarrow \mathbb{R}^d$ is a feature map. In the case of SAT, for example, features could be the number of clauses, the number of variables, etc.

Due to the online nature of the problem, it is desirable that OAS approaches have a time- and space-complexity independent of the time-horizon. In particular, memorizing all instances (i.e., entire histories $h_t$) and constantly retraining in batch mode is no option. Moreover, from a practical point of view, decisions should be taken quickly to avoid stalling an AS system.

\subsection{Censored Runtimes}\label{sec:censored_data}
A loss function of specific practical relevance is the runtime of an algorithm, i.e., the time until a solution to a problem instance is found. However, in domains like combinatorial optimization, runtimes may exhibit a heavy-tail distribution, i.e., some algorithms may run unacceptably long on some instances \cite{heavy-tailed_distributions_in_combinatorial_search_gomesSC97}. This is why algorithms are usually executed under time constraints in the form of an upper bound on the runtime (a ``cutoff'') $C \in \mathbb{R}$. If an algorithm does not terminate until $C$, it is forcefully terminated, so as to avoid blocking the system. The instance is then treated as unsolved. 
%
%The rationale for this setting stems from practical applications in which instances blocking a system for solving instances indefinitely should be avoided.
%
%However, this comes at the expense of having unsolved instances.
%

To account for unsolved instances, the loss is augmented by a penalty function $\mathcal{P}: \mathbb{R} \rightarrow \mathbb{R}$: 
\begin{equation}\label{eq:loss_at_t}
    %l_{t,a} = 
    l(i,a) = m(i,a) 1_{ \llbracket m(i,a) \leq C \rrbracket} +  \mathcal{P}(C) 1_{\llbracket m(i,a) > C \rrbracket},
\end{equation} where $1_{\llbracket \cdot \rrbracket}$ is the indicator function and $m: \mathcal{I} \times \mathcal{A} \rightarrow \mathbb{R}$ returns the true (stochastic) runtime of an algorithm $a$ on an instance $i$.
Formally, when selecting algorithm $a$, either the runtime $m(i_t,a)$ is observed, if $m(i_t,a) \leq C,$ or a penalty $\mathcal{P}(C)$ due to a \textit{right-censored} sample \cite{survial_analysis_kleinbaum2010}, i.e.\ $m(i_t,a) > C$, while the true runtime $m(i_t,a)$ remains unknown. With $\mathcal{P}(C) = 10C$, (\ref{eq:loss_at_t}) yields the well-known \textit{PAR10} score.

Previous work has shown that treating such right-censored data correctly is important in the context of standard (offline) AS \cite{xu2007satzilla,tornedeWWMH20_run2survive,hanselle2021algorithm} and algorithm configuration (AC) \cite{bo_with_censored_data_hutterHL11,benchmarking_algorithm_configurators_eggenspergerLHH18,neural_model_based_opt_with_censored_obs_eggenspergerEHMLH20}. %Treating such data in an appropriate manner is even more important 
In the online setting, this might be even more critical, because the data does not only influences the learning but also the (active) sampling strategy of the AS.
%This seems to be even more important in the online AS problem, as otherwise the current model cannot be improved in time steps with exclusively censored feedback.

The simplest approach for dealing with censored samples is to ignore them all together, which, however, causes an undesirable loss of information. Another simple strategy is imputation. For example, in the case of the \textit{PAR10} score, censored samples are commonly replaced by the cutoff time $C$ or its multiplicity $10\, C$. Obviously, such imputations of constant values may produce a strong bias. For example, imputation of $C$ leads to a systematic underestimation of the true runtime, and so does the ignorance of the censored (and hence long) runtimes \cite{tornedeWWMH20_run2survive}. 

A more advanced technique for imputation of right-censored data developed by \citet{regression_with_censored_data_schmee1979simple} leverages sampling from a truncated normal distribution, which has been frequently used in the context of AS and AC in the past (e.g. \cite{xu2007satzilla, benchmarking_algorithm_configurators_eggenspergerLHH18}), but not necessarily improves upon the na\"ive imputation schemes previously discussed \cite{tornedeWWMH20_run2survive}.
%Although the method has found frequent use in the context of AS and AC in the past (e.g. \cite{xu2007satzilla, benchmarking_algorithm_configurators_eggenspergerLHH18}) 
%%bo_with_censored_data_hutterHL11}), 
%%recent work has shown that the method can also fail to yield better results than the na\"ive imputation schemes previously discussed \cite{tornedeWWMH20_run2survive}.
%recent work has shown that the method can also fail to improve upon the na\"ive imputation schemes above \cite{tornedeWWMH20_run2survive}.

Although recent work \cite{tornedeWWMH20_run2survive} has shown that classical parameter-free survival analysis methods can perform very well in the offline AS setting, these methods cannot be easily transformed into online variants. For example, the well-known Cox proportional hazard model 
%(also known as Cox regression) 
\cite{david1972regression} relies on estimating the baseline survival function through algorithms like the Breslow estimator  (in its parameter-free version), which inherently requires the storage of all data in the form of so-called risk-sets \cite{breslow1972contribution}. In principle, approximation techniques from the field of learning on data streams are conceivable \cite{shaker2014survival}. Yet, in this work, we will focus on veritable online algorithms that do not require any approximation.

\subsection{OAS as a Bandit Problem}
%\subsection{OAS as a contextual multi-armed bandit problem}
OAS can be cast as a contextual multi-armed bandit (MAB) problem comprising a set of arms/algorithms $\algorithms$ to choose from. In each round $t$, the learner is presented a context, i.e., an instance $i_t \in \mathcal{I}$ and its features $f(i_t) \in \mathbb{R}^d$, and is requested to select one of the algorithms for this instance, i.e., pull one of the arms, which will be denoted by $a_t$. As a result, the learner will suffer a loss as defined in \eqref{eq:loss_at_t}.
%\begin{equation}\label{eq:loss}
%    l_{t,a_t} =  m(i_t,a_t) 1_{ \llbracket m(i_t,a_t) \leq C \rrbracket } + \mathcal{P}(C)  1_{ \llbracket m(i_t,a_t) > C \rrbracket }
    %    \begin{cases}
     %   m(i_t,a_t) & \text{ if } m(i_t,a_t) \leq C \\
      %  \mathcal{P}(C) & \text{ else }
    %\end{cases} \,\,\, .
%\end{equation}
% loss function = penalty function?
%where $\mathcal{L} : \mathbb{R} \rightarrow \mathbb{R}$ is a predefined loss function penalizing the learner for making a selection yielding a timeout.
%In words, the loss realization will either be the runtime $m(i_t, a_t)$ of the selected algorithm $a_t$ on $i_t$ if it is smaller or equal to the threshold $C$ or it will be a censored sample indicating that the runtime is larger than $C$ penalizing the learner according to a predefined loss function $\mathcal{L} : \mathbb{R} \longrightarrow \mathbb{R}$. A frequent instantiation of $\mathcal{L}$ is $\mathcal{P}(C) = 10 \cdot C$ in the light of the \textit{PAR10} loss function. 
%

In the stochastic variant of the contextual MAB problem, the losses are generated at random according to underlying distributions, one for each arm, which are unknown to the learner.
Thus, the expected loss $\expectation{l(i_t,a_t) \vert f(i_t)}$ suffered at time step $t$ is taken with respect to the unknown distribution of the chosen algorithm's runtime $m(i_t,a_t)$ and depends on the instance (features) $f(i_t).$ 
Ideally, the learner should pick in each time step $t$ an arm having the smallest expected loss for the current problem instance.
Formally, 
%the optimal choice is given by
%
\begin{equation} \label{defi_optimal_arm}
    a_t^* \in \arg\min\nolimits_{a \in \algorithms} \expectation{l(i_t,a_t) \vert f(i_t)},
\end{equation} 
suggesting the optimal strategy to be $s^*_t(h_t,i_t) = a_t^*.$
Needless to say, this optimal strategy can only serve as a benchmark, since the runtime distributions are unknown.
Nevertheless, having an appropriate model or estimate for the expected losses, one could mimic the choice mechanism in \eqref{defi_optimal_arm}, giving rise to a suitable online algorithm selector (\ref{eq:shi}).
%
%In order to quantify the quality of an online algorithm selector (learner) $s=(s_t)_{t\in \mathbb{N}}$ over a time period of length $T \in \mathbb{N}$, we consider its cumulative loss defined by $L_s(T) = \sum\nolimits_{t=1}^T l_{t,a_t}.$
%

For convenience, we 
%modify the notation and 
shall write from now on $\vec{f}_i,$ $l_{t,a}$ or $m_{i,a}$ instead of $f(i),$ $l(i_t,a)$ or $m(i,a)$ for any $i,i_t\in \mathcal{I},$ $a \in \algorithms.$
Moreover, we write $\|\vec{x}\|_A \defeq \sqrt{\vec{x}^\intercal A^{-1} \vec{x}}$ for any $\vec{x}\in \R^d$ and semi-positive definite matrix $A\in \R^{d\times d}$, and $\|\vec{x}\| \defeq \sqrt{\vec{x}^\intercal \vec{x}}.$
In Section \ref{sec_appendix_symbols} of the appendix, we provide a list of frequently used symbols for quick reference.

\section{Modeling Runtimes} \label{sec_runtime_model}
As hinted at earlier, online algorithm selectors based on a bandit approach can be reasonably designed through the estimation of the expected losses occurring in \eqref{defi_optimal_arm}.
%As stated above, the design of a suitable algorithm selector based on a bandit approach depends on the estimation of the expected losses occurring in \eqref{defi_optimal_arm}.
%
To this end, we make the following assumption regarding the runtime of an algorithm $a \in \mathcal{A}$ on problem instance $i \in \mathcal{I}$ with features $\vec{f}_i \in \mathbb{R}^d:$ 
\begin{equation} \label{defi_exp_runtime_model}
    m_{i,a} = \exp(\vec{f}_i^\intercal\vec{\theta}^*_a ) \exp(  \epsilon_{i,a}),
\end{equation} 
where $\vec{\theta}^*_a \in \mathbb{R}^d$ is some unknown weight vector for each algorithm $a \in \mathcal{A}$, and $ \epsilon_{i,a}$ is a stochastic noise variable with zero mean.
%such that $\expectation{\exp(  \epsilon_{i,a}) }=1.$
The motivation for \eqref{defi_exp_runtime_model} is twofold. First, theoretical properties such as positivity  of the runtimes and heavy-tail properties of their distribution (by appropriate choice of the noise variables) are ensured.   
Second, we obtain a convenient linear model for the logarithmic runtime $y_{i,a}$ of an algorithm $a$ on instance $i,$ namely
\begin{equation} \label{defi_runtime_model}
    y_{i,a} = \log(m_{i,a}) =  \vec{f}_i^\intercal\vec{\theta}^*_a  +  \epsilon_{i,a} \, .
\end{equation} 
It is important to realize the two main implications coming with those assumption. First, up to the stochastic noise term, the (logarithmic) runtime of an algorithm depends linearly on the features of the corresponding instance. This is not a big restriction, because the feature map $\vec{f}_i$ may include nonlinear transformations of ``basic'' features and play the role of a \emph{linearization} \cite{scho_lw}\,---\,the practical usefulness of non-linear models has recently been shown ,for example, by \citet{tornedeWWMH20_run2survive}. Moreover, previous work on AS has also considered logarithmic runtimes for model estimation \cite{xu2007satzilla}.
%In practice, this is at least debatable, as previous work on algorithm selection suggests that non-linear approaches often achieve better performance than linear approaches on the standard benchmarks \cite{tornedeWWMH20_run2survive}. However, one could also establish a non-linear relationship through feature transformations. 
%However, we rely on this assumption as the majority of research on the contextual bandit problem is focused on linear models as well. 
Second, the model (\ref{defi_runtime_model}) suggests to estimate $\vec{\theta}_a^*$ separately for each algorithm, which is common practice but excludes the possibility to exploit (certainly existing) similarities between the algorithms. 
%is debatable. In practice, one often considers algorithms, which rely on similar techniques at the core, suggesting that at least a subset of the algorithms $\mathcal{A}$ behave similarly.
In principle, it might hence be advantageous to estimate joint models  \cite{tornede2019algorithm,tornedeWH20_extreme}.
% but as the separate estimation is common practice, we will stick to this idea for the remainder of this work.

Additionally, we assume that (i) $\exp(f_i^\intercal\vec{\theta}^*_a) \leq C$ for any $a \in \mathcal{A},$  $i \in \mathcal{I}$ and (ii) $\epsilon_{i,a}$ is normally distributed with zero mean and standard deviation $\sigma>0$. While the first assumption is merely used for technical reasons, namely to derive theoretically valid confidence bounds for estimates of the weight vectors, the second assumption implies that $\exp(\epsilon_{i,a})$ is log-normal, which is a heavy-tail distribution yielding a heavy-tail distribution for the complete runtime, thus adhering to practical observations discussed earlier.

\section{Stochastic Linear Bandits Approaches} \label{sec_lin_bandits}
%\section{Na\"ive Stochastic Linear Bandits Approaches}
%
As \eqref{defi_runtime_model} implies $\expectation{\log(m_{i,a}) \vert \vec{f}_i} = \vec{f}_i^\intercal\vec{\theta}^*_a$,
it is tempting to apply a straightforward contextualized MAB learner designed for expected loss minimization,
%\footnote{Objectives in a multi-armed bandit problem are usually formulated in terms of (expected) rewards instead of losses, which, however, can easily be translated into loss-aware objectives by essentially replacing maximization by minimization operations as well as rewards by losses.}
 in which the expected losses are linear with respect to the context vector, viewing the logarithmic runtimes as the losses of the arms.
This special case of contextualized MABs, also known as the \emph{stochastic linear bandit} problem, has received much attention in the recent past (cf.\ Chap.\ 19 in \citet{lattimore2020bandit}). 
Generally speaking, such a learner tends to choose an arm having a low expected log-runtime for the given context (i.e., instance features), which in turn has a small expected loss.
A prominent learner in stochastic linear bandits is LinUCB \cite{chuLRS11_linucb}, a combination of online linear regression and the UCB \cite{auerCF02_ucb} algorithm. UCB implements the principle of optimism in the face of uncertainty and solves the exploration-exploitation trade-off by constructing confidence intervals around the estimated mean losses of each arm, and choosing the most optimistic arm according to the intervals.

Under the runtime assumption \eqref{defi_runtime_model}, the basic LinUCB variant (which we call BLindUCB) disregards censored observations in the OAS setting, and therefore considers the ridge-regression (RR) estimator for each algorithm $a \in \mathcal{A}$ only on the non-censored observations.
Formally, in each time step $t$, the RR estimate $\widehat{\vec{\theta}}_{t,a}$ for the weight parameter $\vec{\theta}_a^*$ is 
\begin{equation} \label{defi_ols_na\"ive}
     \arg\min\limits_{\vec{\theta} \in \mathbb{R}^d} \sum\nolimits_{j=1}^t 1_{\llbracket a_j = a, m_{i_j,a} \leq C \rrbracket} \big( \vec{f}_{i_j}^\intercal\vec{\theta} - y_{i_j,a}\big)^2 + \lambda\|\vec{\theta}\|^2 \, ,
\end{equation} 
where  $\lambda \geq 0$ is a regularization parameter.
The resulting selection strategy for choosing algorithm $a_t$ at time $t$ is then
\begin{equation} \label{eq_choice_linucb}
    a_t = \arg\min\limits_{a \in \mathcal{A}} \ \vec{f}_{i_t}^\intercal \widehat{\vec{\theta}}_{t,a} - \alpha \cdot w_{t,a}(\vec{f}_{i_t}) \, ,
\end{equation} 
where  $\alpha \in \mathbb{R}_{>0}$ is some parameter controlling the exploration-exploitation trade-off, and $w_{t,a}(\vec{f}_{i_t}) = \| \vec{f}_{i_t} \|_{A_{t,a}}$
%
%\begin{equation} \label{defi_cb_blind_ucb}
%    %
%    w_{t,a}(f_{i_t}) = \sqrt{f_{i_t}^\intercal (I_{d} + X_{t,a}^\intercal X_{t,a} )^{-1} %f_{i_t}^\intercal}
%    %
%\end{equation}
%
the confidence width for the prediction of the (logarithmic) runtime of algorithm $a$ for problem instance $i_t$ based on the estimate \eqref{defi_ols_na\"ive}.
%\todo{why do we need the identity here? Maybe we should directly work with the A notation here such that an inexperience reader might not wonder if we still need to store all seen samples}
%
Here, $A_{t,a}= \lambda I_{d} + X_{t,a}^\intercal X_{t,a}  $ is the (regularized) Gram matrix, with $I_{d}$ the $d\times d$ identity and $X_{t,a}$ denoting the design matrix at time step $t$ associated with algorithm $a,$ i.e., the matrix that stacks all the features row by row whenever $a$ has been chosen.

%instead of adding it for minimization reasons and $A_{t,a} \in \mathbb{R}^{d\times d}$ denotes the design matrix (corresponding to the RR estimate) at time step $t$ associated with algorithm $a$. Here, $\alpha \in \mathbb{R}_{>0}$ controls the exploration-exploitation trade-off.
%
The great appeal of this approach is the existence of a closed form expression of the RR estimate, which can be updated sequentially with time- and space-complexity depending only on the feature dimension but independent of the time horizon \cite{lattimore2020bandit}.
%
%\todo{ closed-form of RR estimate, computationally cheap, small storage complexity $\Rightarrow$ a \emph{real} online learning algorithm }
%
However, as already mentioned in Section \ref{sec:censored_data}, disregarding the right-censoring of the data often yields a rather poor performance of a regression-based learner in offline AS problems, so it might be advantageous to adapt this method to that end.

%However, as already mentioned in Section \ref{sec:censored_data}, disregarding the right-censoring of the data often yields a rather poor performance of a regression-based learner in offline AS problems, which we confirm in our experiments.
%

\subsection{Imputation-based Upper Confidence Bounds} \label{sec_bclinUCB}
A simple approach to include right-censored data into \blinducb{} is to impute the corresponding samples by the cut-off time $C$ as discussed in Sec.\ \ref{sec:censored_data}, giving rise to the RR estimate
%, so that the RR estimator can be defined similarly as in \eqref{defi_ols_na\"ive} by 
%
\begin{equation} \label{defi_ols_imputed}
    \widehat{\theta}_{t,a} = \arg\min\limits_{\vec{\theta} \in \mathbb{R}^d} \sum\nolimits_{j=1}^t 1_{\llbracket a_j = a \rrbracket} \big( f_{i_j}^\intercal\vec{\theta} - \widetilde{y}_{i_j,a} \big)^2 + \lambda\|\vec{\theta}\|^2 \, ,
\end{equation}  
where $\widetilde{y}_{i_j,a} = \min(y_{i_j,a}, \log(C))$ is the possibly imputed logarithmic runtime.

When considering censoring, the least-squares formulation in \eqref{defi_ols_imputed} appears to have an important disadvantage. Those weight vectors producing overestimates of $C$ in case of a timeout are penalized (quadratically) for predictions $C < \widehat{y} < m(i,a)$, although these predictions are actually closer to the unknown ground truth than $C$. In fact, one can verify this intuition theoretically by showing that, for $\lambda=0$, the RR estimate $\widehat{\theta}_{t,a}$ is downward biased in the case of censored samples (cf.\  \citet{greene2005censored}).

Although this imputation strategy has been shown to work astonishingly well in practice in offline AS \cite{tornedeWWMH20_run2survive}, the bias in the RR estimates requires a bias-correction in the confidence bounds of BLindUCB to ensure that the estimate falls indeed into the bounds with a certain probability. 
The corresponding bias-corrected confidence bound widths are $w_{t,a}^{(bc)}(\vec{f}_{i_t})  = (1 + 2 \log(C) \, (N^{(C)}_{a,t})^{1/2} ) w_{t,a}(\vec{f}_{i_t}),$ where $N^{(C)}_{a,t}$ is the amount of timeouts of algorithm $a$ until $t$ (cf.\ Section \ref{sec_appendix_bc_cb} of the appendix). 
The resulting LinUCB variant, which we call BClinUCB (bias-corrected LinUCB), employs the same action rule as in \eqref{eq_choice_linucb}, but uses $w_{t,a}^{(bc)}$ instead of $w_{t,a}$ and the estimate in \eqref{defi_ols_imputed}.
%where the confidence widths are as in \eqref{defi_cb_bias_correct_ucb}.
%

Unfortunately,  these bias-corrected confidence bounds reveal a decisive disadvantage in practice, namely, the confidence bound of an algorithm $a \in \mathcal{A}$ is usually much larger than the actually estimated (log-)runtime $\vec{f}_{i_t}^\intercal\widehat{\vec{\theta}}_{t,a}$ for instance $i_t.$
Therefore, the UCB value of $a$\,---\,let us call it $\delta_{t,a} = \vec{f}_{i_t}^\intercal\widehat{\vec{\theta}}_{t,a} - w^{(bc)}_{t,a}(\vec{f}_{i_t})$\,---\,is such that $\delta_{t,a} < 0$ if the algorithm has experienced at least one timeout. This prefers algorithms that experienced a timeout over those that did not.
%, i.e., for which $w^{(bc)}_{t,a}(\vec{f}_{i_t}) = 0$. 
This in turn explains the poor performance of the \biascorrecteducb{} strategies in the evaluation in Section \ref{sec:experimental_evaluation}.

\subsection{Randomization of Upper Confidence Bounds} \label{subsec_rand_ucb}

One way of mitigating the problem of the bias-corrected confidence bounds is to leverage a generalized form of UCB, called randomized UCB (RandUCB) \cite{vaswaniMDK20_randUCB}, where the idea is to multiply the bias-corrected bounds $w^{(bc)}_{t,a}(\vec{f}_{i_t})$ with a random realization of a specific distribution having positive support. RandUCB can be thought of as a mix of the classical UCB strategy, where the exploration-exploitation trade-off is tackled via the confidence bounds, and Thompson sampling \cite{thompson1933likelihood,russo2017tutorial}, which leverages randomization in a clever way for the same purpose (see next section). To this end, we define randomized confidence widths $\widetilde{w}_{t,a}(\vec{f}_{i_t}) = w_{t,a}^{(bc)}(\vec{f}_{i_t}) \cdot r$, where $r \in \mathbb{R}$ is sampled from a half-normal distribution with $0$ mean and standard deviation $\widetilde{\sigma}^2$. This ensures that $r \geq 0$ and that the confidence widths do indeed shrink when the distribution is properly parametrized. 
Although this improves the performance of LinUCB as we will see later, the improvement is not significant enough to achieve competitive results.

All variants of LinUCB for OAS  introduced so far can be jointly defined as in Alg. \ref{alg:lin_ucb} in Section \ref{sec_appendix_pseudo_codes} of the appendix.

\subsection{Bayesian Approach: Thompson Sampling} \label{sec_TS}

As the confidence bounds used by LinUCB seem to be a problem in practice, one may think of Thompson sampling (TS) as an interesting alternative.
The idea of TS is to assume a prior loss distribution for every arm, and in each time step, select an arm (i.e. algorithm) according to its probability of being optimal, i.e., according to its posterior loss distribution conditioned on all of the data seen so far. In particular, this strategy solves the exploration-exploitation trade-off through randomization driven by the posterior loss distribution. 
%and is thus immune to the problems of the LinUCB described earlier.

More specifically, let the (multivariate) Gaussian distribution with mean vector $ \boldsymbol{\mu} \in \R^d$ and covariance matrix $\Sigma\in \R^{d\times d}$ be denoted by $\mathrm{N} \big( \boldsymbol{\mu},  \Sigma \big)$. Similarly, the cumulative distribution function of a (univariate) Gaussian distribution with mean $\mu\in \R$ and variance $\sigma^2$ at some point $z\in \R$ is denoted by $\Phi_{\mu,\sigma^2}(z)$. 
A popular instantiation of TS for stochastic linear bandits \cite{agrawal2013thompson} assumes a  Gaussian prior distribution $\mathrm{N} \big( \widehat{\vec{\theta}}_{t,a} ,  \sigma  A_{t,a}^{-1} \big)$ for each weight vector of an algorithm $a$, where $\lambda,\sigma>0$ and  $\widehat{\vec{\theta}}_{t,a} $ denotes the RR estimate \eqref{defi_ols_imputed}.
This yields $\mathrm{N} \big( \widehat{\vec{\theta}}_{t+1,a},  \sigma  A_{t+1,a}^{-1} \big)$ as the posterior distribution at time step $t+1$. 
The choice mechanism is then defined by
\begin{equation} \label{eq_choice_ts}
    a_t = \arg\min\nolimits_{a \in \mathcal{A}} \vec{f}_{i_t}^\intercal \widetilde{\vec{\theta}}_a \, ,  
\end{equation} 
where $\widetilde{\vec{\theta}}_a \sim \mathrm{N} \big( \widehat{\vec{\theta}}_{t,a} ,  \sigma   A_{t,a}^{-1} \big)$ for each $a \in \mathcal{A}.$ 
Interestingly, as the experiments will show later on, this rather na\"ive version of Thompson sampling in the presence of censored data works astonishingly well in practice.

\section{Expected PAR10 Loss Minimization} \label{sec_exp_par_10_loss_min}
Due to the possibility of observing only a censored loss realization, i.e., $\mathcal{P}(C)$, it is reasonable to believe that a successful online algorithm selector needs to be able to properly incorporate the probability of observing such a realization into its selection mechanism. For this purpose, we derive the following decomposition of the expected loss  under the assumptions made in Section \ref{sec_runtime_model}  (details in Section \ref{sec_appendix_expec_loss} of the appendix):
\begin{align}
%\label{eq:expected_loss_3}
%\begin{split}
    \expectation{ l_{t,a} \vert  \vec{f}_{i_t} } &=  \big(1-\Phi_{\vec{f}_{i_t}^\intercal\vec{\theta}^*_a,\sigma^2}(\log(C)) \big) \cdot \left(\mathcal{P}(C) - E_{C}\right)  \notag
    % \exp( \vec{f}_{i_t}^\intercal\vec{\theta}^*_a + \nicefrac{\sigma^2}{2}) \cdot \left(   \nicefrac{\Phi_{0,1}(C_{i_t,a}^{(1)} )}{\Phi_{0,1}( C_{i_t,a}^{(2)}  )}  \right)  
     \\
    &\quad +   E_{C}  \label{eq:expected_loss_3} %\\
   % &\ + \exp( \vec{f}_{i_t}^\intercal\vec{\theta}^*_a + \nicefrac{\sigma^2}{2}), 
    %\exp( f_{i_t}^\intercal\vec{\theta}^*_a + \nicefrac{\sigma^2}{2}) 
    %& + \big(1-\Phi_{f_{i_t}^\intercal\vec{\theta}^*_a,\sigma}(\log(C)) \big) \\ 
    %&\quad \cdot \left(\mathcal{P}(C) - E_{C,\sigma}(f_{i_t}^\intercal\vec{\theta}^*_a) \right),
%\end{split}
\end{align} 
where  $C_{i_t,a}^{(1)} = \nicefrac{( \log(C)- \vec{f}_{i_t}^\intercal\vec{\theta}^*_a -\sigma^2 )}{\sigma} ,$  $C_{i_t,a}^{(2)} = C_{i_t,a}^{(1)} + \sigma $ and
$$ E_{C} = E_{C}(\vec{f}_{i_t}^\intercal\vec{\theta}^*_a,\sigma) = \exp( \vec{f}_{i_t}^\intercal\vec{\theta}^*_a + \nicefrac{\sigma^2}{2}) \cdot \frac{\Phi_{0,1}(C_{i_t,a}^{(1)} )}{\Phi_{0,1}( C_{i_t,a}^{(2)} )}$$  is the conditional expectation of a log-normal distribution with parameters $\vec{f}_{i_t}^\intercal\vec{\theta}^*_a$ and $\sigma^2$ under a cutoff $C.$
%where  $C_{i_t,a}^{(1)} = \nicefrac{( \log(C)- \vec{f}_{i_t}^\intercal\vec{\theta}^*_a -\sigma^2 )}{\sigma} ,$  $C_{i_t,a}^{(2)} = \nicefrac{( \log(C)- \vec{f}_{i_t}^\intercal\vec{\theta}^*_a )}{\sigma} $ and $E_{C}(\vec{f}_{i_t}^\intercal\vec{\theta}^*_a,\sigma) = \exp( \vec{f}_{i_t}^\intercal\vec{\theta}^*_a + \nicefrac{\sigma^2}{2}) \cdot \nicefrac{\Phi_{0,1}(C_{i_t,a}^{(1)} )}{\Phi_{0,1}( C_{i_t,a}^{(2)} )}$ is the conditional expectation of a log-normal distribution with parameters $\vec{f}_{i_t}^\intercal\vec{\theta}^*_a$ and $\sigma^2$ under a threshold $C.$ 
As such, the decomposition suggests that there are two core elements driving the expected loss of an algorithm $a$ conditioned on a problem instance $i_t$ with features $\vec{f}_{i_t}$: its expected log-runtime $\vec{f}_{i_t}^\intercal\vec{\theta}^*_a$ and its probability of running into a timeout, i.e., $\mathbb{P}(m(i_t,a) > C \vert \vec{f}_{i_t}) = \big(1-\Phi_{\vec{f}_{i_t}^\intercal\vec{\theta}^*_a,\sigma^2}(\log(C)) \big) $.

\subsection{LinUCB Revisited} \label{sec_lin_ucb_rev}
Having the refined expected loss representation in \eqref{eq:expected_loss_3}, one could simply plug-in the confidence bound estimates used by LinUCB for the log-runtime predictions to obtain a selection strategy following the optimism in the face of uncertainty principle, i.e., using an estimate of the target value to be minimized (here the expected loss in \eqref{eq:expected_loss_3}), which underestimates the target value with high probability.
Denote by $o_{t,a} = \vec{f}_{i_t}^\intercal \widehat{\vec{\theta}}_{t,a} - \alpha \cdot w_{t,a}(\vec{f}_{i_t})$ the optimistic  and by $p_{t,a} = \vec{f}_{i_t}^\intercal \widehat{\vec{\theta}}_{t,a} + \alpha \cdot w_{t,a}(\vec{f}_{i_t})$ the pessimistic estimate used by LinUCB (or its variants), where $ w_{t,a}$ is the confidence width of the corresponding LinUCB variant. 
With this, the selection strategy at time $t$ is to use $a \in \algorithms$ minimizing
\begin{align} \label{eq_choice_linucb_revisited_2}
%
%\begin{split}
         &\big(1-\Phi_{p_{t,a},\sigma}(\log(C)) \big)  
        \cdot \big(\mathcal{P}(C) - \hat E_C^{(1)} \big) 
        %& \exp( o_{t,a} + \nicefrac{\sigma^2}{2}) \cdot \frac{\Phi_{0,1}( \hat C_{i_t,a}^{(1,p)} )}{\Phi_{0,1}( \hat C_{i_t,a}^{(2,o)})} \\
        %
         + \hat E_C^{(2)}, 
        %\big(1-\Phi_{p_{t,a},\sigma}(\log(C)) \big)  
        %\cdot \left(\mathcal{P}(C) - \exp( p_{t,a} + \nicefrac{\sigma^2}{2})  \cdot \frac{\Phi_{0,1}( \hat C_{i_t,a}^{(1,o)} )}{\Phi_{0,1}( \hat C_{i_t,a}^{(2,p)})} \right),
%\end{split}
%
\end{align} 
where 
\begin{align*}
    \hat E_C^{(1)} &=  \exp( p_{t,a} + \nicefrac{\sigma^2}{2})  \cdot \nicefrac{\Phi_{0,1}( \hat C_{i_t,a}^{(o)} )}{\Phi_{0,1}( \hat C_{i_t,a}^{(p)} + \sigma  )}, \\
    \hat E_C^{(2)} &= \exp( o_{t,a} + \nicefrac{\sigma^2}{2}) \cdot \nicefrac{\Phi_{0,1}( \hat C_{i_t,a}^{(p)} )}{\Phi_{0,1}( \hat C_{i_t,a}^{(o)} + \sigma )},
    %\\
    %
    %\hat C_{i_t,a}^{(1,p)} &= \nicefrac{( \log(C)- p_{t,a} -\sigma^2) }{\sigma}, \ 
    %\hat C_{i_t,a}^{(2,p)} = \hat C_{i_t,a}^{(1,p)} + \sigma, \\
    %
    %\hat C_{i_t,a}^{(1,o)} &= \nicefrac{( \log(C)- o_{t,a} -\sigma^2) }{\sigma}, \
    %\hat C_{i_t,a}^{(2,o)} = \hat C_{i_t,a}^{(1,o)} + \sigma.
%   
\end{align*}
and $\hat C_{i_t,a}^{(p)} = \nicefrac{( \log(C)- p_{t,a} -\sigma^2) }{\sigma},$ $\hat C_{i_t,a}^{(o)} = \nicefrac{( \log(C)- o_{t,a} -\sigma^2) }{\sigma}$. 
As $o_{t,a}$ ($p_{t,a}$) underestimates (overestimates)  $\vec{f}_{i_t}^\intercal {\vec{\theta}}_{a}^*$ it is easy to see that the terms in \eqref{eq_choice_linucb_revisited_2} are underestimating the corresponding terms occurring in \eqref{eq:expected_loss_3} with high probability, respectively.
%
%Thus, the overall expression within the minimum in \eqref{eq_choice_linucb_revisited_2} underestimates  $\expectation{ l_{t,a} \vert  \vec{f}_{i_t} }$ with high probability.
%The explicit choice mechanism is given due to space constraints in Section \ref{sec_appendix_linucb_revi} of the appendix.
%
However, as our experiments will reveal later on, the issues of the LinUCB-based algorithms caused either by the wide confidence bands or the biased RR estimate remain.

%The problems induced by the necessary bias correction in the confidence bounds of UCB either call for an unbiased estimate of $\vec{\theta}^*_a$, which can be learned in an online fashion, or at least for a different bandit algorithm family. Although the path of obtaining an unbiased estimate is the more general one as all bandit approaches would benefit from it, it is very complicated (if at all possible) to obtain such estimates through closed-form solutions.

\subsection{Thompson Sampling revisited}\label{sec:thompson_rev}

Fortunately, the refined expected loss representation in \eqref{eq:expected_loss_3} can be exploited quite elegantly  by Thompson Sampling using Gaussian priors as in Section \ref{sec_TS}.
Our suggested instantiation of TS chooses algorithm $a_t \in \algorithms$ which minimizes
%according to
%
\begin{align} \label{eq_choice_tompcen_2}
%
%\begin{split}
%
   % a_t = \arg \min_{a\in \algorithms} \, &
   \big(1-\Phi_{\vec{f}_{i_t}^\intercal\widetilde{\vec{\theta}}_a,\tilde{\sigma}^2_{t,a}}(\log(C)) \big)  \big(\mathcal{P}(C) - \tilde E_{C}\big) 
   %\notag \\
    %
    %&\quad 
    + \tilde E_{C},
    %\exp( \vec{f}_{i_t}^\intercal\widetilde{\vec{\theta}}_a + \nicefrac{\tilde{\sigma}^2_{t,a}}{2}) \cdot \left(   \nicefrac{\Phi_{0,1}(C_{i_t,a}^{(1)} )}{\Phi_{0,1}( C_{i_t,a}^{(2)}  )}  \right) \\
    %
%
%\end{split}
\end{align} 
where $\widetilde{\vec{\theta}}_a $  is a random sample of the posterior $\mathrm{N} \big( \widehat{\vec{\theta}}_{t,a} ,  \sigma  A_{t,a}^{-1} \big),$ and $\tilde E_{C} = E_C(\vec{f}_{i_t}^\intercal\widetilde{\vec{\theta}}_a,\tilde{\sigma}_{t,a}),$ 
$\tilde{\sigma}^2_{t,a} = \sigma \| \vec{f}_{i_t} \|_{A_{t,a}}^2.$

Although the TS approach just presented does involve consideration of the timeout probability, it still suffers from the problem that the estimates for $\vec{\theta}^*_a$ are downward-biased as they are based on the RR estimate obtained from imputing censored samples with the cutoff time $C$. In the spirit of the Kaplan-Meier estimator \cite{kaplan1958nonparametric} from the field of survival analysis, \citet{buckley1979linear} suggested to augment censored samples by their expected value according to the current model and then solve the standard least-squares problem (for an overview of alternative approaches, we refer to \citet{miller1982regression}). This idea is particularly appealing, as it allows for an easy integration into online algorithms, due to its use of the least-squares estimator. Also, it has the potential to produce more accurate (i.e., less biased) estimates for $\vec{\theta}^*_a$. The integration is shown in lines 20--22 in Alg.\ \ref{alg:thompson} in Section \ref{sec_appendix_pseudo_codes} of the appendix together with a discussion of its hyperparameters.

\begin{figure*}[t]
  \begin{subfigure}[t]{0.33\textwidth}
    \centering
    \includegraphics[width=\textwidth]{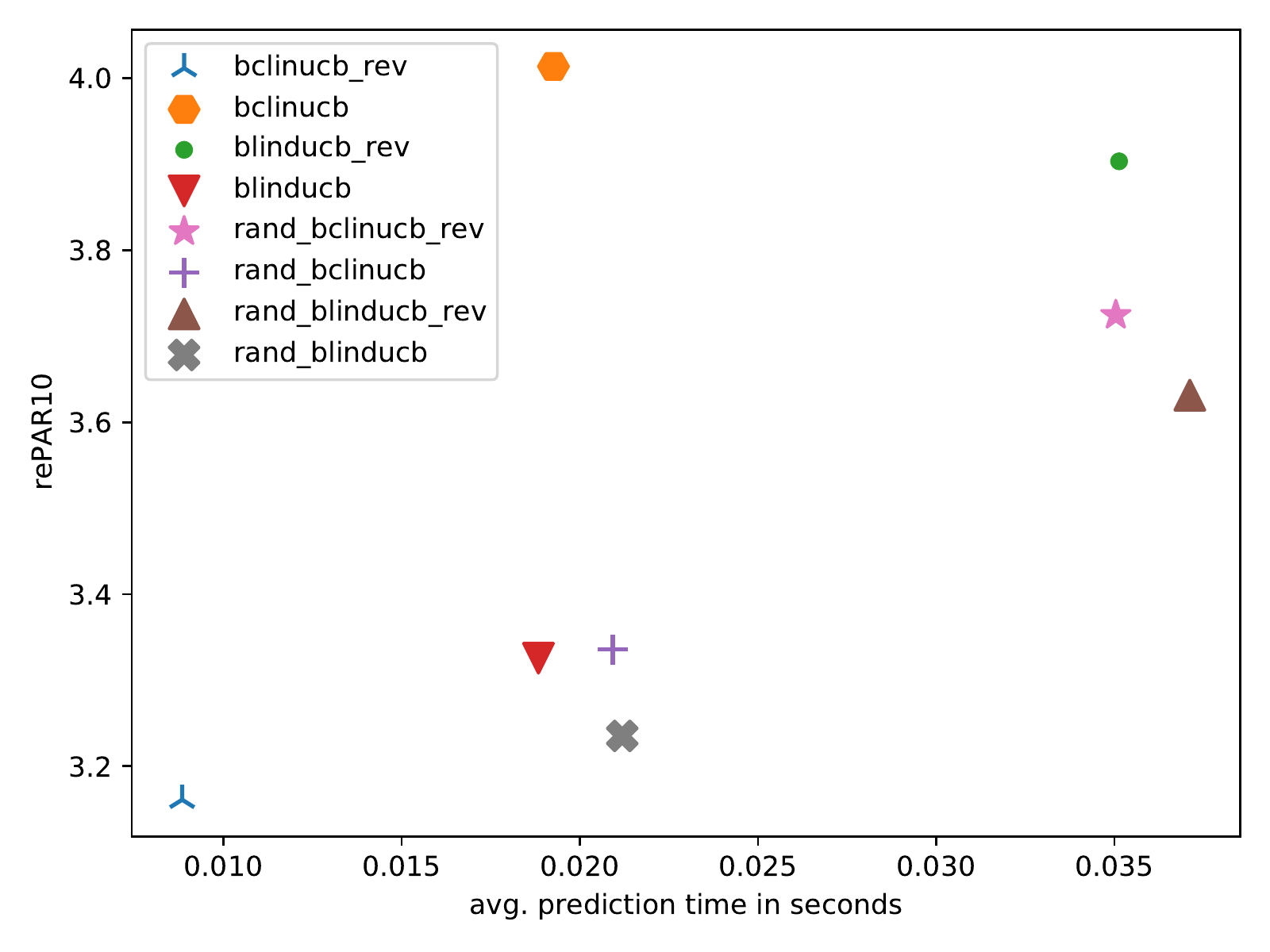}
    \caption{LinUCB variants}\label{fig:ablation_ucb}
  \end{subfigure}
  \begin{subfigure}[t]{0.33\textwidth}
    \centering
    \includegraphics[width=\textwidth]{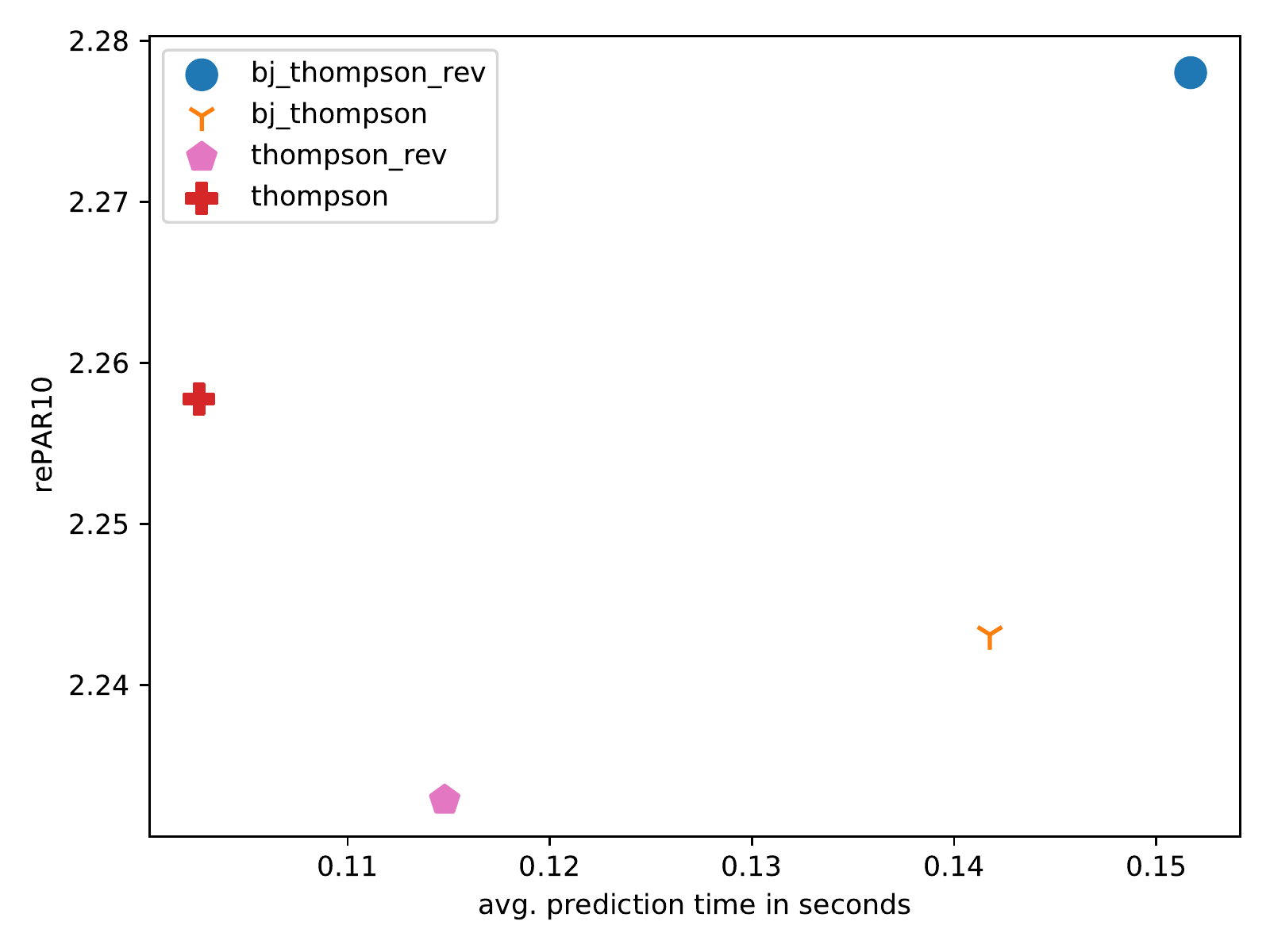}
    \caption{Thompson variants}\label{fig:ablation_thompson}
  \end{subfigure}
  \begin{subfigure}[t]{0.33\textwidth}
    \centering
    \includegraphics[width=\textwidth]{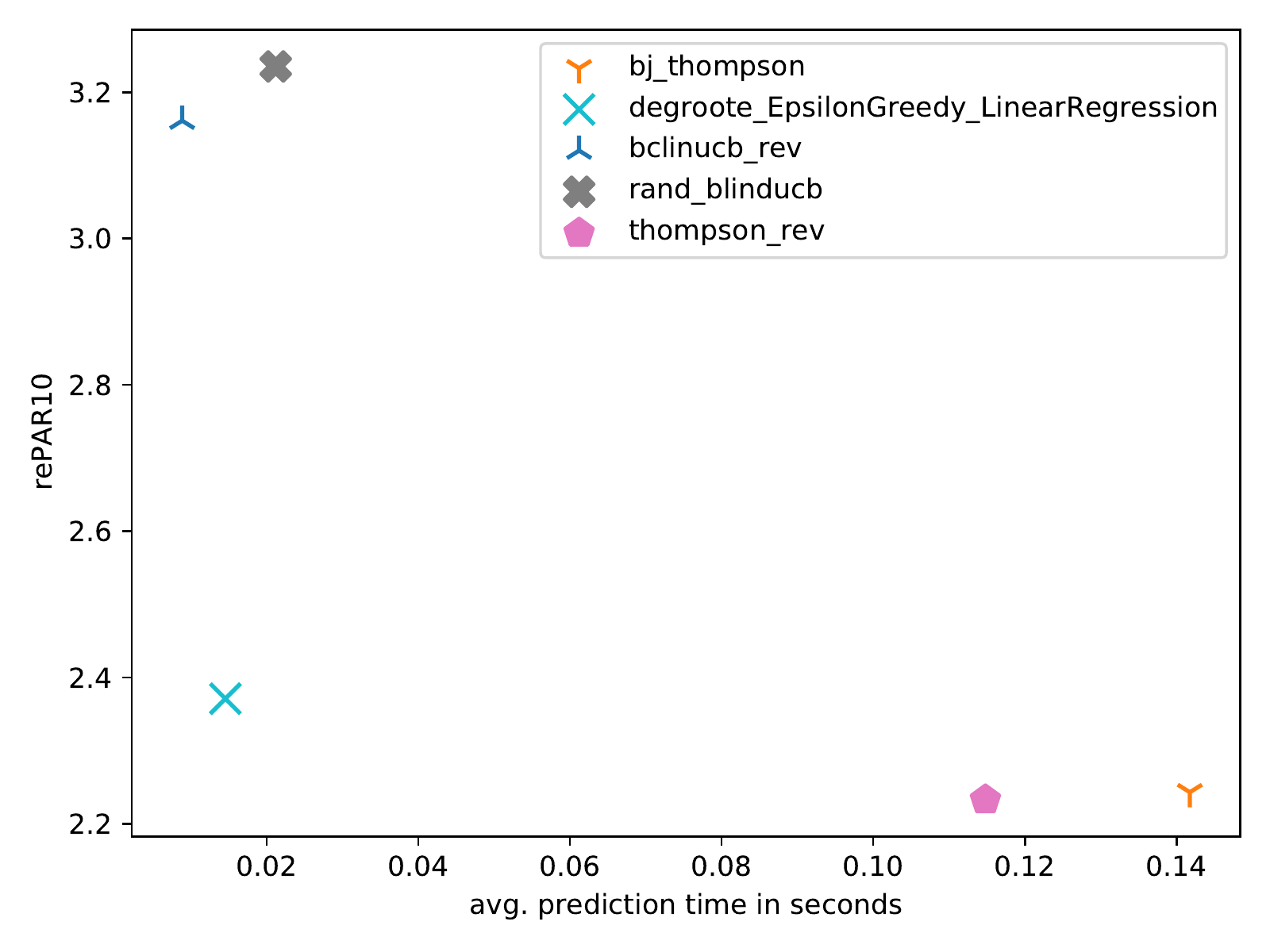}
    \caption{Degroote vs. this work}\label{fig:competitor}
  \end{subfigure}
  \caption{rePAR10 score averaged over all scenarios of approaches plotted against their average prediction time in seconds.}
\end{figure*}

\section{Evaluation}\label{sec:experimental_evaluation}

We base our evaluation on the standard algorithm selection benchmark library ASlib (v4.0) \cite{as_lib} and compare to the most relevant competitor approaches. ASlib is a curated collection of over 25 different algorithm selection problems, called scenarios, based on different algorithmic problem classes such as SAT, TSP, CSP. Each scenario comprises several instances for which the performance of a set of algorithms has been evaluated using a certain cutoff to avoid excessively long algorithm runs. An overview of the included scenarios and their statistics can be found in Section \ref{sec:aslib_overview} of the appendix. Since ASlib was originally designed for offline AS, we do not use the train/test splits provided by the benchmark, but rather pass each instance one by one to the corresponding online approaches, ask them to select an algorithm and return the corresponding feedback. To increase evaluation robustness, we randomly shuffle the instances of each scenario, repeat the evaluation ten times with different seeds and always report average or median aggregations across those ten repetitions. As ASlib contains missing feature values for some instances in some scenarios, we imputed these using the mean feature value of all instances seen until that point. Moreover, features were scaled to unit vectors by dividing by their norm. If the according variant does not self-impute censored values, these were imputed with the cutoff time.

All experiments were run on machines featuring Intel Xeon E5-2695v4@2.1GHz CPUs with 16 cores and 64GB RAM, where each approach was limited to a single-core. All code including detailed documentation can be found \ifthenelse{\boolean{blind}}{}{on Github\footnote{ \url{https://github.com/alexandertornede/online_as}} and} in the appendix. The corresponding hyperparameter settings used for the experiments can be found in Section \ref{sec:hyperparams} of the appendix\ifthenelse{\boolean{blind}}{}{ and in the repository}, parameter sensitivity analyses in Section \ref{sec:app_sensitivity_analysis}.

Instead of directly reporting PAR10 scores, we sometimes resolve to reporting a normalized version called rePAR10 (relative PAR10), which is comparable across scenarios and defined with respect to the oracle\footnote{Although in standard AS one usually uses the nPAR10 which is defined wrt. to both the oracle and algorithm best on average (aka. SBS), we abstain from using it as the SBS cannot be as easily defined as in the standard setting since instead of offline training data only the performance of the selected algorithm (and not of all) in the current time step is available to update the underlying model.}. The rePAR10 is simply defined as the PAR10 score of the corresponding approach divided by the PAR10 score of the oracle, i.e., the smaller the rePAR10, the better. Moreover, we will explicitly analyze the ``prediction time'', i.e., the time an approach requires for making a single selection and updating its model with the corresponding feedback. 

\subsection{Ablation Study}
First, we analyze how the different LinUCB and Thompson variants perform in terms of rePAR10 performance, when some of their components are activated or deactivated.

\subsubsection{LinUCB}

Recall that we differentiate in principle between \blinducb{} and \biascorrecteducb. Both the randomization idea (denoted by a 'rand\_' prefix) and the expected PAR10 loss minimization (denoted by '\_rev' suffix) can in principle be incorporated into both yielding a total of $8$ variants. 

Figure \ref{fig:ablation_ucb} shows the median rePAR10 score over all scenarios of the corresponding variant plotted against its prediction time in seconds. First of all, it is very clear that all of the LinUCB variants are at least $3.1$ times as worse as the oracle. A closer look at the selections made by the corresponding models shows two things. First, although \blinducb{} heavily underestimates runtimes as it completely ignores censored samples, its estimates yield some of the best algorithm selections among all LinUCB variants. Second, except for the revisited versions, \biascorrecteducb{} yields worse results than the corresponding \blinducb{} variant in all cases. As hinted at earlier, \biascorrecteducb{} suffers from very large confidence bounds due to the correction, yielding suboptimal selections in many cases. Moreover, one can see that directly minimizing the expected PAR10 loss does not prove to be very beneficial (except for the pure \biascorrecteducb{} variant) and can even worsen the performance for some variants. From a methodological point of view, this is not surprising for \blinducb{}, as it would technically require a different treatment of the expected PAR10 loss based on a truncated (instead of a censored) linear model (cf.\ \citet{greene2005censored}). However, for the \biascorrecteducb{} variants, this is rather disappointing. In contrast, the randomization (i.e.\ RandUCB) yields consistent improvements (except for one case), making some of the randomized variants the best among all LinUCB variants. This also coincides with our observation that the poor selection performance is caused by large confidence widths due to the correction, which are decreased through randomization.

\subsubsection{Thompson}
We presented both a na\"ive and a revisited form of Thompson incorporating expected PAR10 loss minimization ('\_rev' suffix). Moreover, both versions can be equipped with the Buckley-James imputation strategy discussed at the end of Section \ref{sec:thompson_rev} ('bj\_' prefix), yielding a total of 4 variants. 

Figure \ref{fig:ablation_thompson} shows the median rePAR10 score over all scenarios of the corresponding variant plotted against its average prediction time per instance. As expected, the more components are active, the longer the prediction time becomes. However, the average prediction time per instance still remains below $0.16$s. Both the revisited and the Buckley-James variant yield an improvement over the plain Thompson variant. A combined variant worsens the performance, meaning that the revisited variant achieves the best performance. However, overall one has to note that all variants behave rather similar with only small differences in performance.

\subsection{Comparison to Competitors}

\begin{table}[t]
    \centering
    \resizebox{\columnwidth}{!}{%
        \begin{tabular}{llll}
\toprule
Approach & bj\_thompson & thompson\_rev & degroote\_$\epsilon$-greedy\_LR \\\midrule
Scenario          &                                                  &                                                   &                                                                               \\
\midrule
ASP-POTASSCO           &                   \underline{949.38} $\pm$ 62.38 &                       \textbf{902.64} $\pm$ 78.43 &                                1047.13 $\pm$ 46.50 \\
BNSL-2016              &                 \underline{9638.04} $\pm$ 378.05 &                     \textbf{9467.01} $\pm$ 252.52 &                             12510.26 $\pm$ 1291.03 \\
CPMP-2015              &                            8241.01 $\pm$ 1164.85 &                 \underline{8158.72} $\pm$ 1268.83 &                      \textbf{6991.97} $\pm$ 501.36 \\
CSP-2010               &                             8295.76 $\pm$ 699.43 &                  \underline{7892.67} $\pm$ 692.83 &                      \textbf{7593.13} $\pm$ 208.94 \\
CSP-MZN-2013           &                             8207.06 $\pm$ 532.70 &                  \underline{8171.21} $\pm$ 594.49 &                      \textbf{8034.62} $\pm$ 113.78 \\
CSP-Minizinc-Time-2016 &                 \underline{4811.54} $\pm$ 409.79 &                     \textbf{4759.50} $\pm$ 306.03 &                               5258.70 $\pm$ 406.91 \\
GRAPHS-2015            &              \underline{4.1e+07} $\pm$ 4.4e+06 &                           4.2e+07 $\pm$ 3.4e+06 &                   \textbf{3.5e+07} $\pm$ 1.4e+06 \\
MAXSAT-PMS-2016        &                 \underline{2853.44} $\pm$ 210.21 &                     \textbf{2808.51} $\pm$ 218.55 &                               3279.54 $\pm$ 133.00 \\
MAXSAT-WPMS-2016       &                 \underline{6304.15} $\pm$ 166.98 &                              6592.87 $\pm$ 210.25 &                      \textbf{6287.21} $\pm$ 541.69 \\
MAXSAT12-PMS           &                 \underline{5347.39} $\pm$ 291.87 &                              5408.40 $\pm$ 482.42 &                      \textbf{5308.11} $\pm$ 129.30 \\
MAXSAT15-PMS-INDU      &                 \underline{3046.05} $\pm$ 128.34 &                      \textbf{3032.08} $\pm$ 90.71 &                               3867.70 $\pm$ 255.98 \\
MIP-2016               &                    \textbf{8081.57} $\pm$ 845.74 &                 \underline{8746.73} $\pm$ 1159.36 &                             10644.68 $\pm$ 3405.18 \\
PROTEUS-2014           &                   \textbf{13484.34} $\pm$ 541.83 &                 \underline{14115.69} $\pm$ 768.16 &                              15622.29 $\pm$ 784.60 \\
QBF-2011               &                            15708.25 $\pm$ 784.81 &                 \underline{15178.86} $\pm$ 904.72 &                     \textbf{13912.24} $\pm$ 356.69 \\
QBF-2014               &                    \textbf{3629.40} $\pm$ 220.68 &                  \underline{3679.96} $\pm$ 256.03 &                               4116.15 $\pm$ 116.27 \\
QBF-2016               &                 \underline{5082.59} $\pm$ 718.71 &                     \textbf{5045.16} $\pm$ 848.59 &                               5346.29 $\pm$ 210.05 \\
SAT03-16\_INDU          &                   \textbf{11980.15} $\pm$ 193.67 &                 \underline{12154.46} $\pm$ 221.01 &                              12754.50 $\pm$ 200.55 \\
SAT11-HAND             &                           30484.08 $\pm$ 1379.35 &                 \underline{30085.51} $\pm$ 764.32 &                     \textbf{29544.70} $\pm$ 952.78 \\
SAT11-INDU             &                            17540.58 $\pm$ 530.82 &                 \underline{17028.84} $\pm$ 479.15 &                     \textbf{17018.24} $\pm$ 647.90 \\
SAT11-RAND             &                  \textbf{18061.78} $\pm$ 2770.70 &                \underline{19061.88} $\pm$ 2522.11 &                              21008.77 $\pm$ 530.22 \\
SAT12-ALL              &                    \textbf{4720.22} $\pm$ 432.14 &                  \underline{5132.48} $\pm$ 395.74 &                               5650.32 $\pm$ 214.36 \\
SAT12-HAND             &                    \textbf{7443.01} $\pm$ 180.51 &                  \underline{7509.02} $\pm$ 199.39 &                               7634.24 $\pm$ 267.89 \\
SAT12-INDU             &                     \textbf{4511.68} $\pm$ 76.33 &                              4945.79 $\pm$ 228.37 &                   \underline{4755.52} $\pm$ 206.95 \\
SAT12-RAND             &                    \textbf{4008.79} $\pm$ 206.59 &                  \underline{4523.33} $\pm$ 170.56 &                               5023.73 $\pm$ 174.68 \\
SAT15-INDU             &                    \textbf{7700.27} $\pm$ 310.65 &                  \underline{7856.08} $\pm$ 522.84 &                               8220.22 $\pm$ 525.13 \\
SAT18-EXP              &                \underline{25201.41} $\pm$ 681.42 &                    \textbf{24906.56} $\pm$ 540.36 &                              25272.35 $\pm$ 881.19 \\
TSP-LION2015           &                    \textbf{1226.11} $\pm$ 309.42 &                  \underline{1411.06} $\pm$ 329.16 &                               1634.79 $\pm$ 112.29 \\ \midrule
avgrank                &                                         \textbf{1.814815} &                                          \underline{1.888889} &                                           2.296296 \\
\bottomrule
\end{tabular}
    }
    \caption{Average PAR10 scores and standard deviation of Thompson variants and Degroote.}
    \label{tab:par10_thompson_vs_degroote}
\end{table}

In the following, we only compare two UCB and Thompson variants to the competitors to avoid overloading the evaluation. In particular, we compare to an approach from \citeauthor{online_as_main_degrooteCBK18} (cf.\ Section \ref{sec:related_work}). Their approaches essentially employ batch machine learning models (linear regression or random forests) on the runtime, which are fully retrained after each seen instance. The selection is either done via a simple $\epsilon$-greedy strategy \cite[Chapter 2]{sutton2018reinforcement} or using a UCB strategy, where the confidence bounds are estimated using the standard deviation extracted from the underlying random forest by means of the Jackknife \cite{sextonL09} method. In fact, the Degroote approaches cannot be considered true online algorithms due to their dependence on the time horizon\,---\,they become intractable with an increasing number of instances. Although one can update the underlying models in principle less often (e.g., every ten instances as in the original paper), we abstain here from doing so, because our approaches also incorporate every sample immediately. 
%However, one can in principle reduce the runtime of corresponding methods by increasing the update interval. 
As we only consider linear models in this work, we only compare to the linear $\epsilon$-greedy strategy presented by \citeauthor{online_as_main_degrooteCBK18} and abstain from comparing against the random forest versions to avoid that the model complexity becomes a confounding factor in the evaluation.

Figure \ref{fig:competitor} illustrates the rePAR10 value in comparison to the prediction time in seconds of our most successful bandit algorithms and the linear $\epsilon$-greedy Degroote approach. First, it is easy to see that the Thompson variants largely outperform the LinUCB variants in terms of performance at the cost of being slightly slower in terms of prediction time.

Second, the Thompson variants improve around $6\%$ in terms of performance upon the Degroote approach. Interestingly, the latter can compete with all online algorithms in terms of prediction time, and even outperforms the Thompson variants. This is mainly because of the limited size of the data, and because the batch linear regression of the library used for implementation of the Degroote approach is extremely efficient, making batch training affordable. Besides, the Thompson variants require sampling from a multi-variate normal distribution, taking up most of the prediction time. Nevertheless, as already said, batch learning will necessarily become impracticable with an increasing number of observations, and sooner or later get slower than the incremental Thompson approach. 

Table \ref{tab:par10_thompson_vs_degroote} illustrates a more nuanced comparison between the best Thompson variants and Degroote, where the best value for each scenario is printed in bold and the second best is underlined.

%However, as mentioned earlier, applying batch learning models has the disadvantage that the prediction time of the corresponding OAS approach increases with the time horizon.}

%Figure \ref{fig:overtime_runtime_asp-potassco} shows the prediction time in seconds over the course of time of a selection of approaches for the ASP-POTASSCO scenario. The shaded areas indicate error regions. As expected, the limitations of the Degroote approaches yield a continuous increase in prediction time, whereas our approaches remain rather constant. Similar plots for all other scenarios can be found in Section \ref{sec:app_prediction_time_analysis} of the appendix.

Overall, one can verify that Thompson sampling is a much more successful strategy than both $\epsilon$-greedy and LinUCB in OAS. Moreover, directly optimizing the expected PAR10 score (\_rev variants) and thereby incorporating the right-censoring of the data often proves beneficial, yielding the best OAS approach in this work in the form of Thompson\_rev. Nevertheless, as the large rePAR10 scores indicate, there is still room for improvement.

\section{Related Work}\label{sec:related_work}
Most related from a problem perspective is the work by \citeauthor{online_as_main_degrooteCBK18}. In a series of papers \cite{degroote2016reinforcement,online_as_degroote2017,online_as_main_degrooteCBK18}, they define the OAS problem in a similar form as we do and present different context-based bandit algorithms. However, their approaches essentially rely on batch learning algorithms, making their time- and space-complexity dependent on the time horizon\footnote{As we show in this work, some of their batch learning algorithms can actually be replaced by online learners.}. Moreover, they do not explicitly consider the problem of censoring, but apply a PAR10 imputation (as standard in ASlib). Lastly, compared to our work, their approaches lack a theoretical foundation, for instance, their models on the runtimes would in principle even allow negative runtime predictions.
%Lastly, in comparison to this work, their approaches lack a theoretical foundation and do not directly optimize the expected PAR10 loss.} %As a last difference, during the evaluation, they assume their algorithms to be provided some training data, which is not a requirement of our approaches. 

%\citeauthor{mesaoudi2020online} present an algorithm for an OAS problem variant, where the feedback provided to the learning algorithm is different than in our problem setting as multiple algorithms are run in parallel and all are terminated, once the first algorithm finishes. However, the authors implicitly deal with censored feedback using preference learning \cite{mpub218} techniques. 

The majority of other work related to OAS is situated in the fields of (online) algorithm scheduling \cite{lindauer2016empirical} and dynamic algorithm configuration \cite{biedenkapp2020dynamic} (aka.\ algorithm control  \cite{biedenkapp2019towards}), where the goal is to predict a schedule of algorithms or dynamically control the algorithm during the solution process of an instance instead of predicting a single algorithm as in our case. \citet{dynamic_algorithm_portfolios_gaglioloS06}, \citet{algorithm_survival_analysis_gaglioloL10}, \citet{gagliolo2010algorithm}, \citet{pimpalkhare2021medleysolver}, and \citet{cicirello2005max} essentially consider an online algorithm scheduling problem, where both an ordering of algorithms and their corresponding resource allocation (or simply the allocation) has to be computed. Thus, the prediction target is not a single algorithm as in our problem, but rather a very specific composition of algorithms, which can be updated during the solution process. 
%As expected, 
Different bandit algorithms are used to solve this problem variant. \citet{lagoudakis2000algorithm}, \citet{armstrong2006dynamic}, \citet{van2018towards}, and \citet{laroche2017reinforcement} in one way or another consider the problem of switching (a component of) an algorithm during the solution process of an instance by means of reinforcement learning or bandit algorithms. They can be considered to be in the field of algorithm control and dynamic algorithm configuration. 

Another large corpus of related work can be found in the field of learning from data streams, where the goal is to select an algorithm for the next instance assuming that the data generating process might show a distributional shift \cite{gama2012survey}. To achieve this, \citet{rossi2012meta} and \citet{van2014algorithm} apply windowing techniques and apply offline AS approaches, which are trained on the last window of instances and use to predict for the next instance. Similarly, \citet{van2015having} dynamically adjust the composition and weights of an ensemble of streaming algorithms. In a way, the methods presented by \citet{online_as_main_degrooteCBK18} can be seen as windowing techniques where the window size is set to $t-1$, if $t$ is the current time step.

Finally, \citet{gupta2017pac} analyze several versions of the AS problem on a more theoretical level, including our version of OAS, and show for some problem classes the existence of an OAS approach with low regret under specific assumptions.
% 
%, it is indeed impossible to construct an OAS approach with low regret when not enough assumptions are made.

\section{Conclusion and Future Work}
In this paper, we revisited several well-known contextual bandit algorithms and discussed their suitability for dealing with the OAS problem under censored feedback. As a result of the discussion, we adapted them towards runtime-oriented losses, assuming partially censored data while keeping a space- and time-complexity independent of the time horizon. 
Our extensive experimental study shows that the combination of considering right-censored data in the selection process and an appropriate choice of the exploration strategy leads to better performance.
%Our extensive experimental study reveals that some of our adaptations can update their models several orders of magnitude faster than existing OAS methods at the cost of an acceptable loss in performance. 
%Compared to previous work, our contribution is threefold: We present the first veritable online version of OAS, we provide a theoretically well-founded solution to dealing with censored data, and we improve in terms of empirical performance.
As future work, we plan to investigate whether online adaptations of non-parametric survival analysis methods (such as Cox-regression) are possible. 
Furthermore, results from offline algorithm selection suggest that an extension of our approaches to non-linear models seems useful to further improve performance.
%Moreover, the results suggest that an extension of our approaches to non-linear models seems worthwhile.
%
%Another interesting direction would be to consider a learning scenario in which the algorithm selector can choose the cutoff time by itself, and possibly lower than the overall cutoff time, in order to speed up its overall prediction time.

%
%\section{Comments}
%\todo{
%\begin{itemize}
%    \item 7 pages of content + 2 pages for acknowledgements and references
%    \item check for disallowed packages and remove pagestyle command
%\end{itemize}
%}

%%%%%%%%%%%%%%%%%%%%%%%%%%%%%%%%%%%%%%%%%%%%%%%%%%%%%%%%%%%%

\ifthenelse{\boolean{blind}}
{}
{
\section*{Acknowledgments}
This work was partially supported by the German Research Foundation (DFG) within the Collaborative Research Center ``On-The-Fly Computing'' (SFB 901/3 project no.\ 160364472). The authors gratefully acknowledge support of this project through computing time provided by the Paderborn Center for Parallel Computing (PC$^2$).
}

\bibliography{literature}

\clearpage
\newpage
\appendix
\onecolumn

\section*{Supplementary Material to ``Machine Learning for Online Algorithm Selection under Censored Feedback''}

\section{Table of Symbols} \label{sec_appendix_symbols}

The following table contains a list of symbols that are frequently used in the main paper as well as in the following supplementary material. \\ \medskip

\small
\begin{tabular}{l|l}
	%		
	%\textbf{Symbol} & \textbf{Definition} \\
	\hline
	\multicolumn{2}{c}{\textbf{Basics}} \\
	\hline
%	$\mathbb N$ & set of natural numbers $1,2,\dots$ without $0$\\
%	
%	$\mathbb N_0$ & set of natural numbers $1,2,\dots$ including $0$\\
	%
	$1_{\llbracket \cdot \rrbracket}$  & indicator function\\
	$A^\intercal$ & transpose of a matrix $A$ (possibly non-quadratic) \\
	$A^{-1}$ & inverse of an invertible (quadratic) matrix $A$ \\	
	$\|\vec{x}\| $ & $\sqrt{\vec{x}^\intercal \vec{x}}$ for any $\vec{x}\in \R^d$ (Euclidean norm) \\
	$\|\vec{x}\|_A $ & $\sqrt{\vec{x}^\intercal A^{-1} \vec{x}}$ for any $\vec{x}\in \R^d$ and semi-positive definite matrix $A\in \R^{d\times d}$ \\
	$\vec{x}[i]$ &  $i$th component of a vector $\vec{x}\in \R^d$ for $i=1,\ldots,d$ \\
	$I_d$ & identity matrix of size $d \times d$  \\
	$\mathrm{N} \big( \boldsymbol{\mu},  \Sigma \big)$ & multivariate Gaussian distribution with mean vector $ \boldsymbol{\mu} \in \R^d$ and covariance matrix $\Sigma\in \R^{d\times d}$ \\
	$\mathrm{LN}(\mu,\sigma^2)$ & log-normal distribution with parameters $\mu\in \R$ and $\sigma>0$ \\
	$ \Phi_{\mu,\sigma^2}$ & univariate Gaussian distribution with mean $\mu\in \R$ and standard deviation $\sigma>0$  \\
	$X \sim P$ & $X$ is distributed as $P$ \textbf{or} $X$ is sampled from $P$ \\
	$\mathbb{P},\expectation{\cdot}$ & probability, expected value \\
	\hline
	\multicolumn{2}{c}{\textbf{OAS related}} \\
	\hline
	$\algorithms$ & (finite) set of algorithms \\
	$\instances$ & problem instance space \\ 
	$f:\instances \to \R^d$ & feature function ($d \in \mathbb N$ feature dimensionality)\\
	$i_t$ & problem instance at timestep $t$ \\
    $f(i),\vec{f}_{i_t}$ & features of problem instance $i,i_t \in \instances$ \\
	$s: \mathcal{H} \times \instances \rightarrow \algorithms$ & algorithm selector \\
	$a_t$ & algorithm chosen by algorithm selector at time step $t,$ i.e., $a_t=s_t(i_t)$ \\
	$l:\instances \times \algorithms \to \R$  & loss function \\
	$l_{i,a}, l_{t,a}$ & loss of algorithm $a$ (or $a_t$) used on problem instance $i$ (or $i_t$)  \\
	$s^*: \mathcal{H} \times \instances \rightarrow \algorithms$  & oracle or virtual best solver $s^*_t(h_t, i_t) \defeq \arg\min\nolimits_{a \in \algorithms} \mathbb{E}[l(i_t,a) \vert h_t]$ \\
	$T$ & number of overall timesteps (element of $\mathbb N$)\\ 
	$\mathcal{L}(s)$ & averaged cumulative loss of algorithm selector $s$, i.e., $T^{-1} \sum_{t=1}^T l(i_t,a_t)$ if $T$ finite, else $\lim_{T \to \infty } T^{-1} \sum_{t=1}^T l(i_t,a_t)$\\
	\hline
	\multicolumn{2}{c}{\textbf{Runtime related}} \\
	\hline
	$\mathcal{P}:\R \to \R$ & penalty function (for penalizing unsolved instance)\\
	$C$ & cutoff time (element of $\R_+)$ \\
	$\vec{\theta}^*_a $ & unknown weight vector (element of $\R^d$) of algorithm $a$ according to the model in \eqref{defi_exp_runtime_model}  \\
	$m:\instances\times \algorithms \to \R_+$ & runtime function, i.e., $m(i,a)$ is the (noisy) runtime of algorithm $a$ on problem instance $i$ \\
	$m_{i,a}$ & abbreviation for $m(i,a)$ \\
	$y_{i,a}$ & logarithmic (noisy) runtime  of algorithm $a$ on problem instance $i,$ i.e., $\log(m_{i,a})$ \\
	$\tilde y_{i,a}$ & $\min(y_{i,a},\log(C)),$ i.e., possibly imputed logarithmic (noisy) runtime of algorithm $a$ on problem instance $i$ \\
	$\epsilon_{i,a}$ & stochastic noise variable in model \eqref{defi_exp_runtime_model} \\
	\hline
	\multicolumn{2}{c}{\textbf{Bandit related}} \\
	\hline
	$X_{t,a}$ & the design matrix corresponding to algorithm $a$ at timestep $t$ \\
	& (stacking row by row the feature vectors whenever $a$ is chosen) \\
	$A_{t,a}$ & $\lambda I_{d} + X_{t,a}^\intercal X_{t,a},$ i.e., the regularized (through $\lambda>0$) Gram matrix corresponding to algorithm $a$ at timestep $t$ \\
	%
	%$A_{t,a}^{(\lambda)}$ & $\lambda I_{d} + X_{t,a}^\intercal X_{t,a},$ i.e., the regularized (through $\lambda>0$) normal matrix corresponding to algorithm $a$ at timestep $t$ \\
	%
	$\widehat{\vec{\theta}}_{t,a}$ & ridge regression (RR) estimate for $\theta^*_a$ at timestep $t$ for linearized model (cf.\ \eqref{defi_runtime_model}) \\
	& (non-imputed version defined in \eqref{defi_ols_na\"ive}, imputed version defined in \eqref{defi_ols_imputed}) \\
	$ w_{t,a} \, \, :\R^d \to \R_+$ & confidence width (function) of non-imputed RR estimate $\widehat{\vec{\theta}}_{t,a}$ for $\theta^*_a$ at timestep $t$ for linearized model \\
	& (cf.\ text below \eqref{eq_choice_linucb})  \\
	$ w_{t,a}^{(bc)}:\R^d \to \R_+$ & confidence width (function) of imputed RR estimate $\widehat{\vec{\theta}}_{t,a}$ for $\theta^*_a$ at timestep $t$ for linearized model \\
	& (cf.\ Sec. \ref{sec_bclinUCB}) \\
	$ \tilde w_{t,a} \, \, :\R^d \to \R_+$ & randomized confidence width (function) of imputed RR estimate $\widehat{\vec{\theta}}_{t,a}$ for $\theta^*_a$ at timestep $t$ for linearized model \\
	& (cf.\ Sec. \ref{subsec_rand_ucb})  \\
	$N^{(C)}_{a,t}$  & number of timeouts of algorithm $a$ until $t$ \\
	$ E_{C}, E_{C}(\vec{f}_{i_t}^\intercal\vec{\theta}^*_a,\sigma)$ &  conditional expectation of $\mathrm{LN}( \vec{f}_{i_t}^\intercal\vec{\theta}^*_a$ , $\sigma^2)$ under a cutoff $C$ (cf.\ \eqref{eq_cond_exp_lnorm}) \\
	$C_{i_t,a}^{(1)}$ &  $\frac{ \log(C)- \vec{f}_{i_t}^\intercal\vec{\theta}^*_a -\sigma^2 }{\sigma} $ \\
	$C_{i_t,a}^{(2)}$ & $C_{i_t,a}^{(1)} + \sigma $ \\
	\end{tabular}
\normalsize

\section{Pseudo Codes} \label{sec_appendix_pseudo_codes}

In this section, we provide pseudo codes for the algorithms introduced in Sec.\ \ref{sec_lin_bandits} and \ref{sec_exp_par_10_loss_min}.
However, we abstain from defining the pseudo code of the plain Thompson Sampling algorithm in Sec.\ \ref{sec_TS} as it should be clear from its definition. 
Moreover, note that it is straightforward to see that the solution of \eqref{defi_ols_na\"ive} is $ \widehat{\vec{\theta}}_{t,a}  = ( A_{t,a})^{-1} \vec{b}_{t,a},$ where $\vec{b}_{t,a} = X_{t,a}^\intercal \vec y_{t,a}$ and $\vec y_{t,a}$ is the (column) vector storing all observed non-censored log-runtimes until $t$ whenever $a$ has been chosen.
The solution of \eqref{defi_ols_imputed} is $ \widehat{\vec{\theta}}_{t,a}  = ( A_{t,a})^{-1} \vec{b}_{t,a},$ where $\vec{b}_{t,a} = X_{t,a}^\intercal \vec{\tilde{y}}_{t,a}$ and $\vec{\tilde{y}}_{t,a}$  is the (column) vector storing all observed and possibly imputed log-runtimes until $t$ whenever $a$ has been chosen.
Further, both $A_{t,a}$ and $\vec{b}_{t,a}$ can be updated in an iterative fashion without actually storing all seen samples: If algorithm $a$ is chosen at timestep $t,$ then 
$$ A_{t+1,a} =  \begin{cases}
    		    A_{t,a} + \vec{f}_{i_t} \vec{f}_{i_t}^\intercal , &  \\
    		     A_{t,a} + 1_{\llbracket m(i_t,a_t) \leq C \rrbracket} \vec{f}_{i_t} \vec{f}_{i_t}^\intercal, & \\
    		\end{cases}
    		\quad \mbox{and} \quad 
    		\vec{b}_{t+1,a} = \begin{cases}
    		     \vec{b}_{t,a} + \tilde y_{i_t,a} \vec{f}_{i_t}, & \mbox{for \eqref{defi_ols_imputed}}, \\
    		     \vec{b}_{t,a} + 1_{\llbracket m(i_t,a_t) \leq C \rrbracket} y_{i_t,a} \vec{f}_{i_t}, & \mbox{for \eqref{defi_ols_na\"ive}}. \\
    		\end{cases} $$
As the updates of the matrices $A_{t+1,a}$ are via a rank-one update, one can use the well-known Sherman-Morrison formula to compute their inverse in a sequential manner as well (similarly for $A_{t+1,a}$ based on \eqref{defi_ols_na\"ive}):
$$  (A_{t+1,a} )^{-1}  =  (A_{t,a} + \vec{f}_{i_t} \vec{f}_{i_t}^\intercal )^{-1} = A_{t,a}^{-1} - \frac{A_{t,a}^{-1}  \vec{f}_{i_t} \vec{f}_{i_t}^\intercal A_{t,a}^{-1} }{1+\vec{f}_{i_t}^\intercal A_{t,a}^{-1} \vec{f}_{i_t}}.$$
\subsection{LinUCB} 

Alg.\ \ref{alg:lin_ucb} provides the pseudo code for the LinUCB variants introduced in Sec. \ref{sec_lin_bandits}, that is, BlindUCB, BClinUCB and RandUCB.
For sake of convenience, we recall the role of each input parameter in the following.
\begin{itemize}
    \item $\lambda > 0$  is a regularization parameter due to the considered ridge regression.  
    \item $\alpha>0$ essentially controls the degree of exploration as a multiplicative term of the confidence width (see \eqref{eq_choice_linucb}). The higher (lower) it is chosen, the more (less) exploration is conducted. 
    \item $C>0$ is the cutoff time and depends on the considered problem scenario (specified by the ASlib library).
    \item $\tilde \sigma>0$ is (only) used for RandUCB in order to specify the random sample's variance within the confidence width (see Sec.\ \ref{subsec_rand_ucb}). The higher (lower) its choice, the larger (smaller) the effective exploration term, i.e., $|r|\cdot w_{t,a}^{(bc)}( \vec{x}_t).$
\end{itemize}

\begin{algorithm}[ht]
    \caption{$\texttt{LinUCB}$ variants}
    \label{alg:lin_ucb}
	\begin{algorithmic}[1]
		\STATE {\textit{Input parameters} $\lambda\geq 0, \alpha, C>0$ half-normal parameter $\tilde \sigma$ for RandUCB }
		\STATE {\textit{Initialization}}
		\FOR{all $a \in \algorithms$}
		    \STATE 	$A_{t,a} = \lambda I_{d\times d}, \vec{b}_{t,a}  = 0_{d\times 1}, \widehat{\vec{\theta}}_{t,a}  = 0_{d \times 1},$  $  \widehat{l}_{t,a}  = 0,$ $N^{(C)}_{t,a}=0$ 
		\ENDFOR
		\STATE {\textit{Main Algorithm}}
		\FOR{time steps $t = 1 \dots, T$}
    		\STATE Observe instance $i_t$ and its features $\vec{x}_t = f(i_t)\in \R^d$
    		\IF {$t\leq |\algorithms|$}
    		    \STATE  Take algorithm $a_t \in \algorithms$ and  obtain   $y_t = \min(\log(m(i_t,a_t)),\log(C))$
    		    %\STATE  Take algorithm $a_t \in \algorithms$ and  obtain  $y_t = \min(m(i_t,a_t),C)$
    		\ELSE
    		    \FOR{all $a \in \algorithms$}
    		        \STATE  $ \widehat{\vec{\theta}}_{t,a}  \leftarrow ( A_{t,a})^{-1} \vec{b}_{t,a} $
    		        \STATE  $ \widehat{l}_{t,a}   \leftarrow \begin{cases}
    		            \vec{x}_t^\intercal \widehat{\vec{\theta}}_{t,a} - \alpha \cdot w_{t,a}( \vec{x}_t), & \mbox{BlindUCB} \\  
    		              \vec{x}_t^\intercal \widehat{\vec{\theta}}_{t,a} - \alpha \cdot w_{t,a}^{(bc)}( \vec{x}_t), & \mbox{BClinUCB} \\  
    		                \vec{x}_t^\intercal \widehat{\vec{\theta}}_{t,a} - \alpha \cdot |r|\cdot w_{t,a}^{(bc)}( \vec{x}_t), \quad r\sim  N(0,\tilde \sigma^2)  & \mbox{RandUCB} 
    		        \end{cases} $
    		    \ENDFOR
    		    \STATE Take algorithm $a_t = \arg\min_{a \in \algorithms} \hat l_{t,a}$ and obtain  $y_t = \min(\log(m(i_t,a_t)),\log(C))$
    		    %\STATE Take algorithm $a_t = \arg\min_{a \in \algorithms} \hat l_{t,a}$ and obtain  $y_t = \min(m(i_t,a_t),C)$
    		\ENDIF	
    		\STATE \textit{Updates:}
    		\STATE $N^{(C)}_{t,a} \leftarrow N^{(C)}_{t,a} + 1_{\llbracket m(i_t,a_t) > C \rrbracket}$
    		\STATE $ A_{t,a}  \leftarrow   \begin{cases}
    		    A_{t,a} + \vec{x}_t \vec{x}_t^\intercal , &  \\
    		     A_{t,a} + 1_{\llbracket m(i_t,a_t) \leq C \rrbracket} \vec{x}_t \vec{x}_t^\intercal, &  \\
    		\end{cases} \quad \vec{b}_{t,a} \leftarrow \begin{cases}
    		     \vec{b}_{t,a} + y_t \vec{x}_t, & \mbox{(BClinUCB,RandUCB)} \\
    		     \vec{b}_{t,a} + 1_{\llbracket m(i_t,a_t) \leq C \rrbracket} y_t \vec{x}_t, & \mbox{(BlindUCB)} \\
    		\end{cases} $
		\ENDFOR
	\end{algorithmic}
\end{algorithm}

\subsection{LinUCB Revisited} \label{sec_appendix_linucb_revi}

%
%Note that the term $C_{i_t,a}$ occurring in \eqref{eq:expected_loss_3} needs in fact to be replaced by an optimistic estimate as well.
%
%However, as the term is plugged into both the (standard) Gaussian density function and the Gaussian cumulative distribution function one would need to consider a case distinction in order to guarantee the optimistic view, i.e., $\nicefrac{\phi_{0,1}( \hat C_{i_t,a} )}{\Phi_{0,1}( \hat C_{i_t,a} )} \leq \nicefrac{\phi_{0,1}(  C_{i_t,a} )}{\Phi_{0,1}(  C_{i_t,a} )} $ should hold with high probability.
%
%Thus, for sake of convenience, we just use the ``neutral" RR estimate, respectively.

Alg.\ \ref{alg:lin_ucb_rev} provides the pseudo code for the LinUCB variants introduced in Sec. \ref{sec_lin_bandits} adapted to the refined loss decomposition in \eqref{eq:expected_loss_3} by incorporating the selections strategy in \eqref{eq_choice_linucb_revisited_2}.
Compared to Alg.\ \ref{alg:lin_ucb} there are two additional parameters: 
\begin{itemize}
    \item $\sigma>0$ which ideally should correspond to the standard deviation of the noise variables in the model assumption \eqref{defi_exp_runtime_model}.
    \item $\mathcal{P}:\R \to \R$ the penalty function for penalizing unsolved problem instances (we used $\mathcal{P}(z) = 10z$ corresponding to the $PAR10$ score). 
\end{itemize}

\begin{algorithm}[ht]
    \caption{$\texttt{LinUCB}$ variants based on \eqref{eq:expected_loss_3} (\_rev) versions}
    \label{alg:lin_ucb_rev}
	\begin{algorithmic}[1]
		\STATE {\textit{Input parameters} $\lambda\geq 0, \sigma>0,\alpha>0, C>0,\mathcal{P}:\R \to \R,$ half-normal parameter $\tilde \sigma$ for RandUCB }
		\STATE {\textit{Initialization}}
		\FOR{all $a \in \algorithms$}
		    \STATE 	$A_{t,a} = \lambda  I_{d\times d}, \vec{b}_{t,a}  = 0_{d\times 1}, \widehat{\vec{\theta}}_{t,a}  = 0_{d \times 1},$  $  \widehat{l}_{t,a}  = 0,$ $N^{(C)}_{t,a}=0$ 
		\ENDFOR
		\STATE {\textit{Main Algorithm}}
		\FOR{time steps $t = 1 \dots, T$}
    		\STATE Observe instance $i_t$ and its features $\vec{x}_t = f(i_t)\in \R^d$
    		\IF {$t\leq |\algorithms|$}
    		    \STATE  Take algorithm $a_t \in \algorithms$ and  obtain   $y_t = \min(\log(m(i_t,a_t)),\log(C))$
    		    %\STATE  Take algorithm $a_t \in \algorithms$ and  obtain  $y_t = \min(m(i_t,a_t),C)$
    		\ELSE
    		    \FOR{all $a \in \algorithms$}
    		        \STATE  $ \widehat{\vec{\theta}}_{t,a}  \leftarrow ( A_{t,a})^{-1} \vec{b}_{t,a} $
    		        \STATE $r \sim \mathrm{N}(0,\tilde \sigma^2)$ \qquad (only for RandUCB)
    		        \STATE  $ p_{t,a}  \leftarrow \begin{cases}
    		            \vec{x}_t^\intercal \widehat{\vec{\theta}}_{t,a} + \alpha \cdot w_{t,a}( \vec{x}_t), \\  
    		              \vec{x}_t^\intercal \widehat{\vec{\theta}}_{t,a} + \alpha \cdot w_{t,a}^{(bc)}( \vec{x}_t), \\  
    		                \vec{x}_t^\intercal \widehat{\vec{\theta}}_{t,a} + \alpha \cdot |r|\cdot w_{t,a}^{(bc)}( \vec{x}_t),  
    		        \end{cases}  
    		        o_{t,a}   \leftarrow \begin{cases}
    		            \vec{x}_t^\intercal \widehat{\vec{\theta}}_{t,a} - \alpha \cdot w_{t,a}( \vec{x}_t), & \mbox{BlindUCB} \\  
    		              \vec{x}_t^\intercal \widehat{\vec{\theta}}_{t,a} - \alpha \cdot w_{t,a}^{(bc)}( \vec{x}_t), & \mbox{BClinUCB} \\  
    		                \vec{x}_t^\intercal \widehat{\vec{\theta}}_{t,a} - \alpha \cdot |r|\cdot w_{t,a}^{(bc)}( \vec{x}_t),   & \mbox{RandUCB} 
    		        \end{cases} $
    		        \STATE $
\hat C_{i_t,a}^{(1,p)} \leftarrow \frac{ \log(C)- p_{t,a} -\sigma^2 }{\sigma}, \quad 
\hat C_{i_t,a}^{(1,o)} \leftarrow \frac{ \log(C)- o_{t,a} -\sigma^2 }{\sigma}, \quad 
\hat C_{i_t,a}^{(2,o)} \leftarrow \frac{ \log(C)- o_{t,a} }{\sigma},  \quad \mbox{and} \quad 
\hat C_{i_t,a}^{(2,p)} \leftarrow \frac{ \log(C)- p_{t,a} }{\sigma}. $
    		        %
    		        %$\hat C_{i_t,a}^{(1)} \leftarrow \frac{( \log(C)- o_{t,a} -\sigma^2 )}{\sigma} $ \quad  $\hat C_{i_t,a}^{(2)} \leftarrow \frac{( \log(C)- p_{t,a} )}{\sigma}. $
    		        \STATE $  \widehat{l}_{t,a}  \leftarrow \exp( o_{t,a} + \nicefrac{\sigma^2}{2}) \cdot \frac{\Phi_{0,1}( \hat C_{i_t,a}^{(1,p)} )}{\Phi_{0,1}( \hat C_{i_t,a}^{(2,o)})}
        + \big(1-\Phi_{p_{t,a},\sigma}(\log(C)) \big)  
        \cdot \left(\mathcal{P}(C) - \exp( p_{t,a} + \nicefrac{\sigma^2}{2})  \cdot \frac{\Phi_{0,1}( \hat C_{i_t,a}^{(1,o)} )}{\Phi_{0,1}( \hat C_{i_t,a}^{(2,p)})} \right)$
    		    \ENDFOR
    		    \STATE Take algorithm $a_t = \arg\min_{a \in \algorithms} \hat l_{t,a}$ and obtain  $y_t = \min(\log(m(i_t,a_t)),\log(C))$
    		    %\STATE Take algorithm $a_t = \arg\min_{a \in \algorithms} \hat l_{t,a}$ and obtain  $y_t = \min(m(i_t,a_t),C)$
    		\ENDIF	
    		\STATE \textit{Updates:} Same as lines 19--20 of Alg.\ \ref{alg:lin_ucb}
    		%    		\STATE $N^{(C)}_{t,a} \leftarrow N^{(C)}_{t,a} + 1_{\llbracket m(i_t,a_t) > C \rrbracket}$
%    		\STATE $ A_{t,a}  \leftarrow   \begin{cases}
%    		    A_{t,a} + \vec{x}_t \vec{x}_t^\intercal , & \mbox{(BClinUCB,RandUCB)} \\
%    		     A_{t,a} + 1_{\llbracket m(i_t,a_t) \leq C \rrbracket} \vec{x}_t \vec{x}_t^\intercal, & \mbox{(BlindUCB)} \\
%    		\end{cases}$
%    		\STATE  $\vec{b}_{t,a} \leftarrow \begin{cases}
%    		     \vec{b}_{t,a} + y_t \vec{x}_t, & \mbox{(BClinUCB,RandUCB)} \\
%    		     \vec{b}_{t,a} + 1_{\llbracket m(i_t,a_t) \leq C \rrbracket} y_t \vec{x}_t, & \mbox{(BlindUCB)} \\
%    		\end{cases} $
		\ENDFOR
	\end{algorithmic}
\end{algorithm}

\clearpage

\subsection{Thompson Sampling Revisited} 

Alg.\ \ref{alg:thompson} provides the pseudo code for the revisited Thompson algorithm and its variant inspired by the Buckley-James estimate introduced in Sec. \ref{sec:thompson_rev}.
As before, we discuss the role each input parameter plays in the behavior of the algorithm:
\begin{itemize}    
    \item the role of $\lambda, \mathcal{P},C$ is the same as in Alg.\ \ref{alg:lin_ucb_rev}, respectively.  
    \item $\sigma>0$ specifies the magnitude of the posterior distribution's variance and is therefore slightly different to $\sigma$ in Alg.\ \ref{alg:lin_ucb_rev}.
    \item $\mathrm{BJ}$ specifies whether the Buckley-James inspired imputation strategy    described at the end of Sec.\ \ref{sec:thompson_rev}  ($\mathrm{BJ} = \mathrm{TRUE}$) or the n\"aive imputation strategy  ($\mathrm{BJ} = \mathrm{FALSE}$) should be deployed.
\end{itemize}
\begin{algorithm}[ht]
    \caption{$\texttt{(bj\_)Thompson\_rev}$}
    \label{alg:thompson}
	\begin{algorithmic}[1]
		\STATE {\textit{Input parameters} $\sigma>0,\lambda\geq 0,\mathcal{P}:\R \to \R,C,$  $\mathrm{BJ}\in \{\mathrm{TRUE,FALSE}\}$   }
		\STATE {\textit{Initialization}}
		\FOR{all $a \in \algorithms$}
		    \STATE 	$A_{t,a}  = \lambda \cdot I_{d\times d}, \vec{b}_{t,a}  = 0_{d\times 1}, \widehat{\vec{\theta}}_{t,a}  = 0_{d \times 1}, \tilde{\sigma}^2_{t,a} \leftarrow 0$ and $  \widehat{l}_{t,a}  = 0$
		\ENDFOR
		\STATE {\textit{Main Algorithm}}
		\FOR{time steps $t = 1 \dots, T$}
    		\STATE Observe instance $i_t$ and its features $\vec{x}_t = f(i_t)\in \R^d$
    		\IF {$t\leq |\algorithms|$}
    		    \STATE  Take algorithm $a_t \in \algorithms$ and  obtain   $y_t = \min(\log(m(i_t,a_t)),\log(C))$
    		    %obtain  $y_t = \min(m(i_t,a_t),C)$
    		\ELSE
    		    \FOR{all $a \in \algorithms$}
    		        \STATE  $ \widehat{\vec{\theta}}_{t,a}  \leftarrow ( A_{t,a})^{-1} \vec{b}_{t,a} $ \quad $\tilde{\sigma}^2_{t,a} \leftarrow \sigma \| \vec{f}_{i_t} \|_{A_{t,a}}^2$
    		        \STATE Sample  $  \widetilde{\vec{\theta}}_{a}  \sim   \mathrm{N} \big( \widehat{\vec{\theta}}_{t,a} , \sigma (A_{t,a})^{-1} \big) $ %(if $ \vec{x}_t^\intercal \widetilde{\vec{\theta}}_{a} \leq 0$ sample again)
    		        %\STATE  $ \widehat{l}_{t,a}   \leftarrow \mbox{RHS of \eqref{eq_choice_tompcen_2}}$
    		        \STATE  $ \widehat{l}_{t,a}   \leftarrow 
    		        \big(1-\Phi_{\vec{f}_{i_t}^\intercal\widetilde{\vec{\theta}}_a,\tilde{\sigma}^2_{t,a}}(\log(C)) \big) \cdot \big(\mathcal{P}(C) - E_{C}(\vec{f}_{i_t}^\intercal\widetilde{\vec{\theta}}_a,\tilde{\sigma}_{t,a}) \big) 
                    + E_{C}(\vec{f}_{i_t}^\intercal\widetilde{\vec{\theta}}_a,\tilde{\sigma}_{t,a}) $ \quad (RHS of \eqref{eq_choice_tompcen_2})
    		        %\vec{x}_t^\intercal \widetilde{\vec{\theta}}_{a}  + (\mathcal{P}(C) - \vec{x}_t^\intercal \widetilde{\vec{\theta}}_{a})  \left(1-  \Phi_{ \vec{x}_t^\intercal \widetilde{\vec{\theta}}_{a} , \vec{x}_t^\intercal \sigma (A_{t,a})^{-1} \vec{x}_t  } (C)  \right)    $
    		    \ENDFOR
    		    \STATE Take algorithm $a_t = \arg\min_{a \in \algorithms} \hat l_{t,a}$ and obtain   $y_t = \min(\log(m(i_t,a_t)),\log(C))$
    		    %and obtain  $y_t = \min(m(i_t,a_t),C)$
    		\ENDIF	
    		\STATE \textit{Updates:}
    		\IF {$y_t = \log(C)$ and $\mathrm{BJ} = \mathrm{TRUE}$}
    		    \STATE Sample  $ \widecheck{\vec{\theta}}_{a}  \sim   \mathrm{N} \big( \widehat{\vec{\theta}}_{t,a} , \sigma (A_{t,a})^{-1} \big) $ (if $ \exp(\vec{x}_t^\intercal \widecheck{\vec{\theta}}_{a}) \leq C$ sample again)
    		    \STATE $y_t \gets \log(\vec{x}_t^\intercal \widecheck{\vec{\theta}}_{a})$
    		\ENDIF
    		\STATE $ A_{t,a}  \leftarrow   A_{t,a} + \vec{x}_t \vec{x}_t^\intercal $
    		\STATE  $\vec{b}_{t,a} \leftarrow \vec{b}_{t,a} + y_t \vec{x}_t $
		\ENDFOR
	\end{algorithmic}
\end{algorithm}

\section{Deriving the Bias-corrected Confidence Bounds} \label{sec_appendix_bc_cb}

This section is concerned with giving the details for deriving the bias-corrected confidence bounds in  Sec. \ref{sec_bclinUCB}.
In particular, we focus on the solution of \eqref{defi_ols_imputed} which is given by $ \widehat{\vec{\theta}}_{t,a}  = ( A_{t,a})^{-1} \vec{b}_{t,a},$ where $\vec{b}_{t,a} = X_{t,a}^\intercal \vec{\tilde{y}}_{t,a}$ and $\vec{\tilde{y}}_{t,a}$  is the (column) vector storing all observed and possibly imputed log-runtimes until $t$ whenever $a$ has been chosen.
We follow the approach by \cite{chuLRS11_linucb} and analyze the deviation of $ \vec{x}_t^\intercal \hat \theta_{t,a}$ and $ \vec{x}_t^\intercal  \vec{\theta}_{a}^*,$ where we write for sake of convenience $\vec{x}_t=f(i_t).$
First, note that
\allowdisplaybreaks
\begin{equation} \label{first_aux_ineq_cb}
\begin{split}
	\vec{x}_t^\intercal \widehat{\vec{\theta}}_{t,a} - \vec{x}_t^\intercal \vec{\theta}_{a}^*
	&= \vec{x}_t^\intercal  A_{t,a}^{-1} \vec{b}_{t,a} 
	- \vec{x}_t^\intercal A_{t,a}^{-1} A_{t,a} \vec{\theta}_{a}^* \\
	&= \vec{x}_t^\intercal  A_{t,a}^{-1} X_{t,a}^\intercal \vec{\tilde{y}}_{t,a}
	- \vec{x}_t^\intercal A_{t,a}^{-1} \Big(\lambda I_{d} + X_{t,a}^\intercal X_{t,a}\Big) \vec{\theta}_{a}^* \\
	&= \vec{x}_t^\intercal  A_{t,a}^{-1} X_{t,a}^\intercal \Big(  \vec{\tilde{y}}_{t,a}  - X_{t,a} \vec{\theta}_{a}^* \Big)  
	 - \lambda \vec{x}_t^\intercal  A_{t,a}^{-1}  \vec{\theta}_{a}^*  \\
	&= \vec{z}_{t,a} (   \vec{\tilde{y}}_{t,a}  - X_{t,a} \vec{\theta}_{a}^* )  
	- \lambda \vec{x}_t^\intercal  A_{t,a}^{-1}  \vec{\theta}_{a}^* \\
    &=: (A1) - (A2),
\end{split}
\end{equation}
where we abbreviated  $\vec{x}_t^\intercal  A_{t,a}^{-1} X_{t,a}^\intercal$ by $\vec{z}_{t,a},$ which is a row vector with $N_a(t)$ components, i.e., $N_a(t)$ is the total  number of times $a$ has been chosen till $t$.
We can write the term $(A1)$ as
\begin{align*}
	\sum_{j:  m_{i_j,a}\leq C }  \vec{z}_{t,a}[j] \Big(  \log(m(i_j,a))  - \vec{x}_{i_j}^\intercal \vec{\theta}_{a}^* \Big) + 	\sum_{j:  m_{i_j,a}>C }  \vec{z}_{t,a}[s] \Big( \log(C)  - \vec{x}_{i_j}^\intercal \vec{\theta}_{a}^*\Big),
\end{align*}
i.e., we split it into the sum over all uncensored observations and the sum of all censored observations.
This is necessary as we want to apply Azuma's inequality and this can only be done for the terms in the first sum, since $\expectation{ \log(m(i_j,a))} = \vec{x}_{i_j}^\intercal \theta_{a}^*$ due to our assumptions made in Sec.\ \ref{sec_runtime_model}.
By applying Azuma's inequality we get for any $\tilde \alpha>0$ that
\begin{align*}
	\mathbb{P}\Big(	\Big| \sum\nolimits_{j:  m_{i_j,a}\leq C }  \vec{z}_{t,a}[j] \big(  \log(m(i_j,a))  - \vec{x}_{i_j}^\intercal \vec{\theta}_{a}^* \big) \Big| > \tilde \alpha \,	w_{t,a}(\vec{x}_t)	\Big)
	\leq 2 \exp\left( - 	\frac{2 \, \tilde \alpha^2 \, 	w_{t,a}^2(\vec{x}_t)}{\|\vec{z}_{t,a}\|^2}		\right),
\end{align*}
where $w_{t,a}$ is as in Sec.\ \ref{sec_lin_bandits} defined by $	w_{t,a}(\vec{x}_t) = \| \vec{x}_t \|_{A_{t,a}}$.
Note that 
\begin{align*}
    \|\vec{z}_{t,a}\|^2 
    &= \vec{x}_t^\intercal  A_{t,a}^{-1} X_{t,a}^T  X_{t,a}   A_{t,a}^{-1} \vec{x}_t 
    \leq \vec{x}_t^\intercal  A_{t,a}^{-1}  (  \lambda I_d +  X_{t,a}^T  X_{t,a}) A_{t,a}^{-1} \vec{x}_t 
    = \vec{x}_t^\intercal  A_{t,a}^{-1}  \vec{x}_t =  w_{t,a}^2(\vec{x}_t),
\end{align*}
since $ \lambda A_{t,a}^{-1}   I_d  A_{t,a}^{-1} $ is semi-positive definite.
Thus, by choosing $\tilde \alpha$ appropriately the latter probability is small.
In particular, we obtain that
\begin{align} \label{sec_aux_ineq_cb}
    \begin{split}
         \Big| \sum\nolimits_{j:  m_{i_j,a}\leq C }  \vec{z}_{t,a}[j] \big(  \log(m(i_j,a))  - \vec{x}_{i_j}^\intercal \vec{\theta}_{a}^* \big) \Big| \leq \tilde \alpha \,	w_{t,a}(\vec{x}_t)	
    \end{split}
\end{align}
holds with high probability by choosing $\tilde \alpha>0$ appropriately.
Now, we bound the (absolute values of the) censored sum.
Using the Cauchy-Schwarz inequality we can infer
\begin{align} \label{third_aux_ineq_cb}
    \begin{split}
	\Big|\sum\nolimits_{j:  m_{i_j,a}>C }  \vec{z}_{t,a}[s] \big( \log(C)  - \vec{x}_{i_j}^\intercal \vec{\theta}_{a}^*\big) \Big|
	&\leq \|\vec{z}_{t,a}\| \, \Big\|	\sum\nolimits_{j:  m_{i_j,a}>C } \big(  \log(C)  -  \vec{x}_{i_j}^\intercal \vec{\theta}_{a}^* \big)	\Big\| \\
	&\leq w_{t,a}(\vec{x}_t) \, \Big\|	\sum\nolimits_{j:  m_{i_j,a}>C } \big(  \log(C)  -  \vec{x}_{i_j}^\intercal \vec{\theta}_{a}^* \big)	\Big\| \\
	&\leq 2 \, w_{t,a}(\vec{x}_t) \, \sqrt{N_{a}^{(C)}(t)} \, \log(C),
    \end{split}
\end{align}
where we used for the last inequality that $ \vec{x}_{i_j}^\intercal \vec{\theta}_{a}^* = \vec{f}_{i_j}^\intercal \vec{\theta}^*_a \leq \log(C)$ holds by our assumptions made in Sec.\ \ref{sec_runtime_model}.
Finally, the absolute value of the term $(A2)$ can be bounded as follows 
\begin{align}\label{fourth_aux_ineq_cb}
    \begin{split}
         |(A2)| 
         &\leq \lambda \| \vec{\theta}_a^* \| \| \vec{x}_t^\intercal A_{t,a}^{-1} \| 
         \leq \lambda \| \vec{\theta}_a^* \| w_{t,a}(\vec{x}_t).
    \end{split}
\end{align}
Combining \eqref{first_aux_ineq_cb}--\eqref{fourth_aux_ineq_cb}, we have with high probability (depending on the choice of $\tilde \alpha$) that
\begin{align*}
	|\vec{x}_t^\intercal \vec{\widehat{\theta}}_{t,a} - \vec{x}_t^\intercal \vec{\theta}_{a}^*| \leq |(A1) | + |(A2)| 
	\leq w_{t,a}(\vec{x}_t) \, \Big(\tilde \alpha + \lambda \| \vec{\theta}_a^* \| + 2 \log(C) \sqrt{N_{a}^{(C)}(t)}  \Big).
\end{align*}
Thus, for some appropriate $\alpha>0$ we have with high probability 
\begin{align*}
|\vec{x}_t^\intercal \vec{\widehat{\theta}}_{t,a} - \vec{x}_t^\intercal \vec{\theta}_{a}^*| 
	\leq \alpha w_{t,a}^{(bc)}(\vec{x}_t).
\end{align*}

\paragraph{Bias issue.} Note that from the definition of $w_{t,a}^{(bc)}(\vec{x}_t)$ we can see the bias issue of $\vec{\widehat{\theta}}_{t,a}$ due to the imputation employed. 
The term $w_{t,a}(\vec{x}_t)$ is asymptotically tending against zero with the rate $\approx \sqrt{1/N_a(t)}.$
However, $w_{t,a}^{(bc)}(\vec{x}_t)$ does not tend to zero asymptotically for $t \to \infty$, if $\sqrt{N_{a}^{(C)}(t)/N_a(t)} \to C'$ for some constant $C'>0$.
The latter condition in turn seems to be satisfied if for any $t$ it holds that $\mathbb{P}(	m_{i_t,a} > C)>\epsilon>0,$ i.e., the probability of observing a censored runtime does not vanish in the course of time.

\section{Deriving the Refined Expected Loss Representation} \label{sec_appendix_expec_loss}

In this section, we provide the details for showing \eqref{eq:expected_loss_3}.
For this purpose, we need the following lemma showing the explicit form of the conditional expectation of a log-normal distribution under a certain cutoff.

\begin{Lem}\label{lemma_cond_exp_log_norm}
Let $Y\sim \mathrm{LN}(\mu,\sigma^2),$ i.e., a log-normally distributed random variable with parameters $\mu\in \R$ and $\sigma>0.$
Then, for any $C>0$ it holds that
\begin{equation} \label{eq_cond_exp_lnorm}
    \expectation{Y \vert Y \leq C} 
    = \exp(\mu+\nicefrac{\sigma^2}{2} ) 
    \cdot \frac{\Phi_{0,1}\left(\frac{\log(C)-\mu-\sigma^2}{\sigma}\right)}{\Phi_{0,1}\left(\frac{\log(C)-\mu}{\sigma}   \right)},  
\end{equation}
where $\Phi_{0,1}(\cdot)$ is the cumulative distribution function of a standard normal distribution.
\end{Lem}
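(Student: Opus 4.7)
The plan is a direct computation using the defining formula $\mathbb{E}[Y \mid Y \leq C] = \mathbb{E}[Y \mathbf{1}_{\{Y \leq C\}}] / \mathbb{P}(Y \leq C)$ and the standard trick of completing the square in the Gaussian density.

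First, I would write $Y = \exp(X)$ with $X \sim \mathrm{N}(\mu, \sigma^2)$, so that $\{Y \leq C\} = \{X \leq \log C\}$. The denominator is then immediate: since $(X-\mu)/\sigma$ is standard normal,
\begin{equation*}
    \mathbb{P}(Y \leq C) = \mathbb{P}(X \leq \log C) = \Phi_{0,1}\!\left(\tfrac{\log C - \mu}{\sigma}\right).
\end{equation*}

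For the numerator, I would write
\begin{equation*}
    \mathbb{E}[Y \mathbf{1}_{\{Y \leq C\}}] = \int_{-\infty}^{\log C} e^{x}\, \frac{1}{\sqrt{2\pi}\,\sigma}\exp\!\left(-\tfrac{(x-\mu)^2}{2\sigma^2}\right) dx,
\end{equation*}
and complete the square in the exponent. Concretely, $x - \tfrac{(x-\mu)^2}{2\sigma^2} = -\tfrac{(x - \mu - \sigma^2)^2}{2\sigma^2} + \mu + \tfrac{\sigma^2}{2}$, so the constant factor $\exp(\mu + \sigma^2/2)$ comes outside the integral and the remaining integrand is the $\mathrm{N}(\mu+\sigma^2, \sigma^2)$ density. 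Substituting $u = (x - \mu - \sigma^2)/\sigma$ yields
\begin{equation*}
    \mathbb{E}[Y \mathbf{1}_{\{Y \leq C\}}] = \exp\!\left(\mu + \tfrac{\sigma^2}{2}\right)\, \Phi_{0,1}\!\left(\tfrac{\log C - \mu - \sigma^2}{\sigma}\right).
\end{equation*}

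Dividing numerator by denominator gives the claimed identity. There is no real obstacle here; the only point that requires attention is keeping track of the shift in the mean of the Gaussian caused by the multiplicative factor $e^x$, which is precisely the mechanism that produces the $\sigma^2$ shift in the numerator's $\Phi_{0,1}$-argument and the overall $\exp(\mu + \sigma^2/2)$ factor. No further assumptions beyond $C > 0$ and $\sigma > 0$ are needed for the integrals to be finite.
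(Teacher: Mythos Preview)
Your proof is correct and follows essentially the same route as the paper's: both compute the truncated expectation by reducing to a standard normal integral via completing the square (equivalently, the identity $e^{\sigma z}e^{-z^2/2}=e^{\sigma^2/2}e^{-(z-\sigma)^2/2}$). The only cosmetic difference is that you work on the $X=\log Y$ scale from the outset, whereas the paper starts with the log-normal density on the $Y$ scale and then substitutes $z=(\log x-\mu)/\sigma$.
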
 
\begin{proof}
\newcommand{\dx}{\mathrm{d}x}
\newcommand{\dz}{\mathrm{d}z}
The density function of $Y$ is given by
$$ f(x)=\frac{\exp\left( \frac{-(\log(x)-\mu)^2}{2\sigma^2} \right)}{x\sigma \sqrt{2\pi}}, \quad x>0.   $$
Thus, $f(x) = \frac{\phi_{\mu,\sigma}(\log(x))}{x},$ where  $\phi_{\mu,\sigma}(\cdot)$ is the density function of a normal distribution with mean $\mu$ and standard deviation $\sigma.$
Next, note that the density function of $Y$ conditioned on $Y\leq C$ is $f(x \vert x\leq C) = \frac{f(x)}{F(C)},$ where $F(\cdot)$ is the cumulative distribution function of $Y$ and given by  $ F(x) = \Phi_{0,1}\left( \frac{\log(x)-\mu}{\sigma}   \right)$  for any $x\in \R.$
With this,
\begin{align*}
    \expectation{Y \vert Y \leq C} 
    &= \int_{0}^C x f(x \vert x \leq C) \, \dx \\
    &= \frac{1}{\Phi_{0,1}\left( \frac{\log(C)-\mu}{\sigma}   \right)} \int_{0}^C  \phi_{\mu,\sigma}(\log(x))  \, \dx \\
    &= \frac{\exp(\mu+\nicefrac{\sigma^2}{2} )}{\Phi_{0,1}\left( \frac{\log(C)-\mu}{\sigma}   \right)} \int_{-\infty}^{  \frac{\log(C)-\mu}{\sigma} }  \phi_{0,1}(z-\sigma)  \, \dz \\
    &= \exp(\mu+\nicefrac{\sigma^2}{2} )\frac{\Phi_{0,1}\left(\frac{\log(C)-\mu-\sigma^2}{\sigma}\right)}{\Phi_{0,1}\left( \frac{\log(C)-\mu}{\sigma}   \right)}.
\end{align*}
Here, we used for the third inequality the substitution $z=\frac{\log(x)-\mu}{\sigma},$ so that
$\exp(\sigma z+\mu)\sigma \, \dz = \dx$ and 
$$\exp\left( -\frac{(z-\sigma)^2}{2} \right) \exp\left( \frac{\sigma^2}{2} \right) = \exp\left( -\frac{z^2}{2}  \right)\exp(\sigma z). $$
\end{proof}
\noindent Recalling the modelling assumption on the runtimes made in \eqref{defi_exp_runtime_model} as well as assumption (\textbf{A2}), we obtain that $ m_{i_t,a} \sim \mathrm{LN}(\vec{f}_{i_t}^\intercal\vec{\theta}^*_a,\sigma^2).$
Using Lemma \ref{lemma_cond_exp_log_norm} and that $C_{i_t,a}^{(1)} = \nicefrac{( \log(C)- \vec{f}_{i_t}^\intercal\vec{\theta}^*_a -\sigma^2 )}{\sigma} ,$  $C_{i_t,a}^{(2)} = \nicefrac{( \log(C)- \vec{f}_{i_t}^\intercal\vec{\theta}^*_a )}{\sigma},$ we can derive \eqref{eq:expected_loss_3} as follows:
\begin{align*}
%\label{eq:expected_loss_2}
\begin{split}
    \expectation{ l_{t,a} \vert  \vec{f}_{i_t} } 
    &= \expectation{l_{t,a} \vert m_{i_t,a} \leq C , \vec{f}_{i_t}} \cdot \mathbb{P}(m_{i_t,a} \leq C \vert \vec{f}_{i_t}) 
    + \expectation{l_{t,a} \vert m_{i_t,a} > C , \vec{f}_{i_t}} \cdot \mathbb{P}(m_{i_t,a} > C \vert \vec{f}_{i_t}) \\ 
    &= \exp\left( \vec{f}_{i_t}^\intercal\vec{\theta}^*_a + \frac{\sigma^2}{2}\right) \cdot \left(   \frac{\Phi_{0,1}(C_{i_t,a}^{(1)} )}{\Phi_{0,1}( C_{i_t,a}^{(2)}  )}  \right) \cdot \mathbb{P}(m_{i_t,a} \leq C \vert \vec{f}_{i_t} ) 
    + \mathcal{P}(C) \cdot \mathbb{P}(m_{i_t,a} > C \vert \vec{f}_{i_t}) \\
    &= \exp\left( \vec{f}_{i_t}^\intercal\vec{\theta}^*_a + \frac{\sigma^2}{2}\right) \cdot \left(   \frac{\Phi_{0,1}(C_{i_t,a}^{(1)} )}{\Phi_{0,1}( C_{i_t,a}^{(2)}  )}  \right) 
    + \mathbb{P}(m_{i_t,a} > C \vert \vec{f}_{i_t})\cdot \left( \mathcal{P}(C) - \exp\left( \vec{f}_{i_t}^\intercal\vec{\theta}^*_a + \frac{\sigma^2}{2}\right) \cdot  \frac{\Phi_{0,1}(C_{i_t,a}^{(1)} )}{\Phi_{0,1}( C_{i_t,a}^{(2)}  )} 
    \right) \\
    %
    %&= \exp\left( \vec{f}_{i_t}^\intercal\vec{\theta}^*_a + \frac{\sigma^2}{2}\right)  \cdot \left(   \frac{\Phi_{0,1}(C_{i_t,a}^{(1)} )}{\Phi_{0,1}( C_{i_t,a}^{(2)}  )}  \right) 
    %+ \big(1-\Phi_{\vec{f}_{i_t}^\intercal\vec{\theta}^*_a,\sigma}(\log(C) )\big) \cdot \left( \mathcal{P}(C) - \exp\left( \vec{f}_{i_t}^\intercal\vec{\theta}^*_a + \frac{\sigma^2}{2}\right) \cdot \frac{\Phi_{0,1}(C_{i_t,a}^{(1)} )}{\Phi_{0,1}( C_{i_t,a}^{(2)}  )}  \right) \\
        %
    &= E_{C}(\vec{f}_{i_t}^\intercal\vec{\theta}^*_a,\sigma) 
    + \big(1-\Phi_{\vec{f}_{i_t}^\intercal\vec{\theta}^*_a,\sigma}(\log(C) )\big) \cdot \left( \mathcal{P}(C) - E_{C}(\vec{f}_{i_t}^\intercal\vec{\theta}^*_a,\sigma) \right),
\end{split}
\end{align*} 
where $  E_{C}(\vec{f}_{i_t}^\intercal\vec{\theta}^*_a,\sigma) = \exp( \vec{f}_{i_t}^\intercal\vec{\theta}^*_a + \nicefrac{\sigma^2}{2}) \cdot \frac{\Phi_{0,1}(C_{i_t,a}^{(1)} )}{\Phi_{0,1}( C_{i_t,a}^{(2)} )}.$

\section{ASlib Overview}\label{sec:aslib_overview}
ASlib \cite{as_lib} is a curated collection of over 25 different algorithm selection problems, called scenarios, based on different algorithmic problem classes such as SAT, TSP, CSP. Each scenario is made up of several instances for which the performance of a set of algorithms has been evaluated using a certain cutoff to avoid unacceptably long algorithm runs. Table \ref{tab:aslib_scenario_table} gives an overview of the examined scenarios including relevant statistics.

\begin{table}[ht]
    \centering
    \begin{tabular}{lrrrrrrrr}
\toprule
Scenario &    \#I &   \#F &  \#A &            C &    T &   MT & minT & maxT \\
\midrule
ASP-POTASSCO           &  1294 &  138 &  11 &       600.00 & 0.20 & 0.20 & 0.14 & 0.28 \\
BNSL-2016              &  1179 &   86 &   8 &      7200.00 & 0.28 & 0.18 & 0.12 & 0.59 \\
CPMP-2015              &   527 &   22 &   4 &      3600.00 & 0.28 & 0.28 & 0.19 & 0.37 \\
CSP-2010               &  2024 &   86 &   2 &      5000.00 & 0.20 & 0.20 & 0.14 & 0.25 \\
CSP-MZN-2013           &  4642 &  155 &  11 &      1800.00 & 0.70 & 0.70 & 0.48 & 0.87 \\
CSP-Minizinc-Time-2016 &   100 &   95 &  20 &      1200.00 & 0.50 & 0.52 & 0.28 & 0.82 \\
GRAPHS-2015            &  5725 &   35 &   7 & 1e+08.00 & 0.07 & 0.04 & 0.03 & 0.27 \\
MAXSAT-PMS-2016        &   601 &   37 &  19 &      1800.00 & 0.39 & 0.19 & 0.12 & 0.96 \\
MAXSAT-WPMS-2016       &   630 &   37 &  18 &      1800.00 & 0.58 & 0.46 & 0.21 & 0.96 \\
MAXSAT12-PMS           &   876 &   37 &   6 &      2100.00 & 0.41 & 0.43 & 0.23 & 0.57 \\
MAXSAT15-PMS-INDU      &   601 &   37 &  29 &      1800.00 & 0.49 & 0.42 & 0.12 & 0.96 \\
MIP-2016               &   218 &  143 &   5 &      7200.00 & 0.20 & 0.10 & 0.04 & 0.45 \\
PROTEUS-2014           &  4021 &  198 &  22 &      3600.00 & 0.60 & 0.63 & 0.37 & 0.67 \\
QBF-2011               &  1368 &   46 &   5 &      3600.00 & 0.55 & 0.51 & 0.42 & 0.72 \\
QBF-2014               &  1254 &   46 &  14 &       900.00 & 0.56 & 0.56 & 0.37 & 0.71 \\
QBF-2016               &   825 &   46 &  24 &      1800.00 & 0.36 & 0.31 & 0.20 & 0.61 \\
SAT03-16\_INDU          &  2000 &  483 &  10 &      5000.00 & 0.25 & 0.24 & 0.19 & 0.32 \\
SAT11-HAND             &   296 &  115 &  15 &      5000.00 & 0.61 & 0.62 & 0.50 & 0.68 \\
SAT11-INDU             &   300 &  115 &  18 &      5000.00 & 0.33 & 0.33 & 0.28 & 0.39 \\
SAT11-RAND             &   600 &  115 &   9 &      5000.00 & 0.47 & 0.45 & 0.40 & 0.58 \\
SAT12-ALL              &  1614 &  115 &  31 &      1200.00 & 0.54 & 0.53 & 0.25 & 0.74 \\
SAT12-HAND             &   767 &  115 &  31 &      1200.00 & 0.67 & 0.64 & 0.52 & 0.93 \\
SAT12-INDU             &  1167 &  115 &  31 &      1200.00 & 0.50 & 0.36 & 0.26 & 0.99 \\
SAT12-RAND             &  1362 &  115 &  31 &      1200.00 & 0.73 & 0.87 & 0.27 & 0.98 \\
SAT15-INDU             &   300 &   54 &  28 &      3600.00 & 0.24 & 0.21 & 0.13 & 0.74 \\
TSP-LION2015           &  3106 &  122 &   4 &      3600.00 & 0.10 & 0.03 & 0.00 & 0.32 \\
\bottomrule
\end{tabular}
    \caption{Overview of ASlib scenarios including their number of instances (\#I), number of features (\#F), number of algorithms (\#A), cutoff time (C), average (T), median (MT), minimum (minT) and maximum (maxT) relative number of timeouts per algorithms.}
    \label{tab:aslib_scenario_table}
\end{table}

\section{Software and Hyperparameter Settings}\label{sec:hyperparams}
All experiments are based on Python 3 implementations. A complete list of used packages and the corresponding version number can be found on Github. Below we give a short (non-exhaustive) list of the most important software used for the corresponding approaches and the hyperparameter settings used for the evaluation.

\subsection{Our Approaches}

\subsubsection{Software}
All presented approaches (LinUCB and Thompson variants) were implemented in Python by using scipy\footnote{\url{https://www.scipy.org/}} and numpy\footnote{\url{https://numpy.org/}}.

\subsubsection{Hyperparameter Settings}
If not stated differently at the beginning of the corresponding experiment (e.g., sensitivity analysis), the following hyperparameters were used:

\begin{itemize}
    \item Thompson variants
    \begin{itemize}
        \item $\sigma = 1.0$
        \item $\lambda = 0.5$
    \end{itemize}
    \item LinUCB variants
    \begin{itemize}
        \item $\lambda = 1.0$  
        \item $\alpha = 1.0$
        \item $\sigma = 10.0$ (for the \_rev variants)
        \item $\widetilde{\sigma}^2 = 0.25$ (for the rand\_ variants)
    \end{itemize}
\end{itemize}
The values of these parameters were chosen according to a hyperparameter sensitivity analysis (cf. Sec. \ref{sec:app_sensitivity_analysis}).

\subsubsection{Caveat}
All Thompson variants rely on sampling from the multi-variate normal distribution, which we implemented using the 'np.random.multivariate\_normal' method. Unfortunately, this method seems to have a bug, which is caused by the underlying BLAS implementation of the corresponding SVD, which is performed as part of the method. Changing to various versions of numpy and BLAS did not solve the problem for us. As a consequence, some of the repetitions of the experiments of some scenarios did not complete for some Thompson variants. Below you can find a table indicating how many repetitions are \textit{missing} for the corresponding variant on the corresponding scenario. We reported the bug and hope to add the missing values to the experimental results once the bug is fixed. However, due to the very few amount of data points missing, we do not assume a relevant change for the final results. 

\begin{center}
\begin{tabular}{c|c|c}
     \toprule
     Scenario & Approach & \#Missing seeds  \\ 
     \midrule
     CSP-MZN-2013 & bj\_thompson & 2 \\
     PROTEUS-2014 & bj\_thompson & 1 \\
     PROTEUS-2014 & thompson\_rev & 1 \\
     PROTEUS-2014 & thompson & 2 \\
     SAT03-16\_INDU & bj\_thompson\_rev & 2 \\
     SAT03-16\_INDU & bj\_thompson & 1 \\
     SAT03-16\_INDU & thompson\_rev & 1\\
     SAT03-16\_INDU & thompson & 3\\
     SAT12-RAND & bj\_thompson\_rev & 1 \\
     TSP-LION2015 & bj\_thompson & 1\\
     \bottomrule
\end{tabular}
\end{center}

\subsection{Degroote Approach}

\subsubsection{Software}
We re-implemented the Degroote approach using scikit-learn\footnote{\url{https://scikit-learn.org/stable/}} in Python. In particular the linear model is implemented using the LinearRegression estimator from scikit-learn. %The Jackknife method is provided by the 'forest-confidence-interval'\footnote{\url{https://github.com/scikit-learn-contrib/forest-confidence-interval}} package. 

\subsubsection{Hyperparameter Settings}
\begin{itemize}
    \item Epsilon-Greedy linear regression
    \begin{itemize}
         \item $\epsilon = 0.05$ 
    \end{itemize}
    %\item UCB Random Forest
    %\begin{itemize}
    %    \item $\alpha = 1$ (parameter controlling the exploration-exploitation tradeoff)
    %    \item all hyperparameters of the Jackknife method were set according to their default values
    %\end{itemize}
    \item the hyperparameters of underlying models from scikit-learn were set according to their default values
\end{itemize}
The value of the hyperparameter $\epsilon$ is as suggested by the authors in \cite{online_as_main_degrooteCBK18}.

\section{Sensitivity Analysis}\label{sec:app_sensitivity_analysis}
In this section we provide a sensitivity analysis of the most important hyperparameters of our presented approaches. To keep the amount of experiments to perform in a reasonable dimension, we limit ourselves to the most advanced variant of both LinUCB and Thompson we presented in this work. Moreover, we selected six scenarios from the ASlib covering a range of algorithmic problems, number of instances and features for this analysis. All figures described in the following display the average PAR10 score over ten seeds for different settings of the corresponding hyperparameter. The error bars indicate the corresponding standard deviation.

\subsection{Thompson Variants}
Figure \ref{fig:app_sensitivity_bj_thompson_rev_sigma} displays the average PAR10 score over ten seeds for different settings of $\sigma$ on a selection of scenarios where $\lambda=0.5$ is fixed. Overall, small values of $\sigma$ tend to lead to better results indicating that sampling $\vec{\widetilde{\theta}}_a$, i.e., our belief about the weight vector according to the posterior distribution, should be based on a rather small variance and hence, not too much exploration. This is quite in line with our findings regarding the amount of exploration of the LinUCB variants.

Figure \ref{fig:app_sensitivity_bj_thompson_rev_lambda} displays the average PAR10 score over $10$ seeds for different settings of $\lambda$ on a selection of scenarios where $\sigma=10$ is fixed. Overall a clear trend whether small or large values of $\lambda$ lead to good results seems hard to detect indicating that the performance is rather robust with respect to the choice of $\lambda$.

\subsection{LinUCB Variants}
Figure \ref{fig:app_sensitivity_rand_bclinucb_rev_sigma} displays the average PAR10 score over ten seeds for different settings of $\sigma$ on a selection of scenarios where $\lambda=0.5$ and $\widetilde{\sigma}^2 = 0.25$ are fixed. In contrast to the Thompson variants previously discussed, where small values of $\sigma$ tend to lead to better results, here, large values of $\sigma$ tend to lead to better PAR10 scores indicating that the noise terms (defined in \eqref{defi_exp_runtime_model}) have a large standard deviation.

Figure \ref{fig:app_sensitivity_rand_bclinucb_rev_alpha} displays the average PAR10 score over ten seeds for different settings of $\alpha$ on a selection of scenarios where $\sigma=1$ and $\widetilde{\sigma}^2 = 0.25$ are fixed. Overall, no clear trend can be observed whether small or large values of $\alpha$ lead to better results.

Figure \ref{fig:app_sensitivity_rand_bclinucb_rev_randsigma} displays the average PAR10 score over ten seeds for different settings of $\widetilde{\sigma}^2$ on a selection of scenarios where $\alpha=1$ and $\lambda = 0.5$ are fixed. Once again, overall no clear trend can be observed whether small or large values of $\widetilde{\sigma}$ lead to good results, due to the wide error bars.

\begin{figure}[htb]
	\centering
\begin{subfigure}{0.25\textwidth}
	\includegraphics[width=\linewidth]{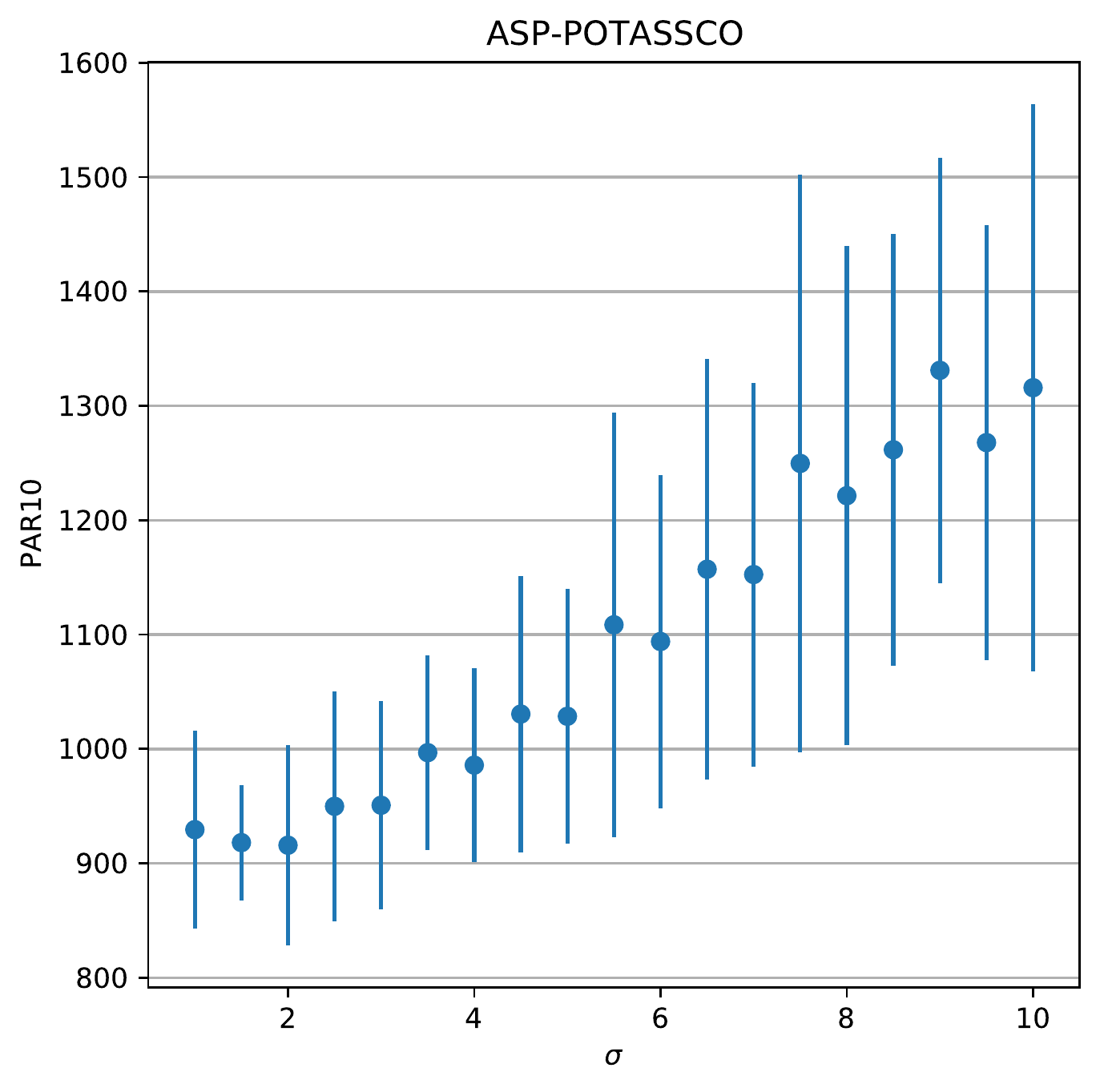}
	\label{fig:app_sensitivity_ASP-POTASSCO_sigma_bj_thompson_rev}
\end{subfigure}
\begin{subfigure}{0.25\textwidth}
	\includegraphics[width=\linewidth]{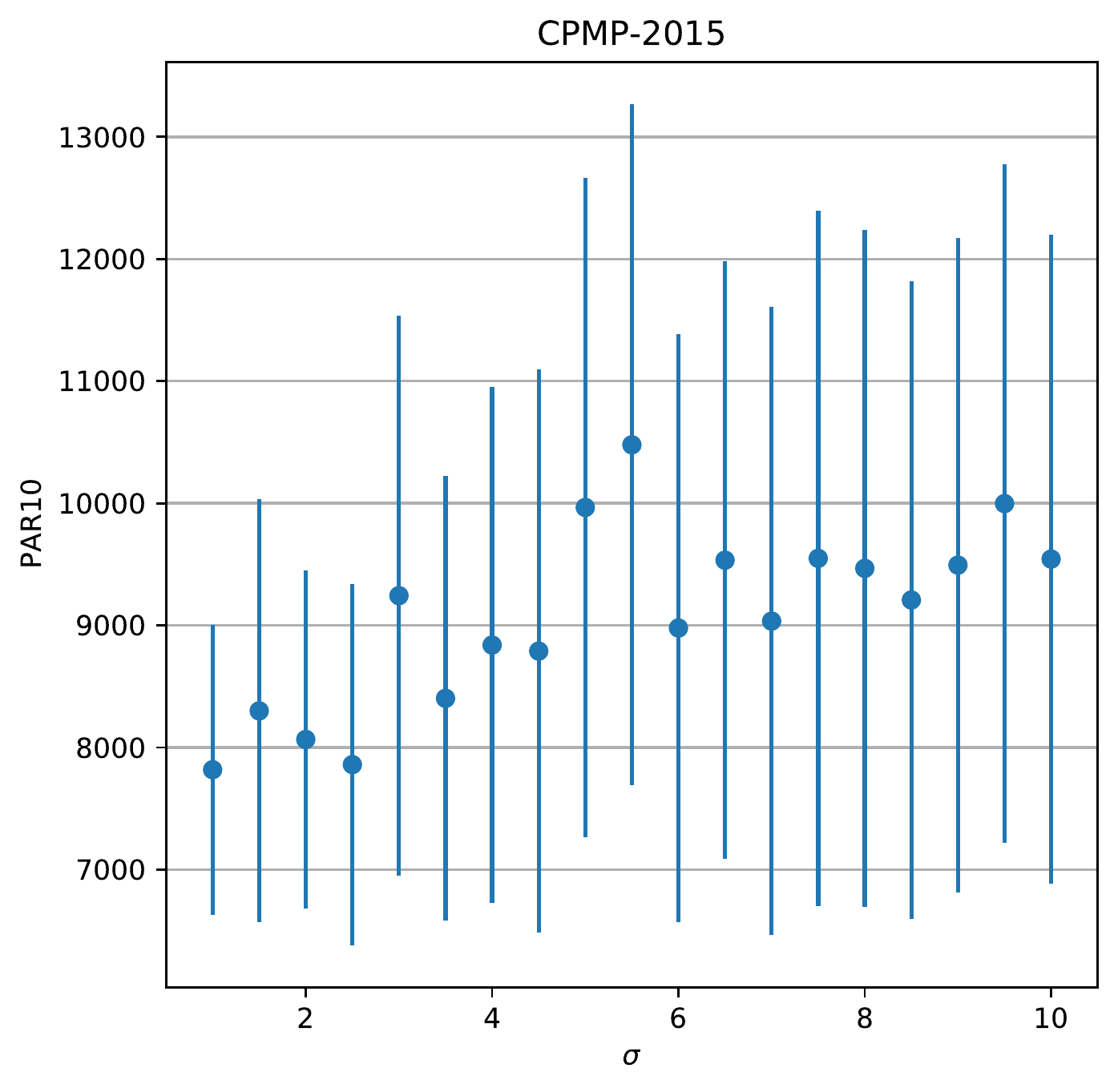}
	\label{fig:app_sensitivity_CPMP-2015_sigma_bj_thompson_rev}
\end{subfigure}
\begin{subfigure}{0.25\textwidth}
	\includegraphics[width=\linewidth]{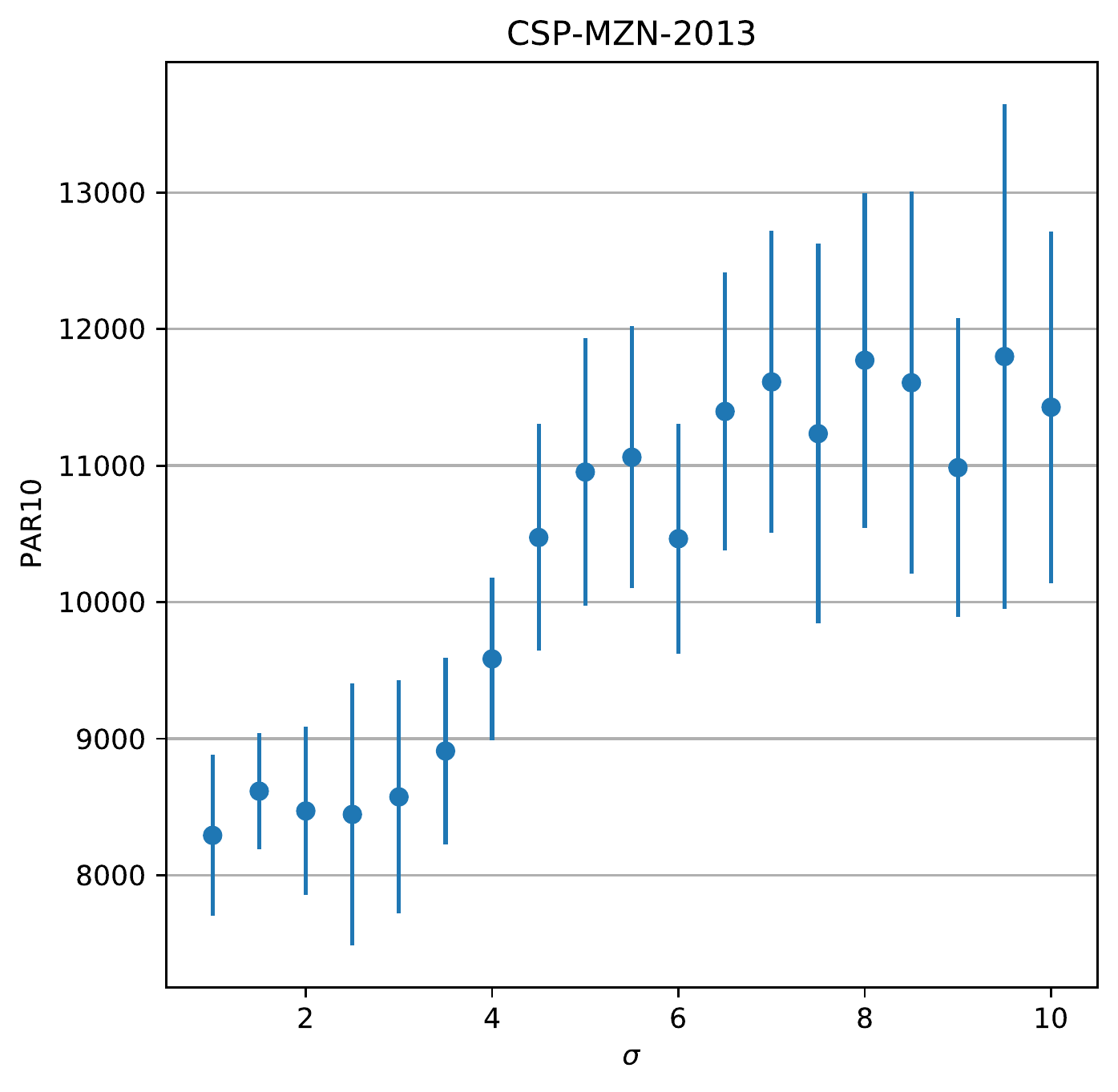}
	\label{fig:app_sensitivity_CSP-MZN-2013_sigma_bj_thompson_rev}
\end{subfigure}
\begin{subfigure}{0.25\textwidth}
	\includegraphics[width=\linewidth]{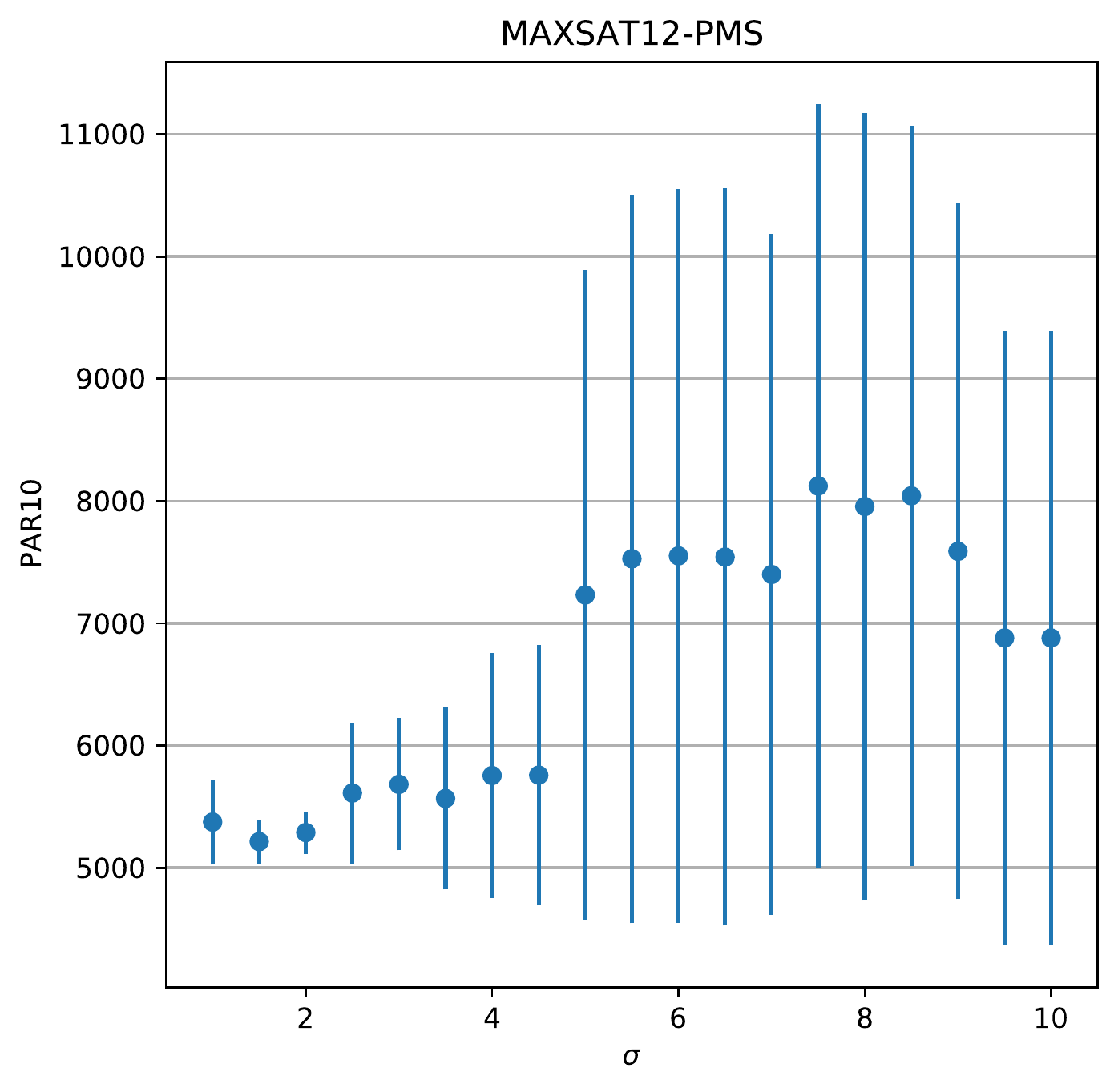}
	\label{fig:app_sensitivity_MAXSAT12-PMS_sigma_bj_thompson_rev}
\end{subfigure}
\begin{subfigure}{0.25\textwidth}
	\includegraphics[width=\linewidth]{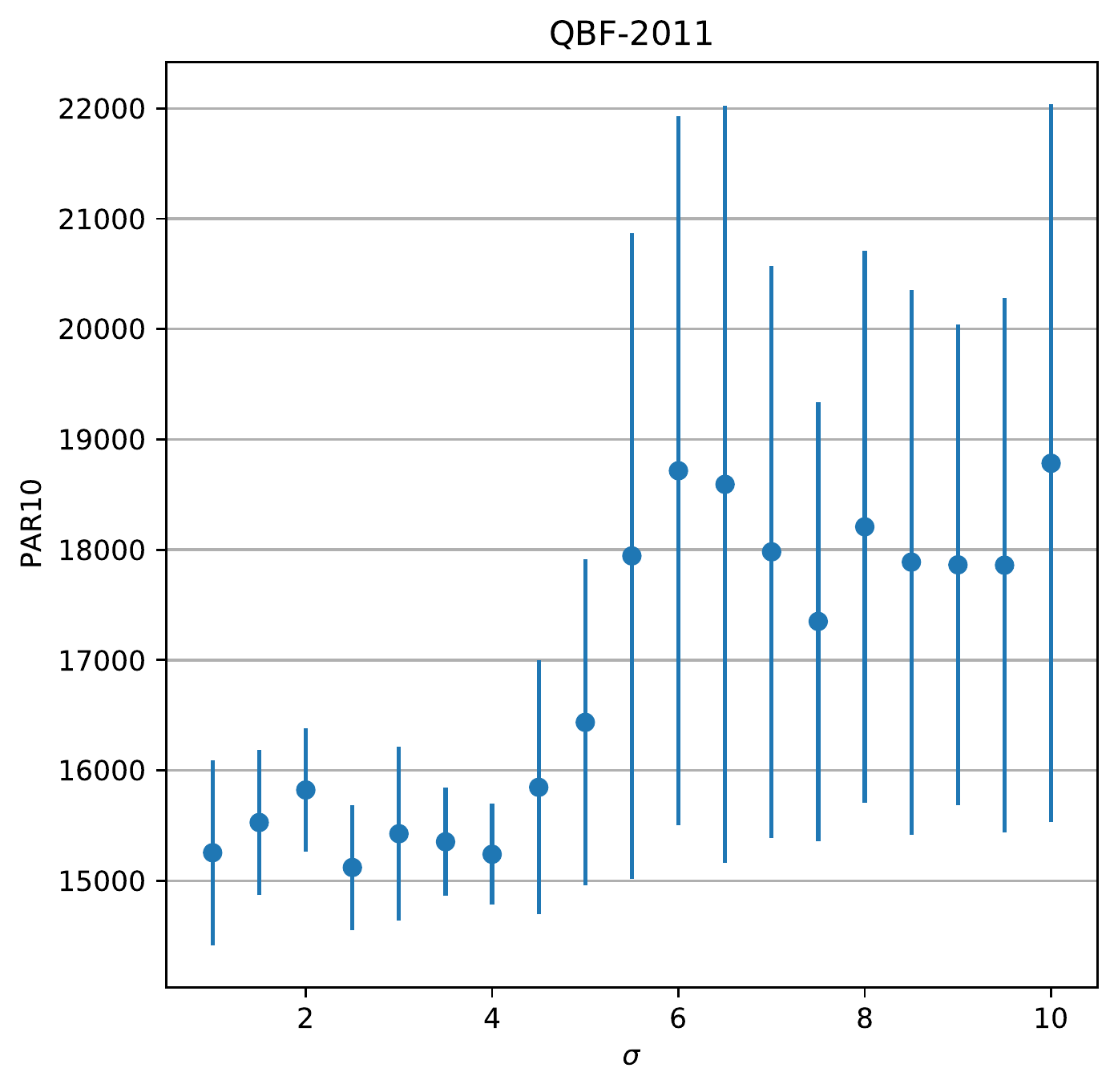}
	\label{fig:app_sensitivity_QBF-2011_sigma_bj_thompson_rev}
\end{subfigure}
\begin{subfigure}{0.25\textwidth}
	\includegraphics[width=\linewidth]{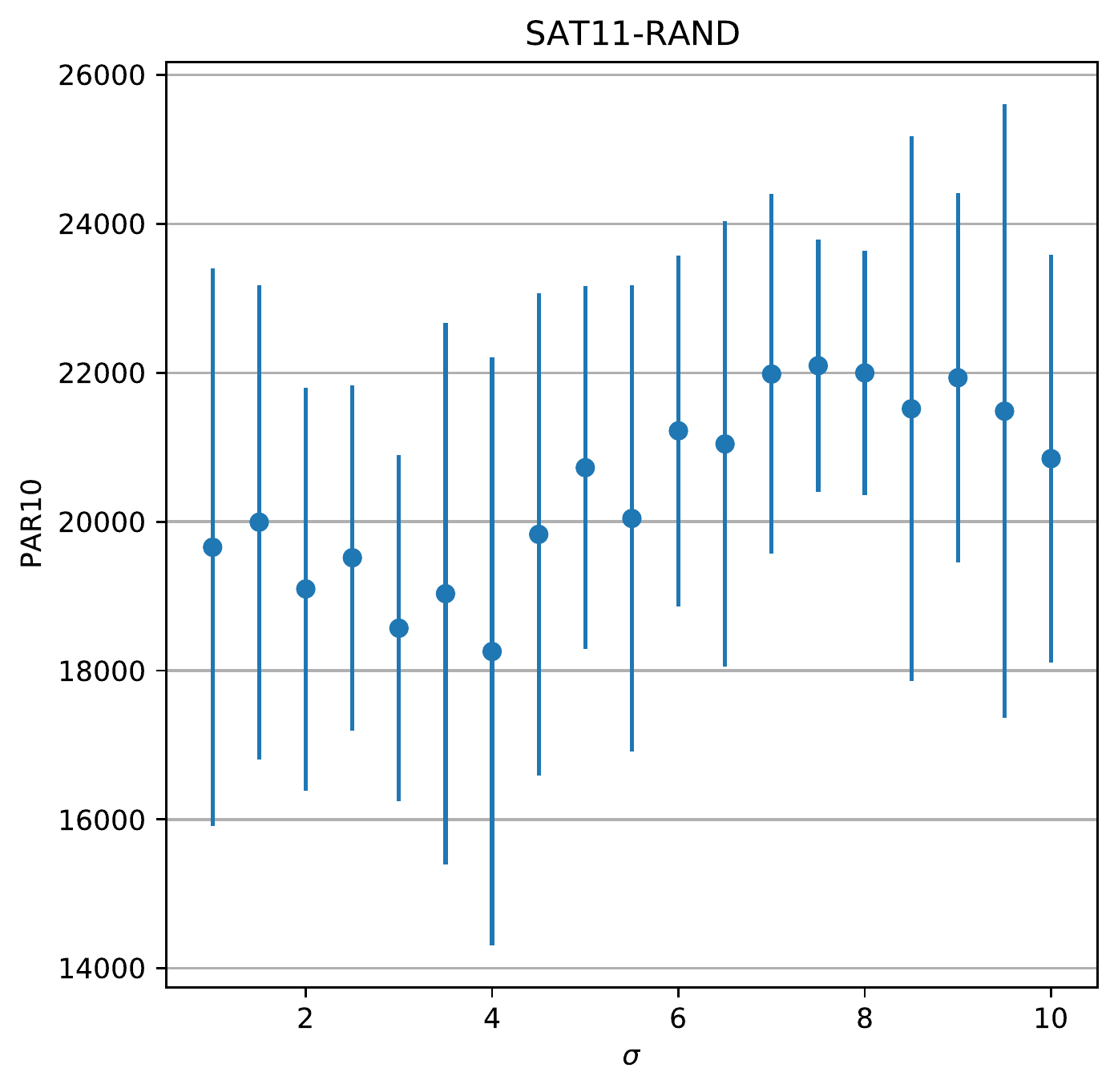}
	\label{fig:app_sensitivity_SAT11-RAND_sigma_bj_thompson_rev}
\end{subfigure}
\caption{Sensitivity analysis for parameter $\sigma$ of approach bj\_thompson\_rev.}
\label{fig:app_sensitivity_bj_thompson_rev_sigma}
\end{figure}

\begin{figure}[htb]
	\centering
\begin{subfigure}{0.25\textwidth}
	\includegraphics[width=\linewidth]{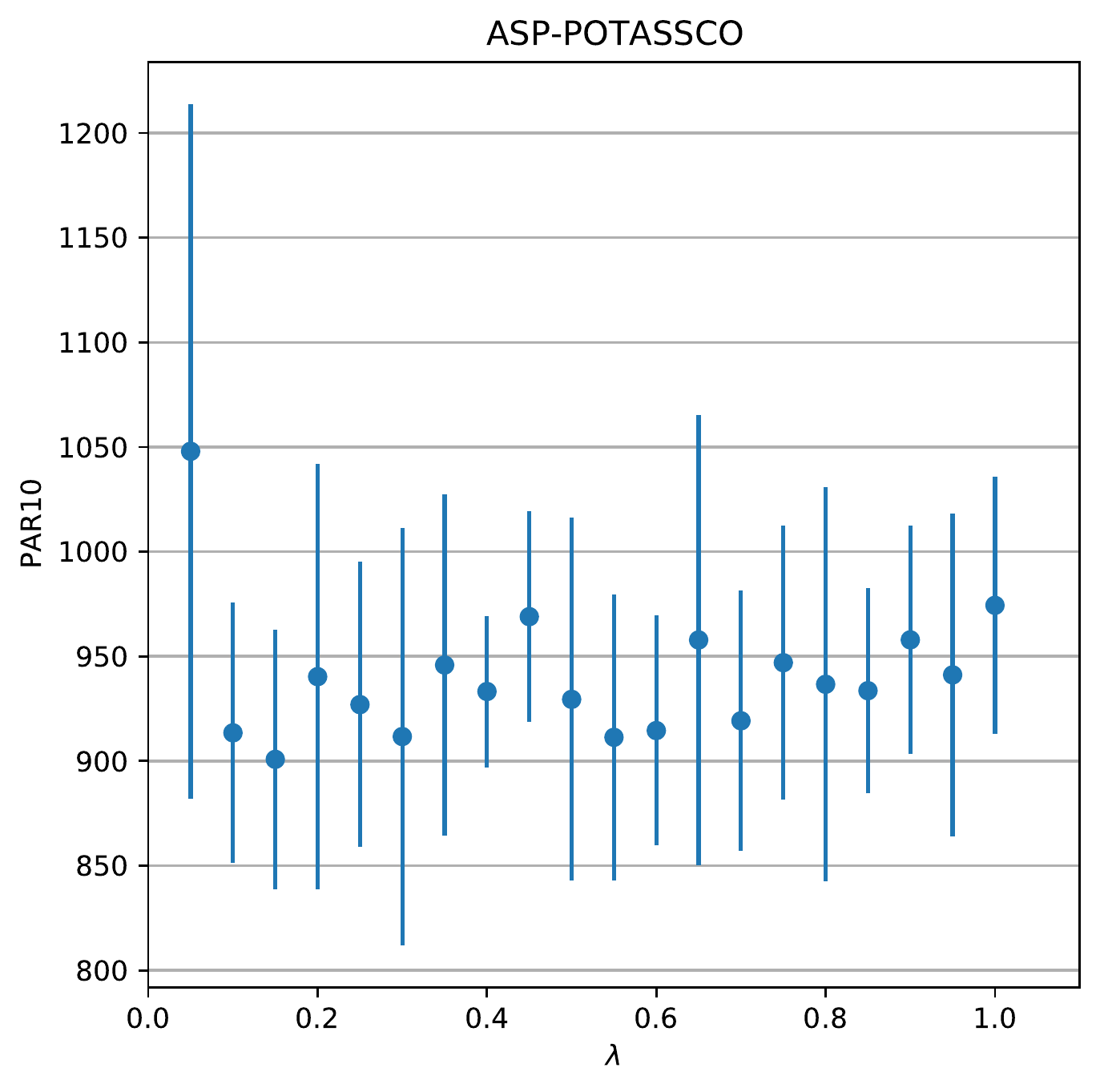}
	\label{fig:app_sensitivity_ASP-POTASSCO_lambda_bj_thompson_rev}
\end{subfigure}
\begin{subfigure}{0.25\textwidth}
	\includegraphics[width=\linewidth]{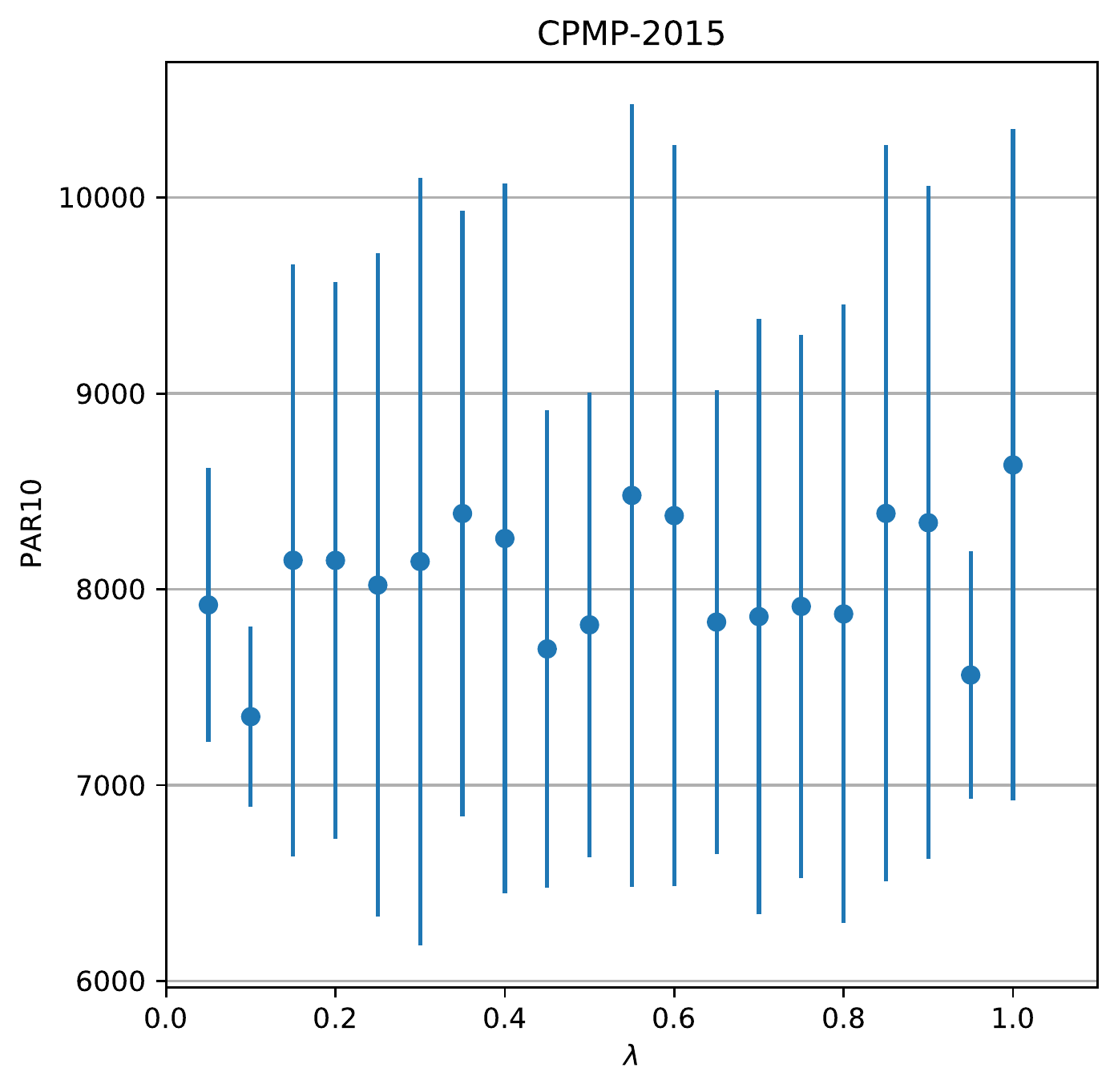}
	\label{fig:app_sensitivity_CPMP-2015_lambda_bj_thompson_rev}
\end{subfigure}
\begin{subfigure}{0.25\textwidth}
	\includegraphics[width=\linewidth]{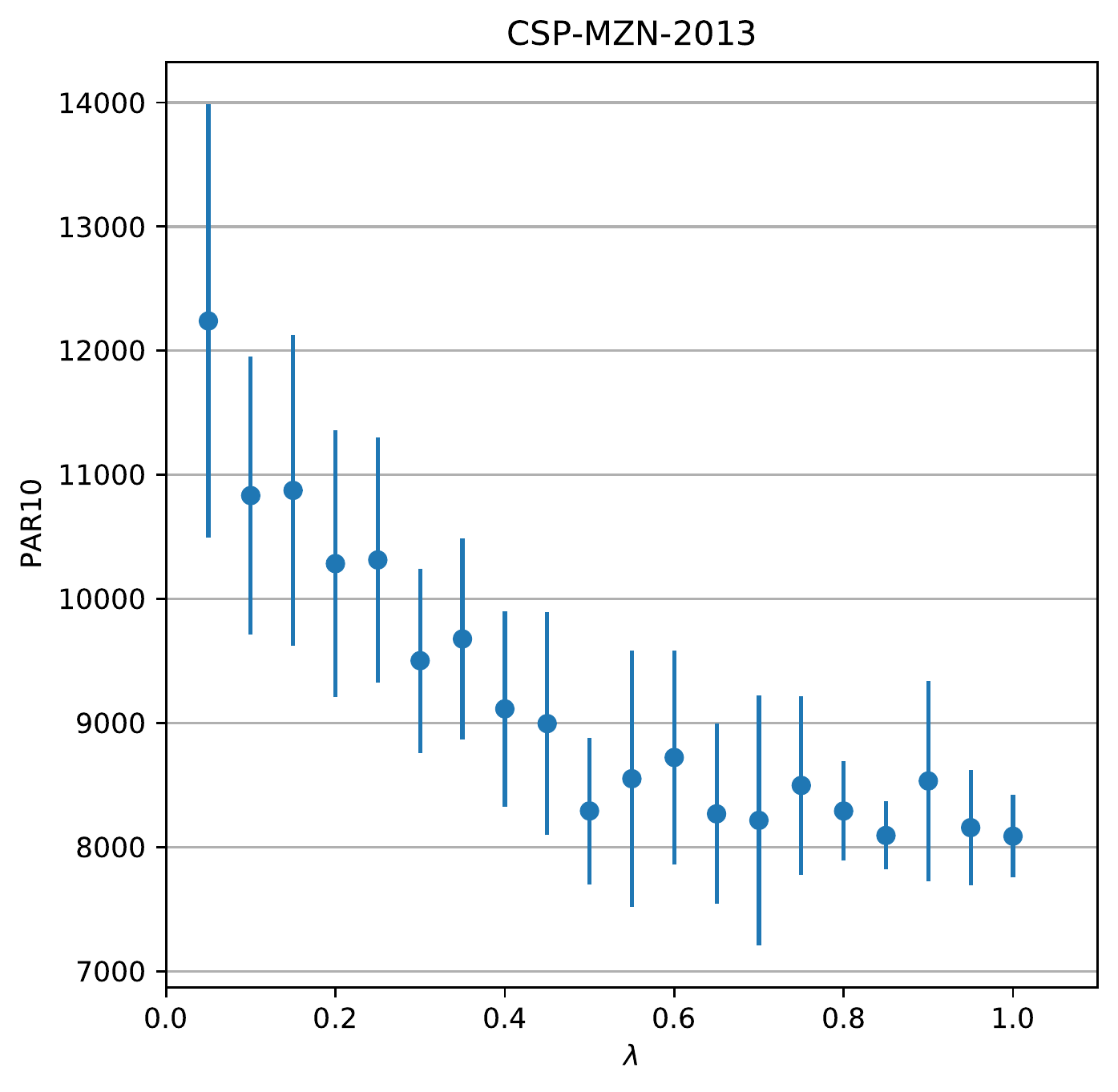}
	\label{fig:app_sensitivity_CSP-MZN-2013_lambda_bj_thompson_rev}
\end{subfigure}
\begin{subfigure}{0.25\textwidth}
	\includegraphics[width=\linewidth]{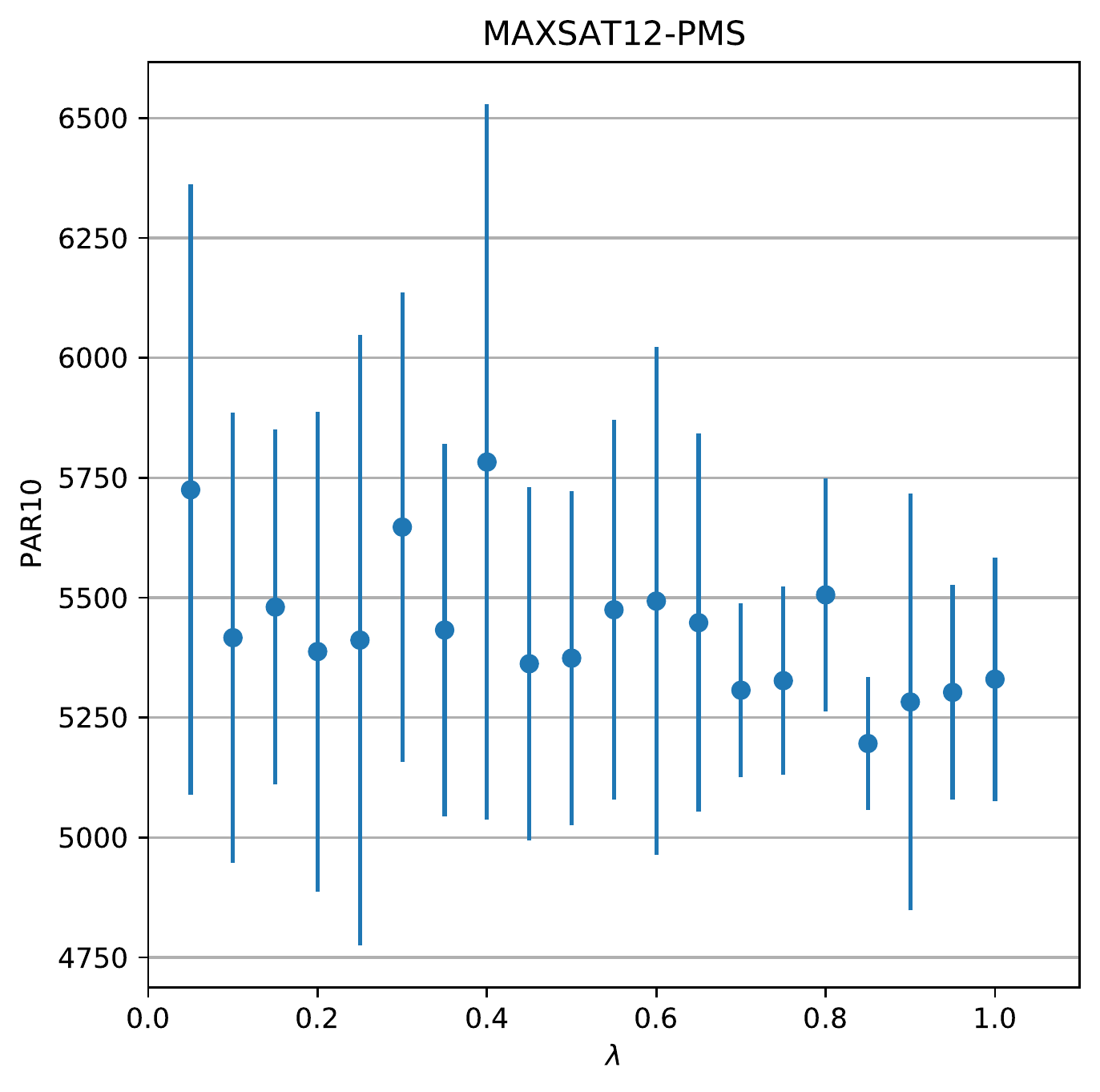}
	\label{fig:app_sensitivity_MAXSAT12-PMS_lambda_bj_thompson_rev}
\end{subfigure}
\begin{subfigure}{0.25\textwidth}
	\includegraphics[width=\linewidth]{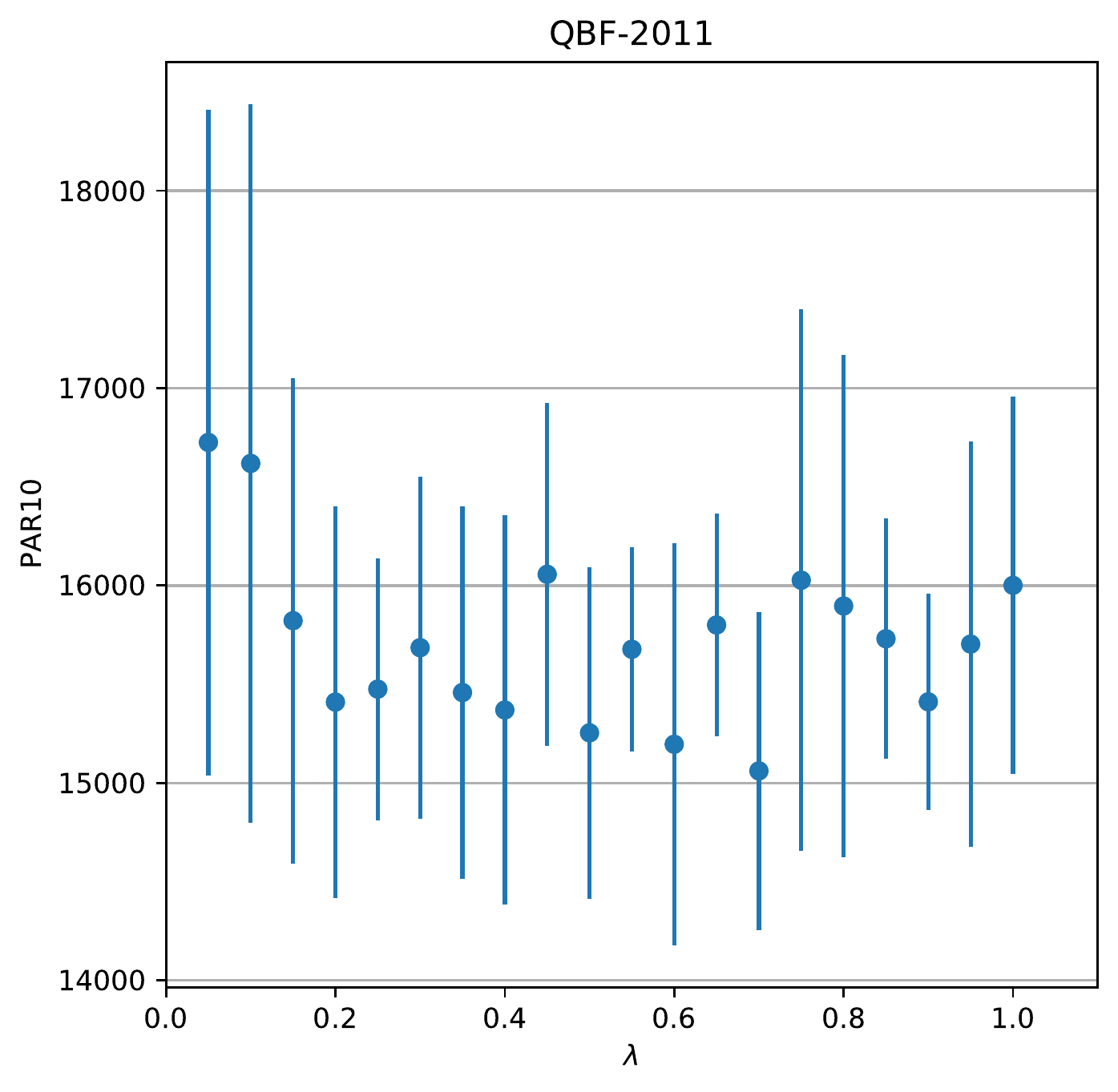}
	\label{fig:app_sensitivity_QBF-2011_lambda_bj_thompson_rev}
\end{subfigure}
\begin{subfigure}{0.25\textwidth}
	\includegraphics[width=\linewidth]{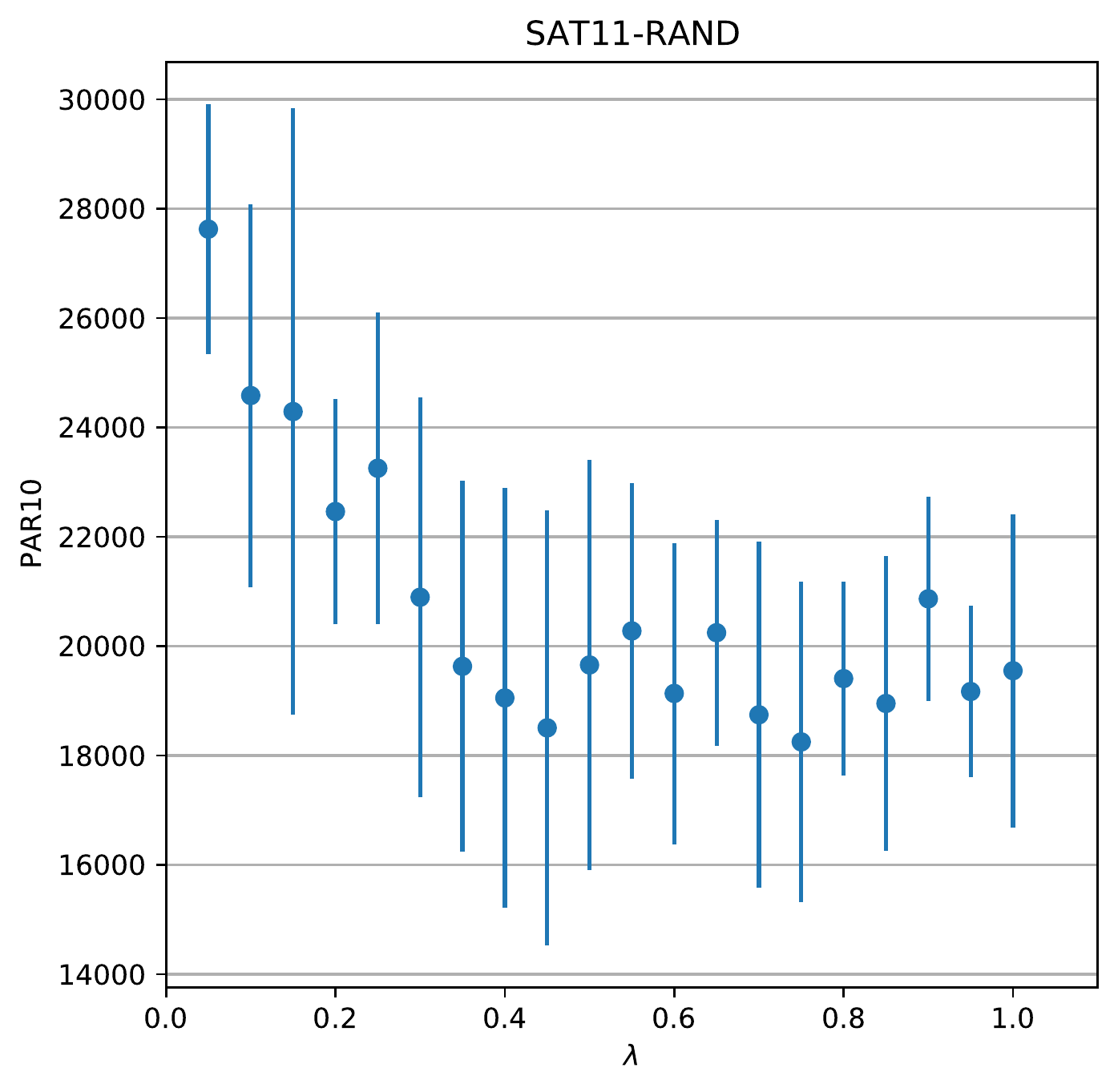}
	\label{fig:app_sensitivity_SAT11-RAND_lambda_bj_thompson_rev}
\end{subfigure}
\caption{Sensitivity analysis for parameter $\lambda$ of approach bj\_thompson\_rev.}
\label{fig:app_sensitivity_bj_thompson_rev_lambda}
\end{figure}

\begin{figure}[htb]
	\centering
\begin{subfigure}{0.25\textwidth}
	\includegraphics[width=\linewidth]{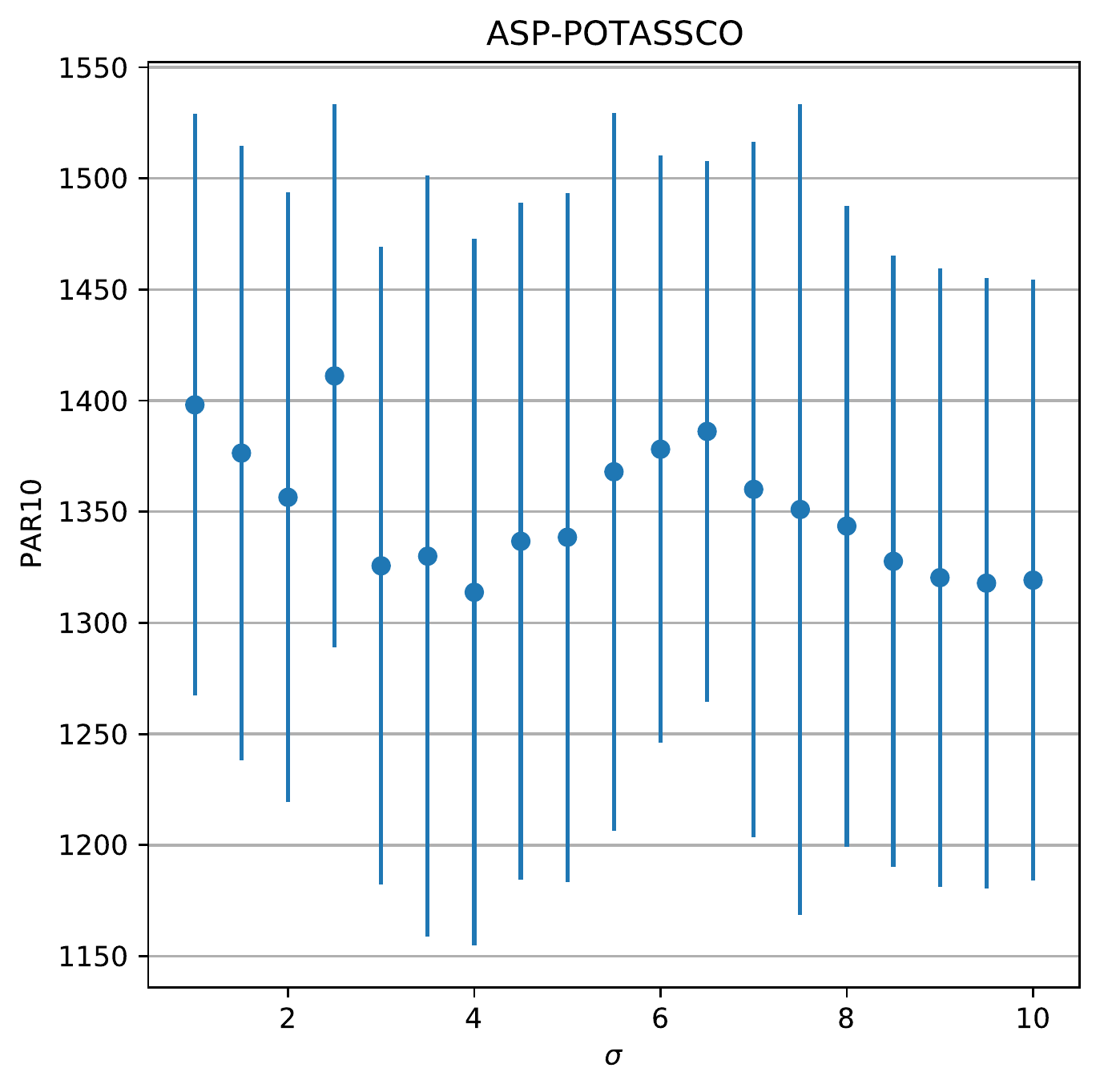}
	\label{fig:app_sensitivity_ASP-POTASSCO_sigma_rand_bclinucb_rev}
\end{subfigure}
\begin{subfigure}{0.25\textwidth}
	\includegraphics[width=\linewidth]{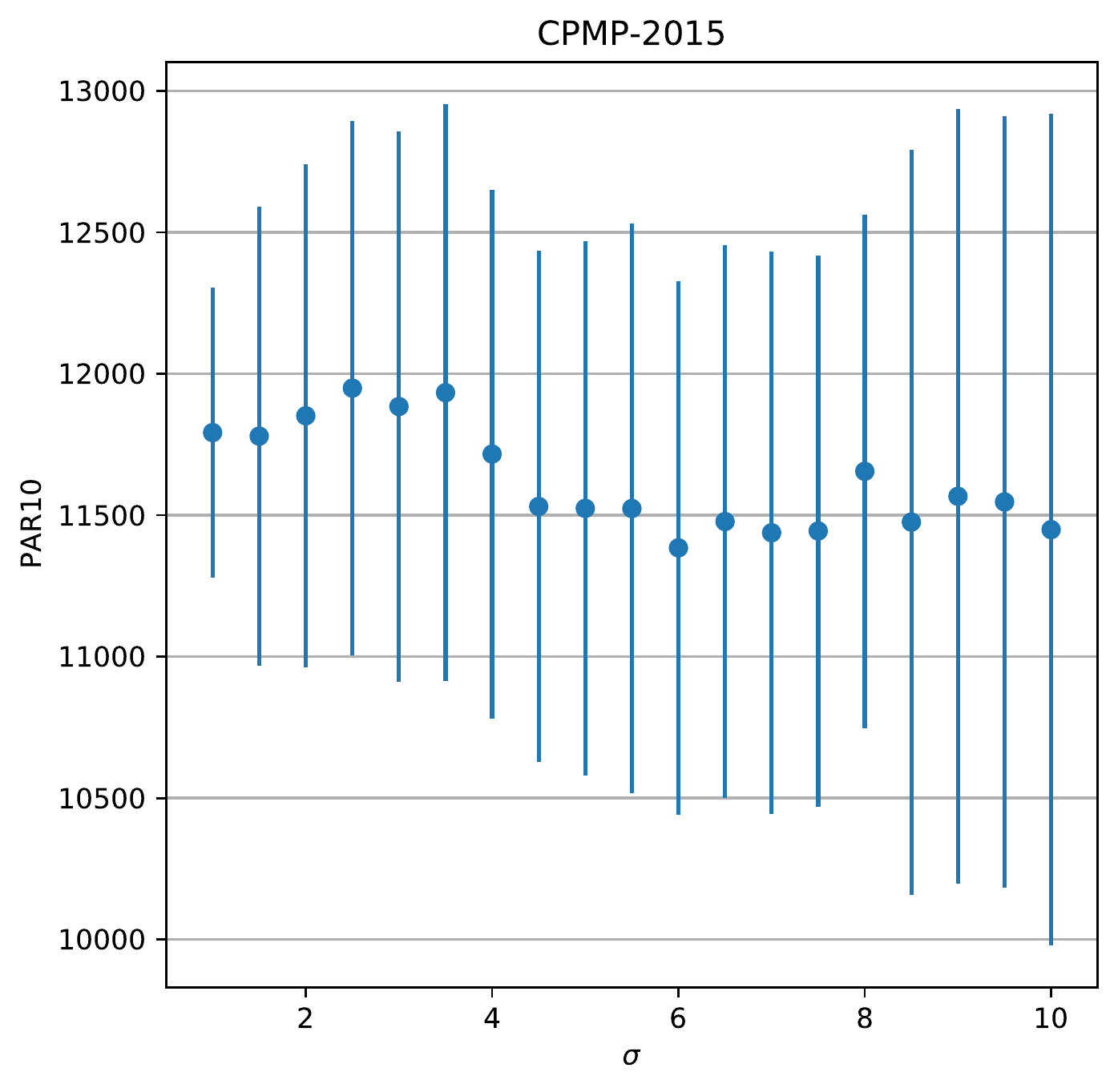}
	\label{fig:app_sensitivity_CPMP-2015_sigma_rand_bclinucb_rev}
\end{subfigure}
\begin{subfigure}{0.25\textwidth}
	\includegraphics[width=\linewidth]{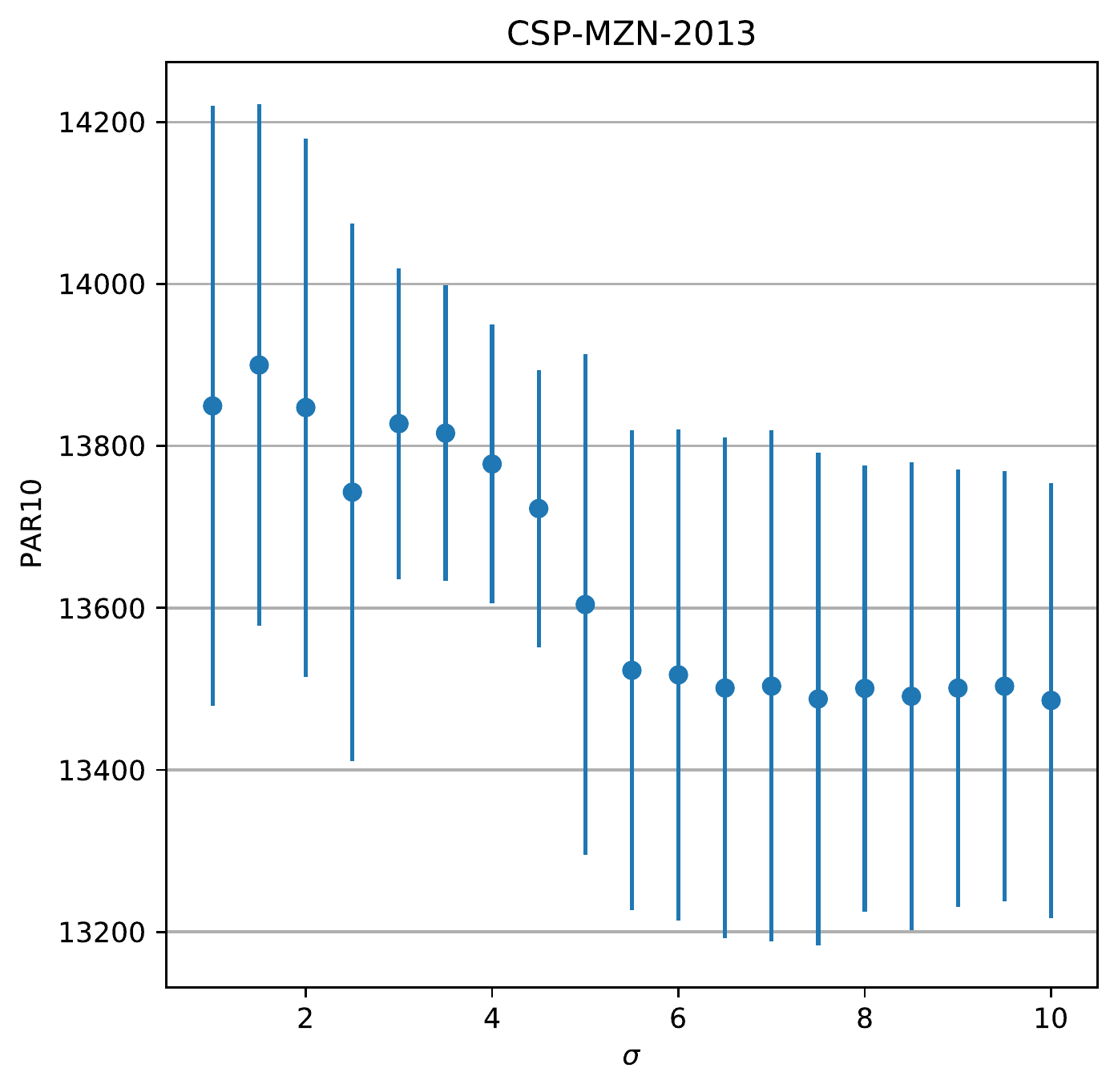}
	\label{fig:app_sensitivity_CSP-MZN-2013_sigma_rand_bclinucb_rev}
\end{subfigure}
\begin{subfigure}{0.25\textwidth}
	\includegraphics[width=\linewidth]{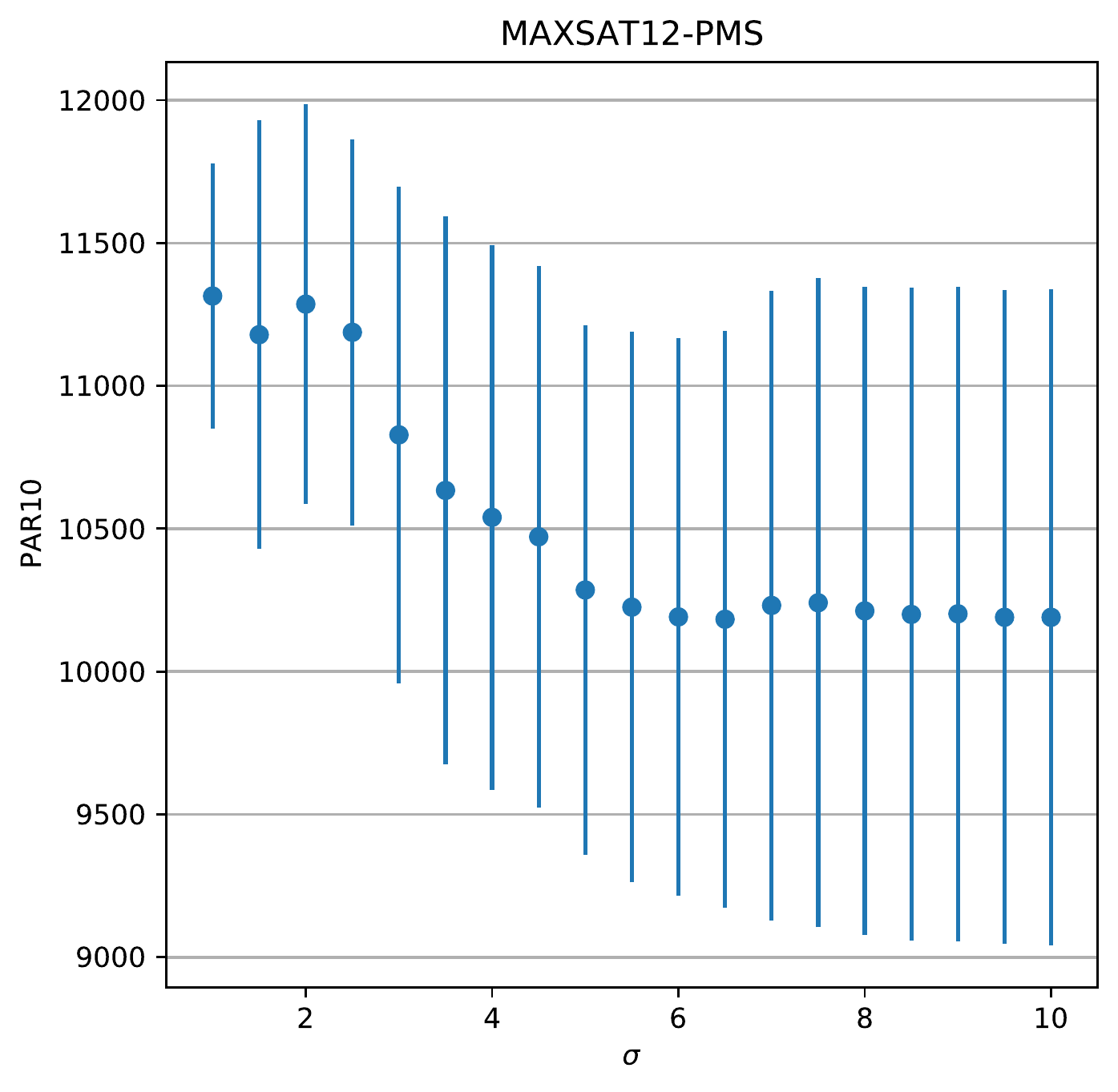}
	\label{fig:app_sensitivity_MAXSAT12-PMS_sigma_rand_bclinucb_rev}
\end{subfigure}
\begin{subfigure}{0.25\textwidth}
	\includegraphics[width=\linewidth]{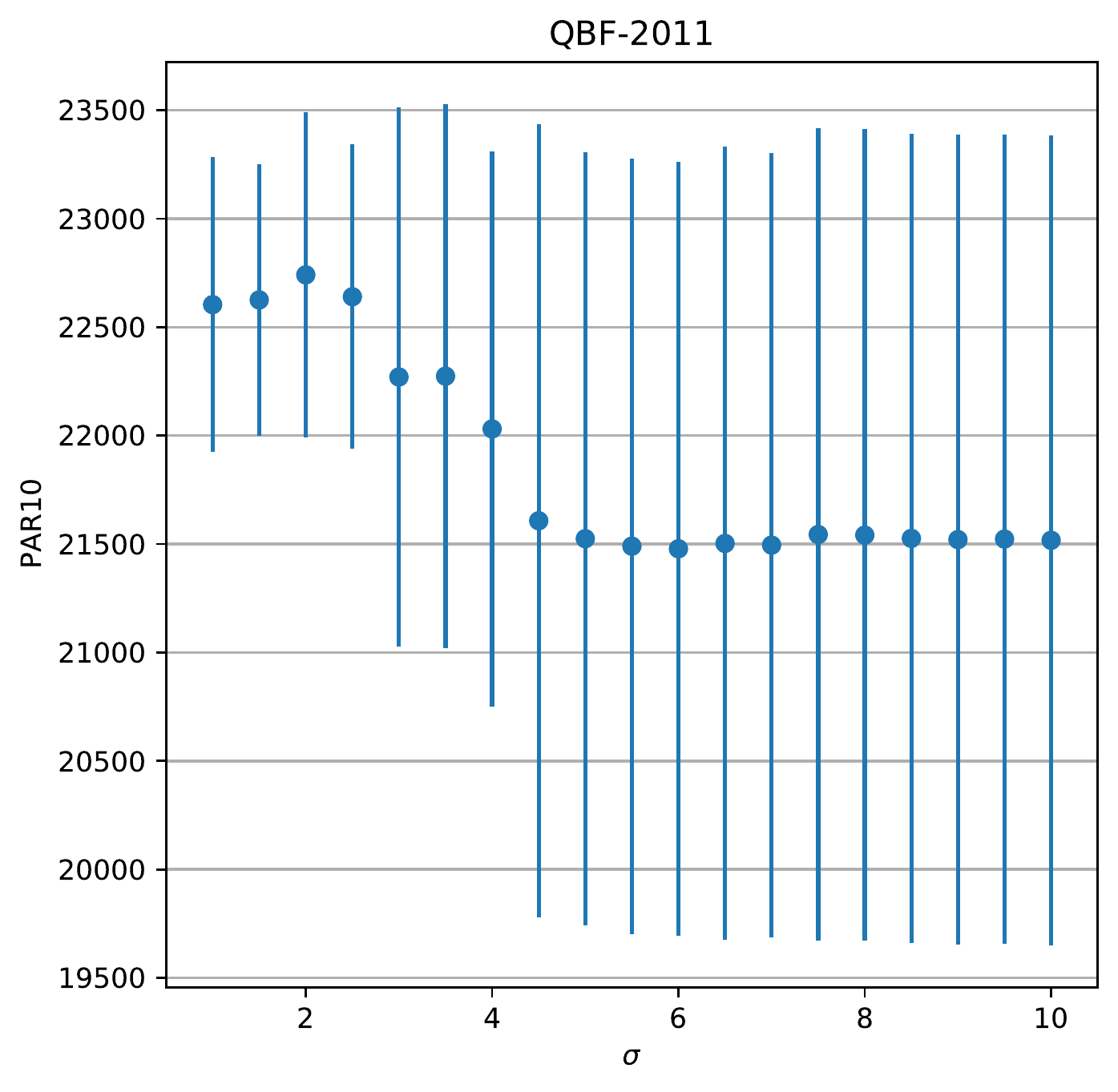}
	\label{fig:app_sensitivity_QBF-2011_sigma_rand_bclinucb_rev}
\end{subfigure}
\begin{subfigure}{0.25\textwidth}
	\includegraphics[width=\linewidth]{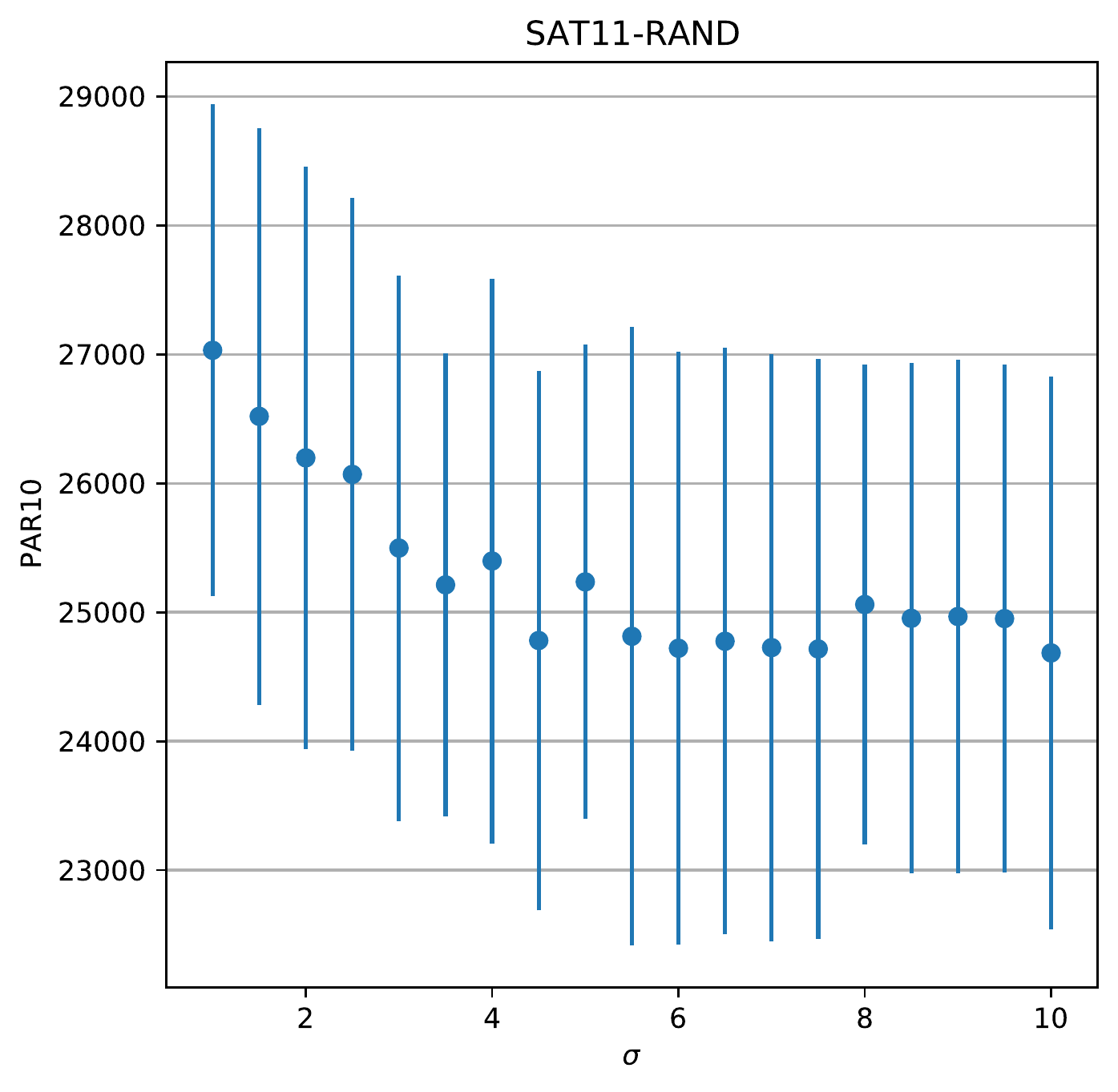}
	\label{fig:app_sensitivity_SAT11-RAND_sigma_rand_bclinucb_rev}
\end{subfigure}
\caption{Sensitivity analysis for parameter $\sigma$ of approach rand\_bclinucb\_rev.}
\label{fig:app_sensitivity_rand_bclinucb_rev_sigma}
\end{figure}

\begin{figure}[htb]
	\centering
\begin{subfigure}{0.25\textwidth}
	\includegraphics[width=\linewidth]{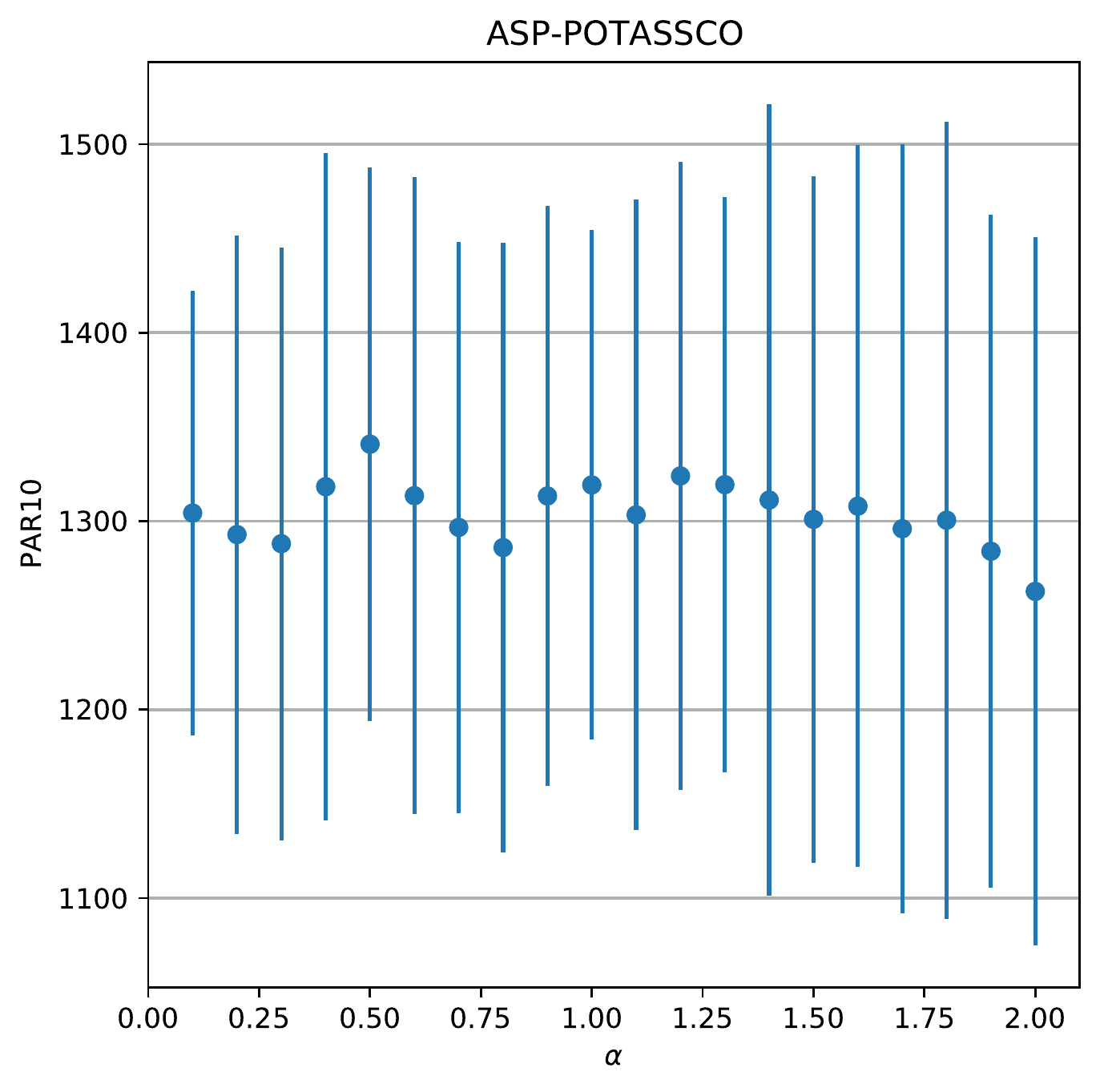}
	\label{fig:app_sensitivity_ASP-POTASSCO_alpha_rand_bclinucb_rev}
\end{subfigure}
\begin{subfigure}{0.25\textwidth}
	\includegraphics[width=\linewidth]{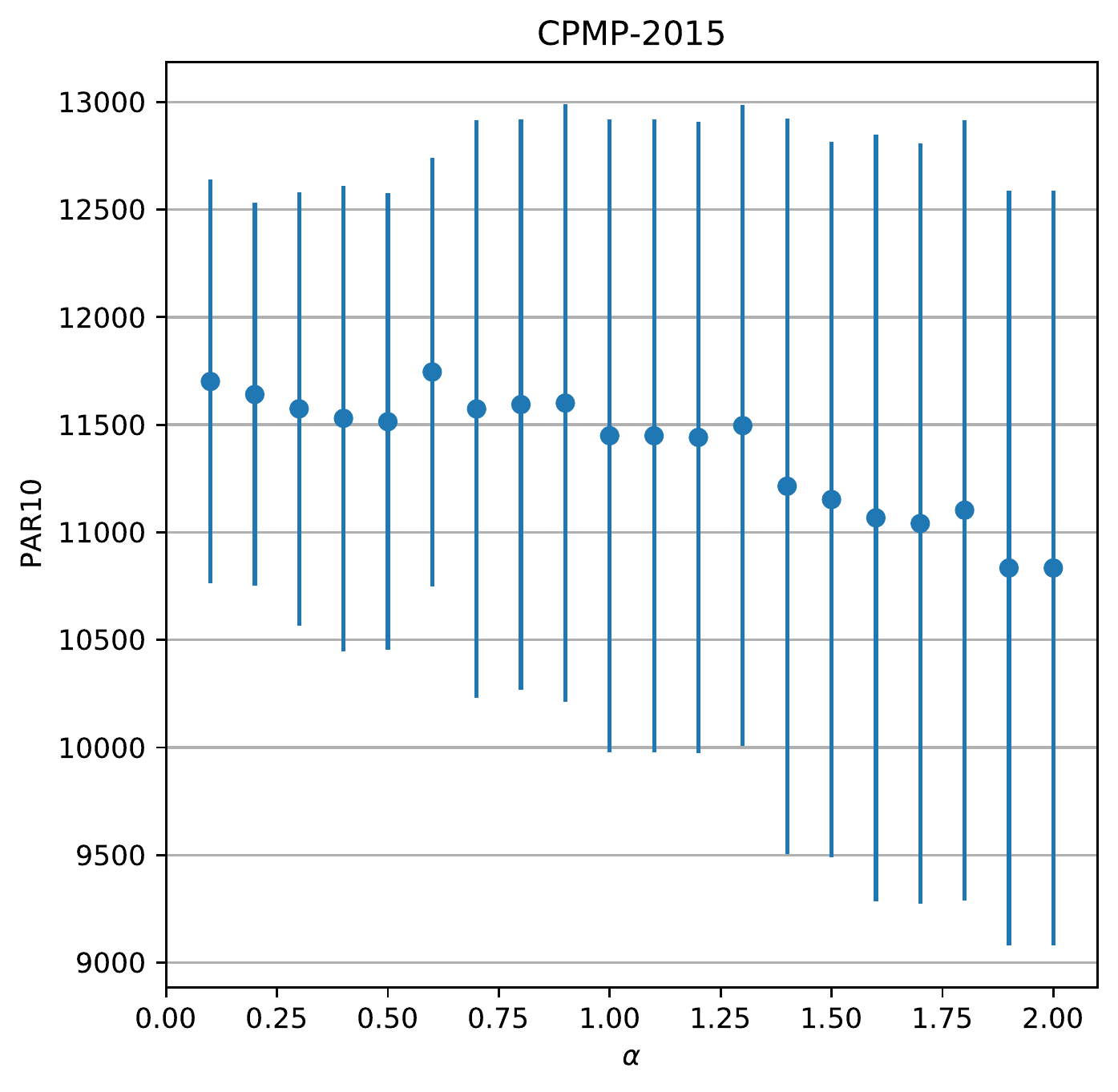}
	\label{fig:app_sensitivity_CPMP-2015_alpha_rand_bclinucb_rev}
\end{subfigure}
\begin{subfigure}{0.25\textwidth}
	\includegraphics[width=\linewidth]{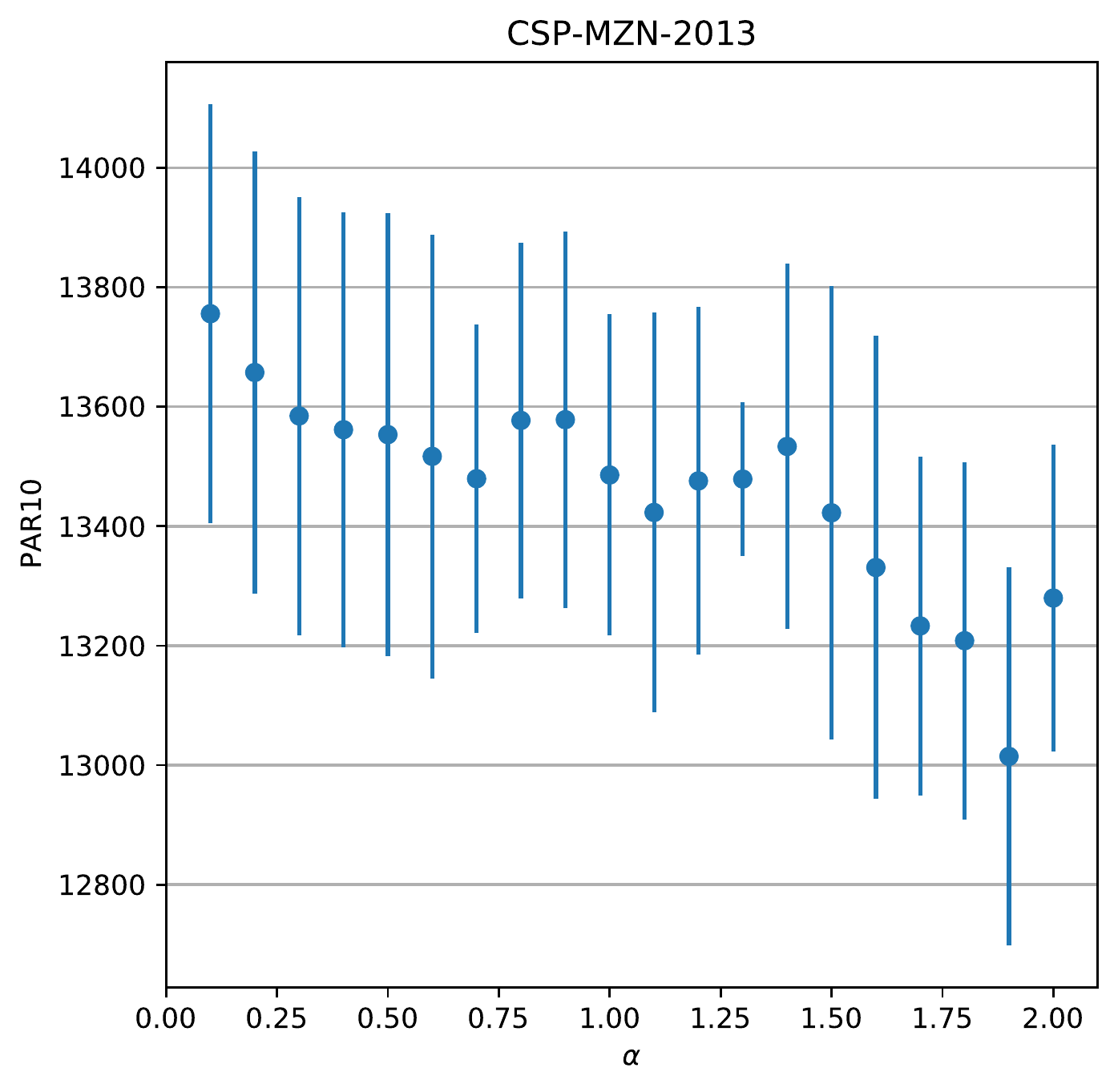}
	\label{fig:app_sensitivity_CSP-MZN-2013_alpha_rand_bclinucb_rev}
\end{subfigure}
\begin{subfigure}{0.25\textwidth}
	\includegraphics[width=\linewidth]{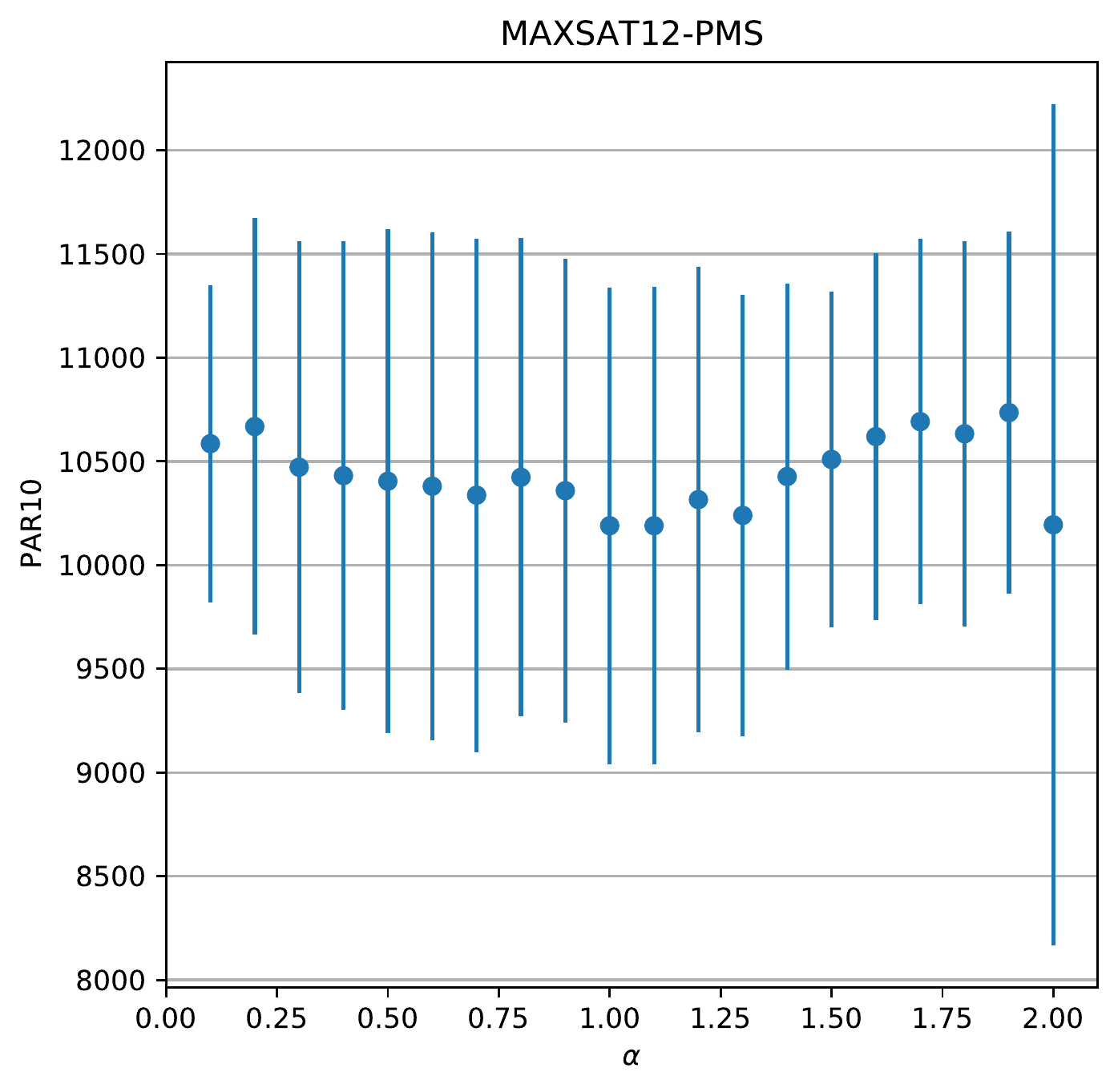}
	\label{fig:app_sensitivity_MAXSAT12-PMS_alpha_rand_bclinucb_rev}
\end{subfigure}
\begin{subfigure}{0.25\textwidth}
	\includegraphics[width=\linewidth]{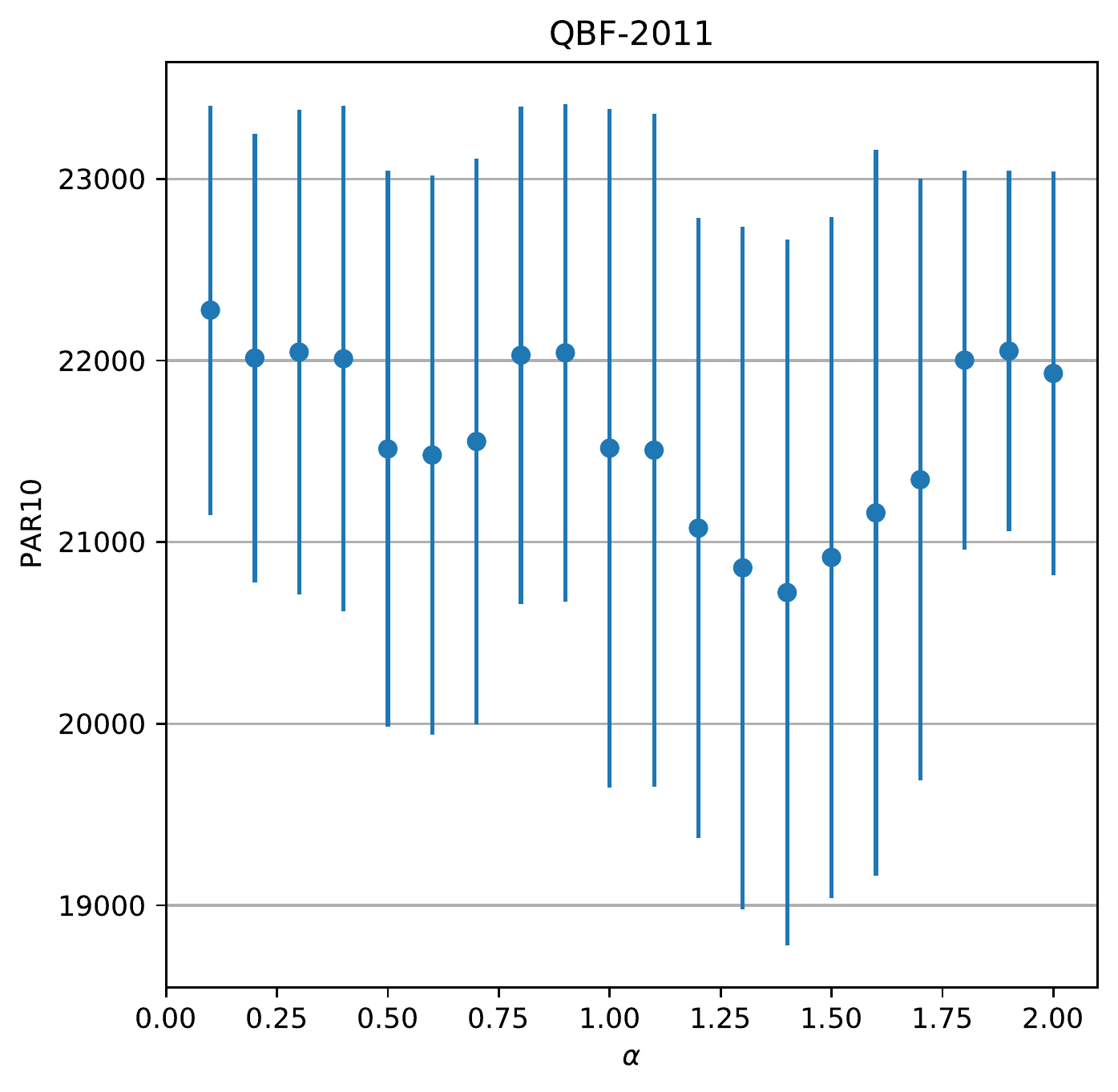}
	\label{fig:app_sensitivity_QBF-2011_alpha_rand_bclinucb_rev}
\end{subfigure}
\begin{subfigure}{0.25\textwidth}
	\includegraphics[width=\linewidth]{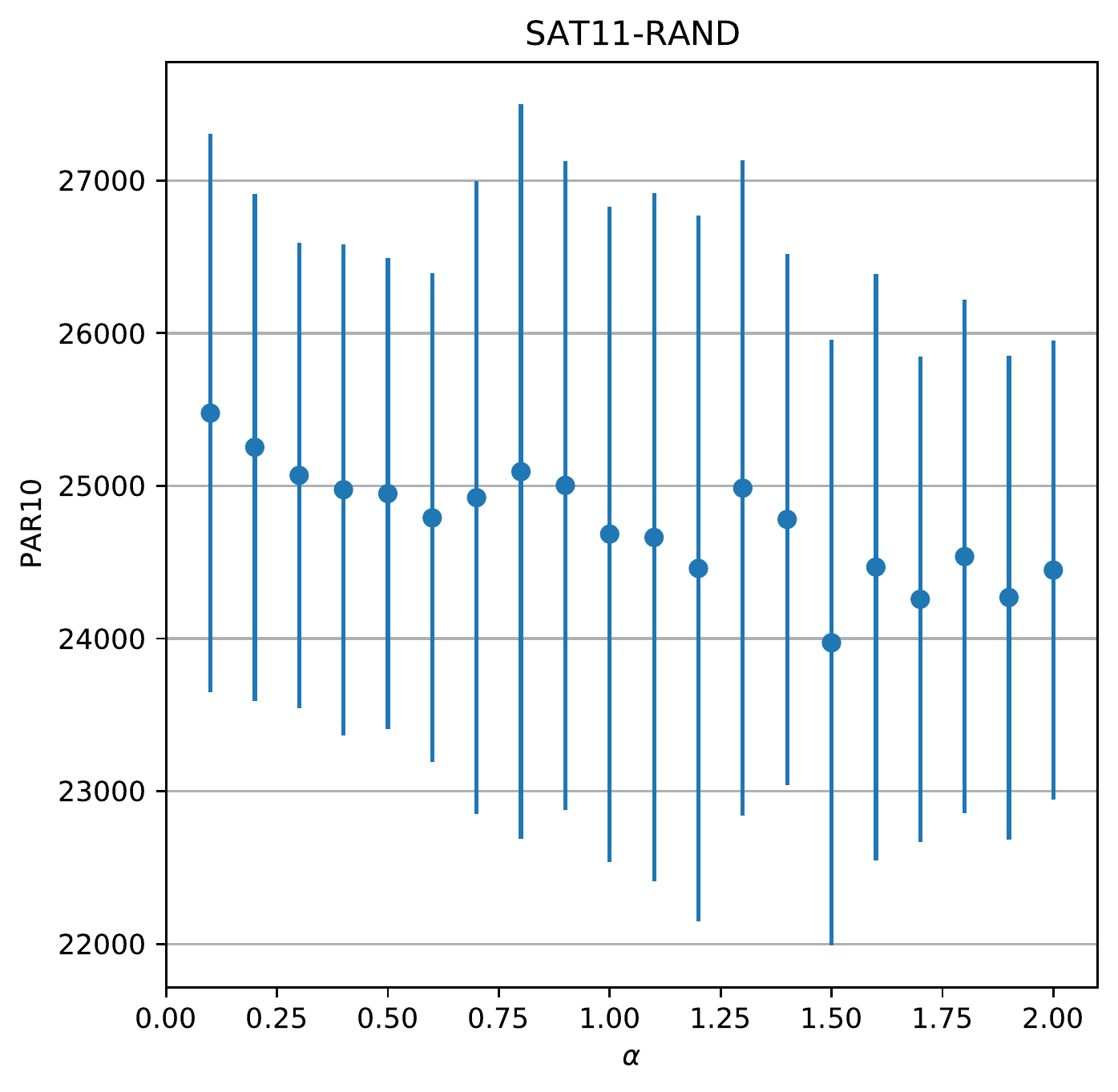}
	\label{fig:app_sensitivity_SAT11-RAND_alpha_rand_bclinucb_rev}
\end{subfigure}
\caption{Sensitivity analysis for parameter $\alpha$ of approach rand\_bclinucb\_rev.}
\label{fig:app_sensitivity_rand_bclinucb_rev_alpha}
\end{figure}

\begin{figure}[htb]
	\centering
\begin{subfigure}{0.25\textwidth}
	\includegraphics[width=\linewidth]{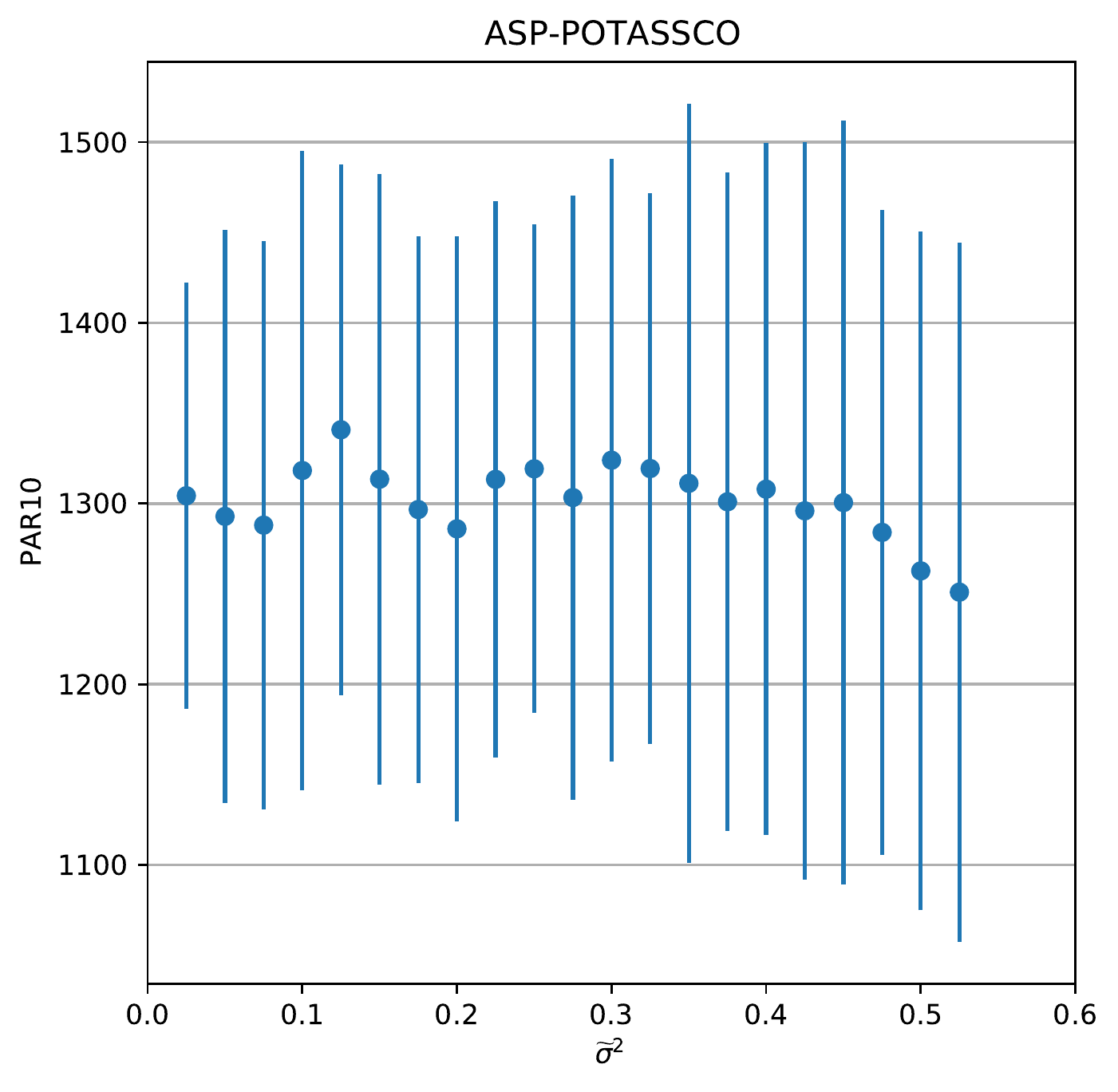}
	\label{fig:app_sensitivity_ASP-POTASSCO_randsigma_rand_bclinucb_rev}
\end{subfigure}
\begin{subfigure}{0.25\textwidth}
	\includegraphics[width=\linewidth]{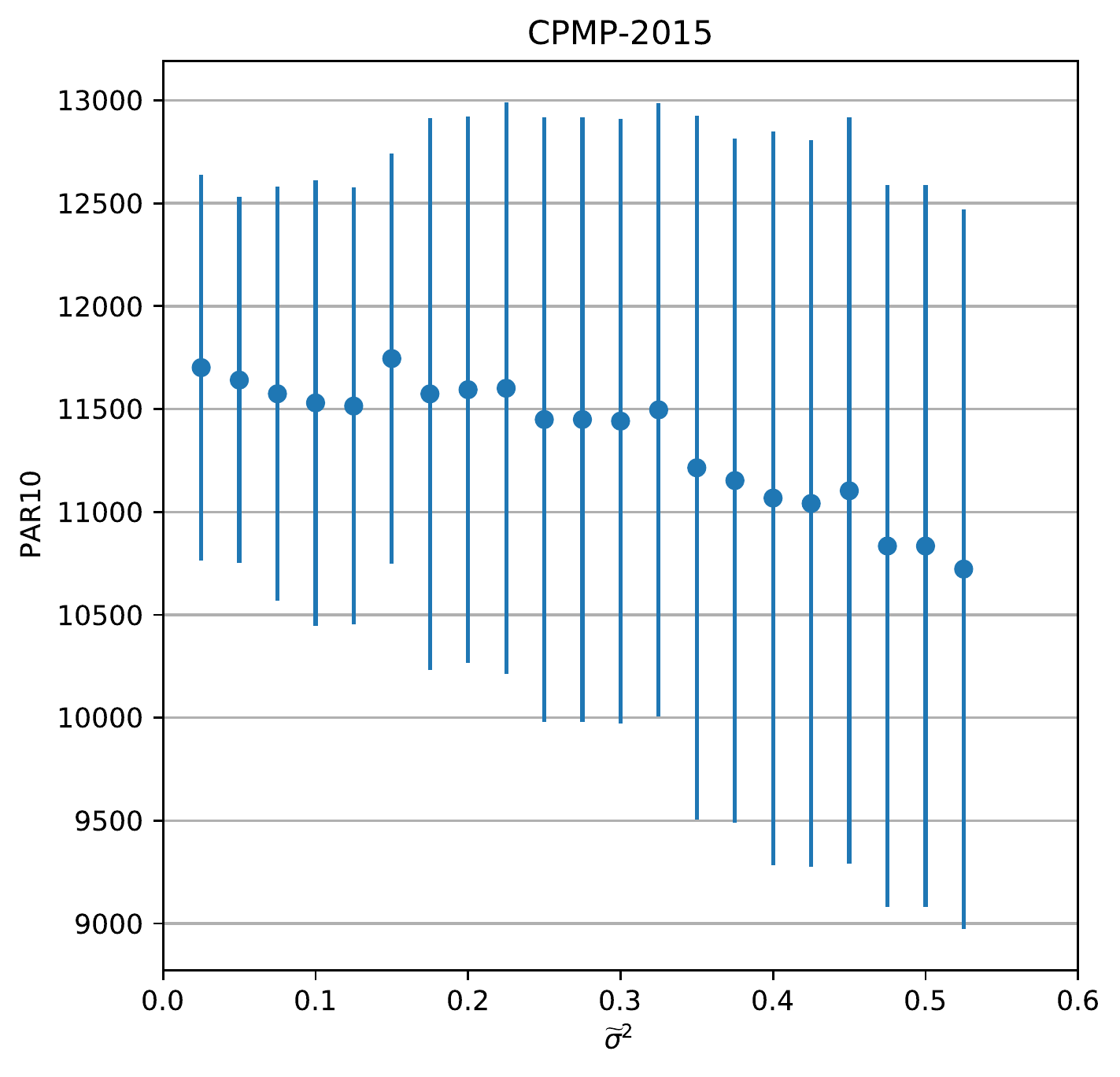}
	\label{fig:app_sensitivity_CPMP-2015_randsigma_rand_bclinucb_rev}
\end{subfigure}
\begin{subfigure}{0.25\textwidth}
	\includegraphics[width=\linewidth]{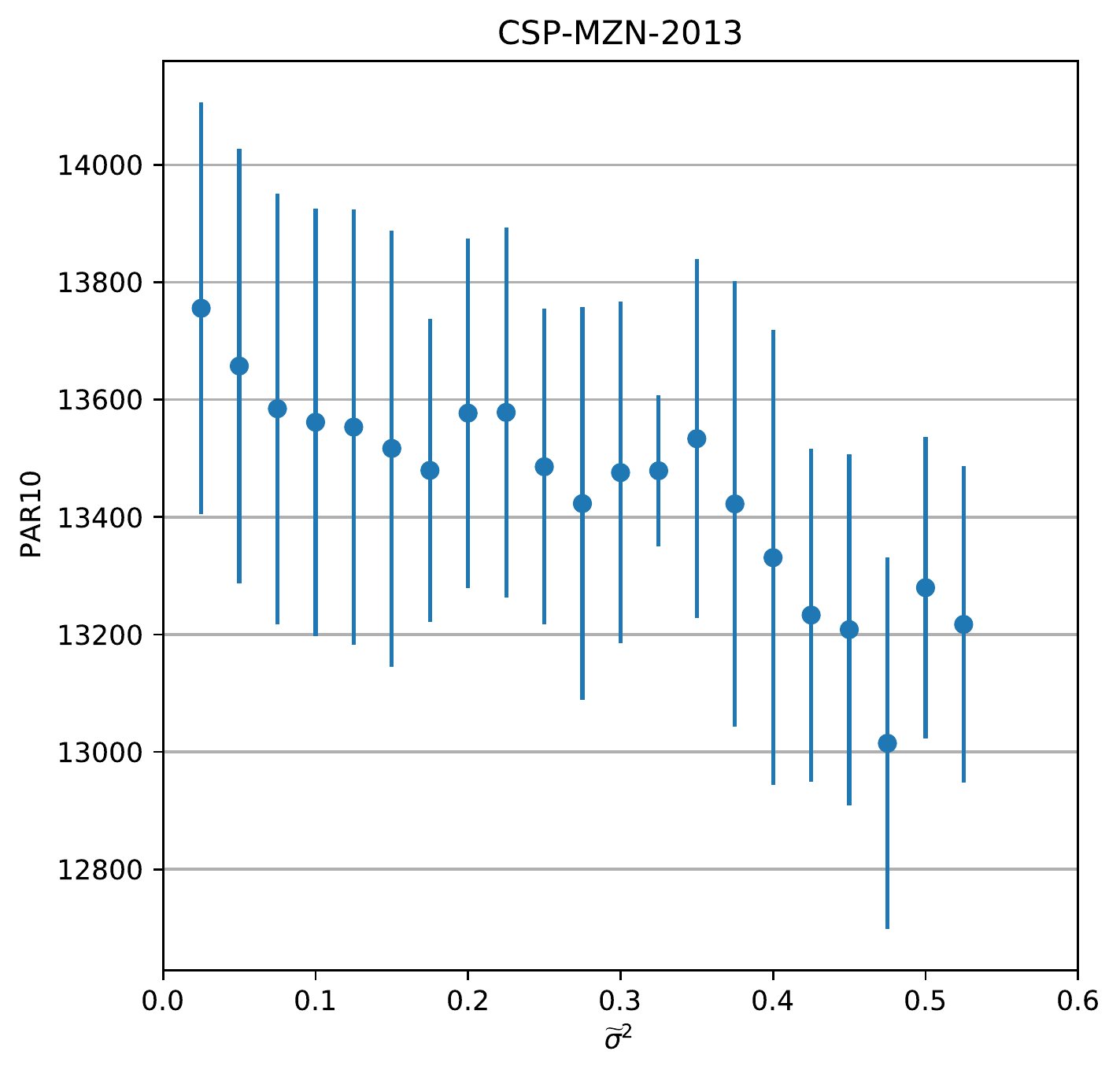}
	\label{fig:app_sensitivity_CSP-MZN-2013_randsigma_rand_bclinucb_rev}
\end{subfigure}
\begin{subfigure}{0.25\textwidth}
	\includegraphics[width=\linewidth]{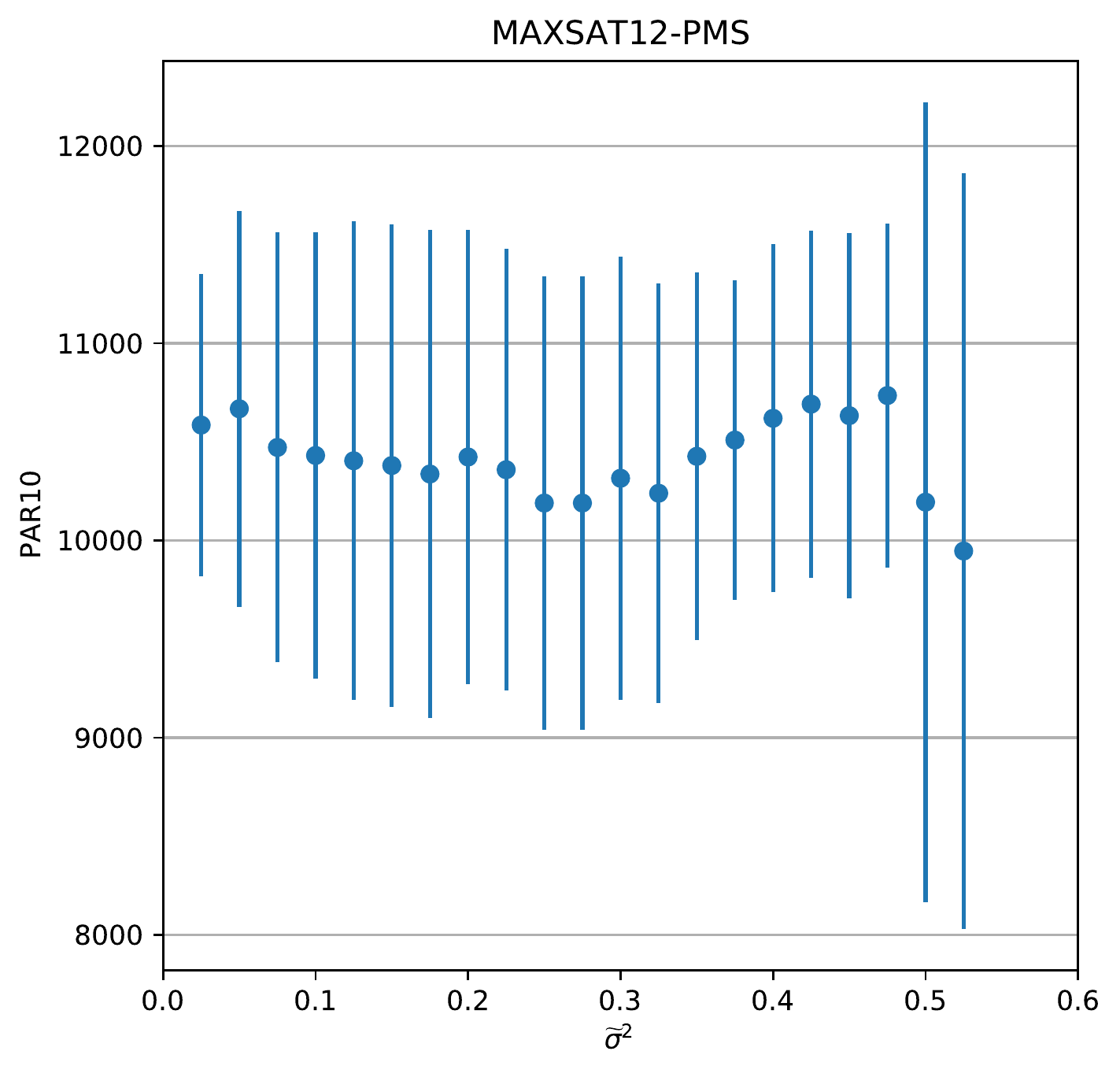}
	\label{fig:app_sensitivity_MAXSAT12-PMS_randsigma_rand_bclinucb_rev}
\end{subfigure}
\begin{subfigure}{0.25\textwidth}
	\includegraphics[width=\linewidth]{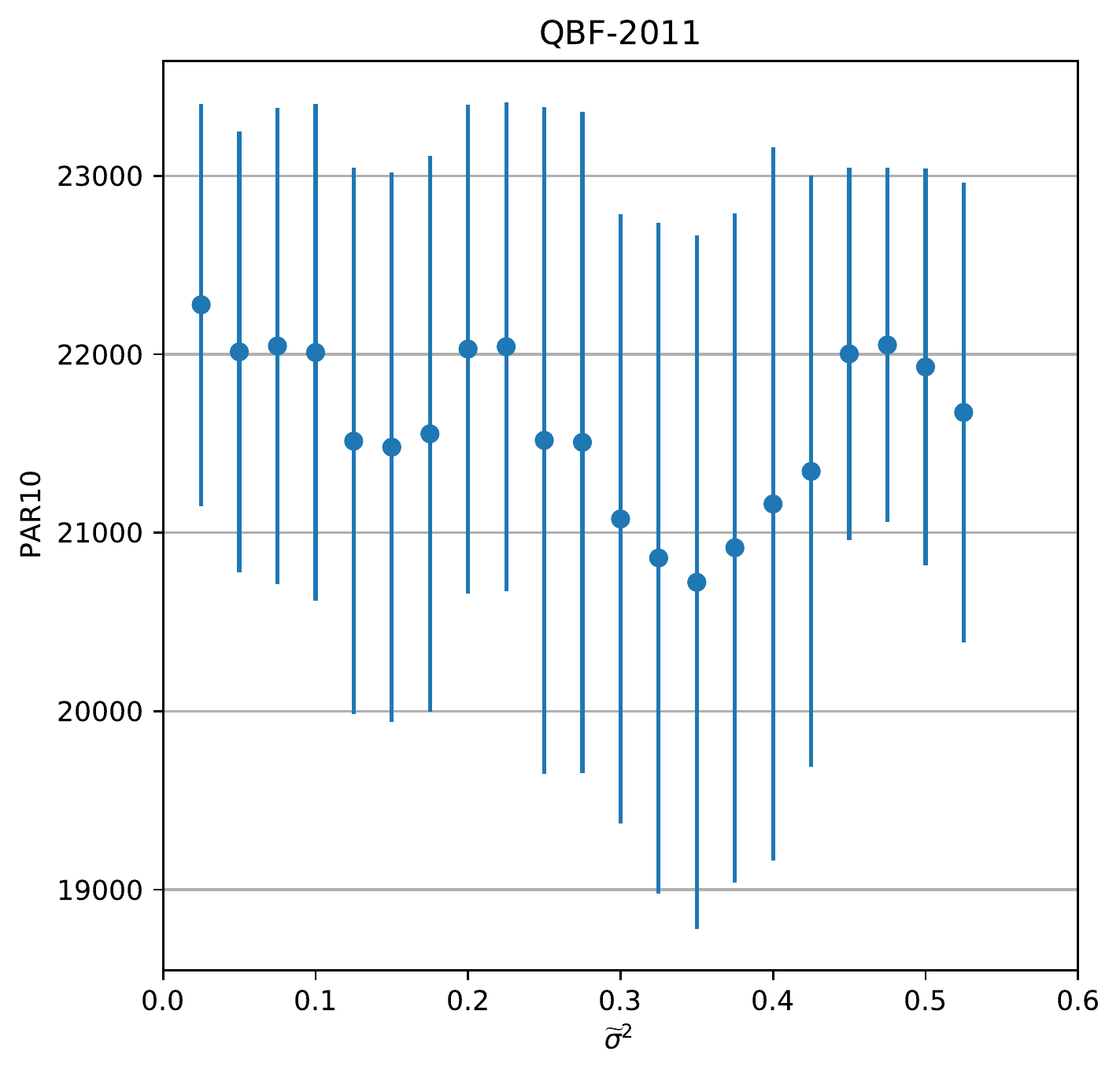}
	\label{fig:app_sensitivity_QBF-2011_randsigma_rand_bclinucb_rev}
\end{subfigure}
\begin{subfigure}{0.25\textwidth}
	\includegraphics[width=\linewidth]{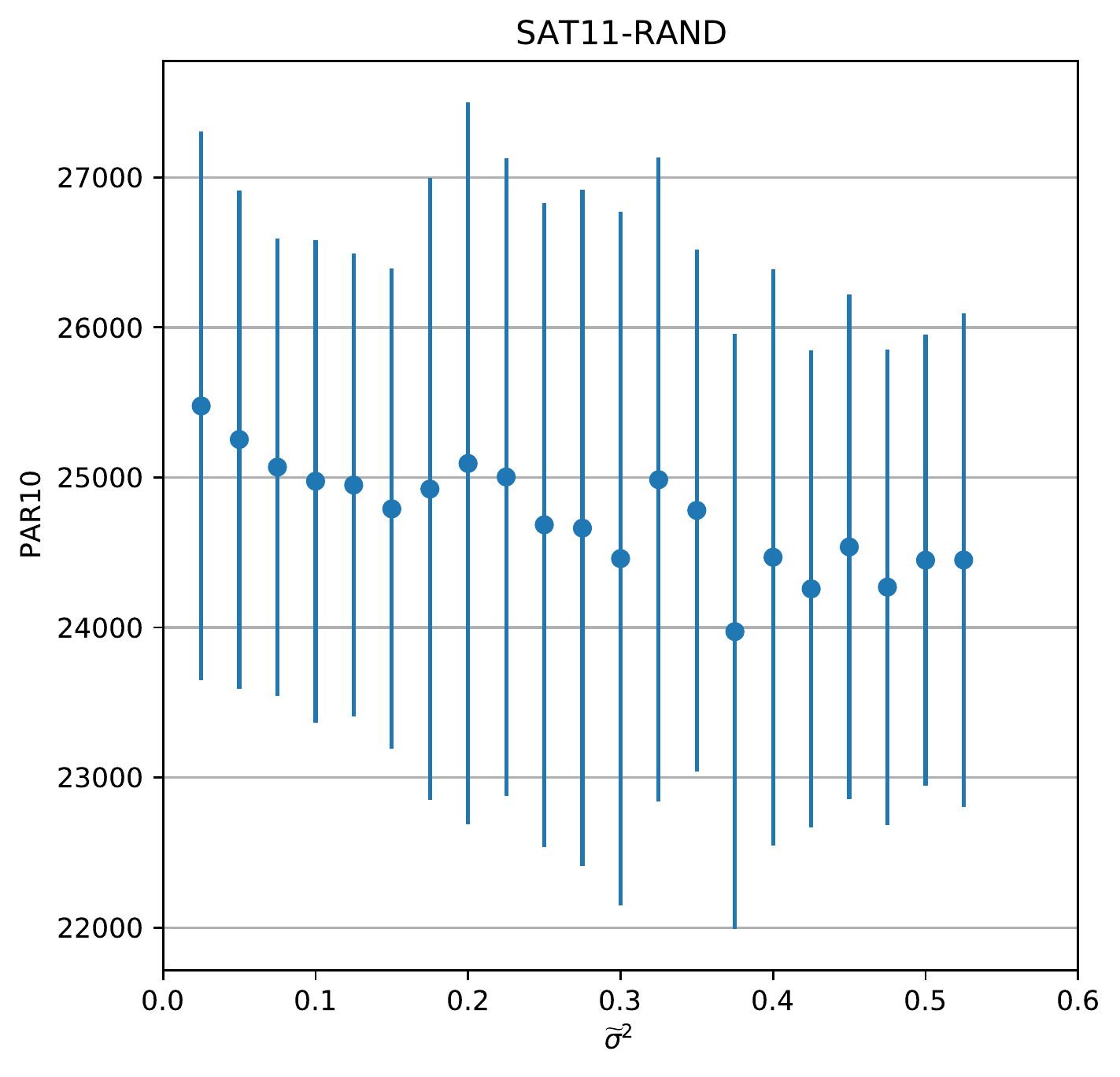}
	\label{fig:app_sensitivity_SAT11-RAND_randsigma_rand_bclinucb_rev}
\end{subfigure}
\caption{Sensitivity analysis for parameter $\widetilde{\sigma}^2$ of approach rand\_bclinucb\_rev.}
\label{fig:app_sensitivity_rand_bclinucb_rev_randsigma}
\end{figure}

\section{Detailed Performance Values}
Table \ref{tab:full_results} shows the average PAR10 scores (averaged over 10 seeds) and the corresponding standard deviation of all discussed approach variants and all Degroote variants for reference. Once again, the best value for each scenario is printed in bold, whereas the second best is underlined. As elaborated on in the main paper, the Thompson variants achieve the best performance.

\begin{table}[ht]
    \centering
    \rotatebox{270}{
    \resizebox*{0.9\textheight}{!}{
        \begin{tabular}{l|lllllllllllll}
\toprule
\multicolumn{1}{c|}{\rotatebox{90}{Approach}} & \multicolumn{1}{c}{\rotatebox{90}{bj\_thompson\_rev}} & \multicolumn{1}{c}{\rotatebox{90}{bj\_thompson}} & \multicolumn{1}{c}{\rotatebox{90}{thompson\_rev}} & \multicolumn{1}{c}{\rotatebox{90}{thompson}} & \multicolumn{1}{c}{\rotatebox{90}{bclinucb\_rev}} & \multicolumn{1}{c}{\rotatebox{90}{bclinucb}} & \multicolumn{1}{c}{\rotatebox{90}{blinducb\_rev}} & \multicolumn{1}{c}{\rotatebox{90}{blinducb}} & \multicolumn{1}{c}{\rotatebox{90}{rand\_bclinucb\_rev}} & \multicolumn{1}{c}{\rotatebox{90}{rand\_bclinucb}} & \multicolumn{1}{c}{\rotatebox{90}{rand\_blinducb\_rev}} & \multicolumn{1}{c}{\rotatebox{90}{rand\_blinducb}} & \multicolumn{1}{c}{\rotatebox{90}{degroote\_EpsilonGreedy\_LinearRegression}} \\
Scenario          &                                                       &                                                  &                                                   &                                              &                                                   &                                              &                                                   &                                              &                                                         &                                                    &                                                         &                                                    &                                                                               \\
\midrule
ASP-POTASSCO           &                                 929.45 $\pm$ 86.72 &                               949.38 $\pm$ 62.38 &                       \textbf{902.64} $\pm$ 78.43 &               \underline{916.28} $\pm$ 72.73 &                              1264.98 $\pm$ 303.58 &                         1361.64 $\pm$ 239.21 &                               1471.79 $\pm$ 43.06 &                         1204.20 $\pm$ 161.11 &                               1319.22 $\pm$ 135.21 &                                1232.59 $\pm$ 64.63 &                                1280.76 $\pm$ 24.99 &                               1198.79 $\pm$ 135.63 &                                1047.13 $\pm$ 46.50 \\
BNSL-2016              &                               9656.86 $\pm$ 318.43 &                             9638.04 $\pm$ 378.05 &                  \underline{9467.01} $\pm$ 252.52 &                \textbf{9361.41} $\pm$ 262.75 &                           31224.30 $\pm$ 12227.55 &                       35665.49 $\pm$ 8813.31 &                             30500.86 $\pm$ 852.64 &                       17369.40 $\pm$ 1553.72 &                             32178.72 $\pm$ 6600.21 &                             23650.56 $\pm$ 2177.42 &                              23563.48 $\pm$ 907.10 &                             18531.72 $\pm$ 2259.16 &                             12510.26 $\pm$ 1291.03 \\
CPMP-2015              &                  \underline{7818.47} $\pm$ 1187.55 &                            8241.01 $\pm$ 1164.85 &                             8158.72 $\pm$ 1268.83 &                        8499.38 $\pm$ 2059.18 &                            10441.68 $\pm$ 2449.07 &                       11437.13 $\pm$ 2265.92 &                             11667.65 $\pm$ 369.54 &                       11518.55 $\pm$ 1316.62 &                             11449.01 $\pm$ 1469.61 &                             10664.02 $\pm$ 1014.65 &                              10439.34 $\pm$ 308.03 &                             11350.81 $\pm$ 1338.76 &                      \textbf{6991.97} $\pm$ 501.36 \\
CSP-2010               &                               8138.63 $\pm$ 820.90 &                             8295.76 $\pm$ 699.43 &                  \underline{7892.67} $\pm$ 692.83 &                         8103.96 $\pm$ 887.50 &                             8296.32 $\pm$ 2170.35 &                        8837.75 $\pm$ 2487.60 &                             10348.84 $\pm$ 197.67 &                        9563.85 $\pm$ 1821.74 &                              9533.88 $\pm$ 1360.51 &                               9378.99 $\pm$ 270.41 &                               9847.48 $\pm$ 316.70 &                              8796.21 $\pm$ 1534.82 &                      \textbf{7593.13} $\pm$ 208.94 \\
CSP-MZN-2013           &                               8291.83 $\pm$ 589.63 &                             8207.06 $\pm$ 532.70 &                  \underline{8171.21} $\pm$ 594.49 &                         8472.17 $\pm$ 760.25 &                             12665.29 $\pm$ 647.82 &                       15060.69 $\pm$ 1033.48 &                              14646.81 $\pm$ 64.27 &                        12113.14 $\pm$ 593.67 &                              13485.81 $\pm$ 268.79 &                               12516.57 $\pm$ 90.27 &                               13551.04 $\pm$ 95.32 &                              11595.72 $\pm$ 914.74 &                      \textbf{8034.62} $\pm$ 113.78 \\
CSP-Minizinc-Time-2016 &                      \textbf{4741.99} $\pm$ 505.13 &                             4811.54 $\pm$ 409.79 &                  \underline{4759.50} $\pm$ 306.03 &                         4942.91 $\pm$ 326.89 &                             6284.19 $\pm$ 1377.37 &                         7515.73 $\pm$ 831.41 &                              7138.65 $\pm$ 283.28 &                         5944.38 $\pm$ 481.06 &                               7043.06 $\pm$ 440.10 &                               6104.17 $\pm$ 404.23 &                               6291.28 $\pm$ 431.44 &                               5353.62 $\pm$ 364.20 &                               5258.70 $\pm$ 406.91 \\
GRAPHS-2015            &                            4.11e+07 $\pm$ 3.71e+06 &                          4.13e+07 $\pm$ 4.39e+06 &                           4.21e+07 $\pm$ 3.37e+06 &          \underline{3.97e+07} $\pm$ 4.76e+06 &                           1.53e+08 $\pm$ 1.15e+08 &                      1.53e+08 $\pm$ 1.15e+08 &                           1.11e+08 $\pm$ 1.64e+07 &                      6.91e+07 $\pm$ 4.15e+07 &                            1.74e+08 $\pm$ 8.54e+07 &                            1.23e+08 $\pm$ 4.48e+06 &                            7.85e+07 $\pm$ 1.71e+07 &                            6.72e+07 $\pm$ 4.00e+07 &                   \textbf{3.45e+07} $\pm$ 1.42e+06 \\
MAXSAT-PMS-2016        &                   \underline{2774.15} $\pm$ 218.67 &                             2853.44 $\pm$ 210.21 &                              2808.51 $\pm$ 218.55 &                \textbf{2763.58} $\pm$ 134.14 &                            12391.75 $\pm$ 6441.49 &                        16674.43 $\pm$ 298.37 &                             15087.51 $\pm$ 399.80 &                        5785.25 $\pm$ 1369.27 &                              15107.67 $\pm$ 238.80 &                               7320.34 $\pm$ 877.35 &                              11887.90 $\pm$ 403.61 &                               5405.64 $\pm$ 495.98 &                               3279.54 $\pm$ 133.00 \\
MAXSAT-WPMS-2016       &                               6548.69 $\pm$ 183.77 &                 \underline{6304.15} $\pm$ 166.98 &                              6592.87 $\pm$ 210.25 &                         6527.34 $\pm$ 213.44 &                             7036.99 $\pm$ 3325.59 &                        16479.67 $\pm$ 482.00 &                             15948.34 $\pm$ 203.76 &                        12940.70 $\pm$ 727.41 &                              15326.14 $\pm$ 285.53 &                              11268.89 $\pm$ 443.14 &                              13492.59 $\pm$ 174.10 &                              12518.20 $\pm$ 585.24 &                      \textbf{6287.21} $\pm$ 541.69 \\
MAXSAT12-PMS           &                               5373.99 $\pm$ 348.92 &                             5347.39 $\pm$ 291.87 &                              5408.40 $\pm$ 482.42 &             \underline{5324.88} $\pm$ 208.49 &                             8927.94 $\pm$ 3294.10 &                        11787.82 $\pm$ 385.81 &                             11918.42 $\pm$ 279.90 &                        9829.22 $\pm$ 2477.59 &                             10189.82 $\pm$ 1148.46 &                               7761.18 $\pm$ 632.24 &                               9661.09 $\pm$ 254.48 &                              9580.82 $\pm$ 2684.49 &                      \textbf{5308.11} $\pm$ 129.30 \\
MAXSAT15-PMS-INDU      &                   \underline{3040.87} $\pm$ 196.70 &                             3046.05 $\pm$ 128.34 &                      \textbf{3032.08} $\pm$ 90.71 &                         3080.24 $\pm$ 130.48 &                             14579.90 $\pm$ 814.65 &                        16529.05 $\pm$ 274.72 &                             15482.08 $\pm$ 299.25 &                       12020.48 $\pm$ 1801.39 &                              15342.87 $\pm$ 313.35 &                               9842.87 $\pm$ 419.03 &                              12936.66 $\pm$ 495.89 &                              10416.87 $\pm$ 930.70 &                               3867.70 $\pm$ 255.98 \\
MIP-2016               &                      \textbf{7961.45} $\pm$ 765.53 &                 \underline{8081.57} $\pm$ 845.74 &                             8746.73 $\pm$ 1159.36 &                         8776.59 $\pm$ 823.11 &                           19319.18 $\pm$ 12407.96 &                      23090.18 $\pm$ 10445.65 &                            20305.81 $\pm$ 1787.80 &                        10280.18 $\pm$ 646.54 &                            21472.48 $\pm$ 10159.74 &                             21224.37 $\pm$ 3076.88 &                             16238.36 $\pm$ 1625.48 &                              10102.14 $\pm$ 783.52 &                             10644.68 $\pm$ 3405.18 \\
PROTEUS-2014           &                              14223.14 $\pm$ 766.02 &                   \textbf{13484.34} $\pm$ 541.83 &                             14115.69 $\pm$ 768.16 &            \underline{13550.56} $\pm$ 426.67 &                            20267.46 $\pm$ 3744.35 &                        23480.31 $\pm$ 416.14 &                             23330.42 $\pm$ 206.84 &                        23496.49 $\pm$ 957.96 &                              22713.85 $\pm$ 325.06 &                              21945.81 $\pm$ 518.38 &                              22380.67 $\pm$ 173.49 &                             23508.44 $\pm$ 1081.76 &                              15622.29 $\pm$ 784.60 \\
QBF-2011               &                              15253.81 $\pm$ 839.93 &                            15708.25 $\pm$ 784.81 &                             15178.86 $\pm$ 904.72 &            \underline{14902.82} $\pm$ 834.91 &                            20051.24 $\pm$ 3346.02 &                       23794.53 $\pm$ 2734.83 &                             24467.88 $\pm$ 266.89 &                       24105.19 $\pm$ 2408.37 &                             21517.52 $\pm$ 1867.77 &                              19664.68 $\pm$ 805.56 &                              21819.49 $\pm$ 613.82 &                             24190.56 $\pm$ 1452.98 &                     \textbf{13912.24} $\pm$ 356.69 \\
QBF-2014               &                      \textbf{3552.30} $\pm$ 192.75 &                             3629.40 $\pm$ 220.68 &                              3679.96 $\pm$ 256.03 &             \underline{3599.79} $\pm$ 193.80 &                              4305.61 $\pm$ 623.06 &                         5145.37 $\pm$ 760.32 &                               5669.88 $\pm$ 58.89 &                         4951.40 $\pm$ 630.45 &                               5414.49 $\pm$ 240.11 &                               4976.71 $\pm$ 128.30 &                                5213.18 $\pm$ 81.48 &                               4836.72 $\pm$ 617.95 &                               4116.15 $\pm$ 116.27 \\
QBF-2016               &                      \textbf{4770.36} $\pm$ 595.50 &                             5082.59 $\pm$ 718.71 &                              5045.16 $\pm$ 848.59 &             \underline{4937.47} $\pm$ 710.69 &                             5765.73 $\pm$ 1157.23 &                        7969.34 $\pm$ 1830.92 &                              7631.75 $\pm$ 323.55 &                         6080.61 $\pm$ 828.17 &                               8530.38 $\pm$ 426.05 &                               6019.49 $\pm$ 291.23 &                               6811.82 $\pm$ 209.17 &                               6234.42 $\pm$ 943.66 &                               5346.29 $\pm$ 210.05 \\
SAT03-16\_INDU          &                  \underline{12128.48} $\pm$ 477.79 &                   \textbf{11980.15} $\pm$ 193.67 &                             12154.46 $\pm$ 221.01 &                        12225.57 $\pm$ 501.47 &                            14973.15 $\pm$ 1628.24 &                       14800.11 $\pm$ 2043.18 &                             13682.55 $\pm$ 263.59 &                        12836.59 $\pm$ 385.99 &                             14204.97 $\pm$ 1429.35 &                              13505.17 $\pm$ 405.65 &                              13421.35 $\pm$ 251.79 &                              12671.99 $\pm$ 458.24 &                              12754.50 $\pm$ 200.55 \\
SAT11-HAND             &                             30436.08 $\pm$ 1196.65 &                           30484.08 $\pm$ 1379.35 &                             30085.51 $\pm$ 764.32 &            \underline{29547.31} $\pm$ 871.64 &                             32028.44 $\pm$ 132.51 &                       32086.20 $\pm$ 1315.24 &                             30670.46 $\pm$ 525.38 &                        33755.48 $\pm$ 470.48 &                              30710.59 $\pm$ 932.73 &                              31157.03 $\pm$ 843.87 &                              30519.78 $\pm$ 633.95 &                              33151.03 $\pm$ 689.91 &                     \textbf{29544.70} $\pm$ 952.78 \\
SAT11-INDU             &                              17083.58 $\pm$ 490.45 &                            17540.58 $\pm$ 530.82 &                 \underline{17028.84} $\pm$ 479.15 &                       17360.28 $\pm$ 1064.28 &                            18029.73 $\pm$ 1914.17 &                       17944.22 $\pm$ 1514.41 &                             17488.94 $\pm$ 420.92 &                        17407.75 $\pm$ 887.53 &                              17785.96 $\pm$ 942.15 &                              17900.53 $\pm$ 797.85 &                              17404.10 $\pm$ 671.54 &                              17792.97 $\pm$ 645.24 &                     \textbf{17018.24} $\pm$ 647.90 \\
SAT11-RAND             &                             19656.39 $\pm$ 3747.94 &               \underline{18061.78} $\pm$ 2770.70 &                            19061.88 $\pm$ 2522.11 &              \textbf{16535.77} $\pm$ 2649.93 &                            22349.94 $\pm$ 2672.21 &                       23399.06 $\pm$ 2254.92 &                             32256.56 $\pm$ 333.93 &                        32812.38 $\pm$ 484.62 &                             24684.78 $\pm$ 2146.11 &                              22754.43 $\pm$ 401.94 &                              28780.03 $\pm$ 577.18 &                              32382.10 $\pm$ 453.97 &                              21008.77 $\pm$ 530.22 \\
SAT12-ALL              &                               5110.38 $\pm$ 221.26 &                    \textbf{4720.22} $\pm$ 432.14 &                              5132.48 $\pm$ 395.74 &             \underline{4812.94} $\pm$ 387.46 &                              6433.41 $\pm$ 991.59 &                         7657.90 $\pm$ 673.88 &                              8548.46 $\pm$ 145.06 &                         8235.03 $\pm$ 393.14 &                               7388.34 $\pm$ 313.26 &                               7096.01 $\pm$ 138.67 &                                7417.48 $\pm$ 76.56 &                               8231.51 $\pm$ 319.87 &                               5650.32 $\pm$ 214.36 \\
SAT12-HAND             &                               7707.60 $\pm$ 219.43 &                 \underline{7443.01} $\pm$ 180.51 &                              7509.02 $\pm$ 199.39 &                \textbf{7309.74} $\pm$ 138.02 &                             8086.51 $\pm$ 1043.74 &                         9345.40 $\pm$ 669.01 &                              8395.15 $\pm$ 289.83 &                         9944.53 $\pm$ 702.20 &                               8626.88 $\pm$ 250.21 &                               8214.43 $\pm$ 185.22 &                               8164.12 $\pm$ 126.91 &                               9560.68 $\pm$ 629.12 &                               7634.24 $\pm$ 267.89 \\
SAT12-INDU             &                               6227.99 $\pm$ 714.29 &                  \underline{4511.68} $\pm$ 76.33 &                              4945.79 $\pm$ 228.37 &                \textbf{4428.49} $\pm$ 142.91 &                             9741.33 $\pm$ 3131.30 &                        11083.35 $\pm$ 274.76 &                             10586.81 $\pm$ 417.60 &                         11709.07 $\pm$ 47.37 &                              10733.23 $\pm$ 400.92 &                               8703.58 $\pm$ 987.22 &                               9159.07 $\pm$ 827.64 &                               11568.96 $\pm$ 45.53 &                               4755.52 $\pm$ 206.95 \\
SAT12-RAND             &                               5489.81 $\pm$ 649.90 &                    \textbf{4008.79} $\pm$ 206.59 &                              4523.33 $\pm$ 170.56 &             \underline{4157.19} $\pm$ 277.10 &                             4541.08 $\pm$ 1857.92 &                        11024.73 $\pm$ 212.35 &                              11118.21 $\pm$ 91.24 &                         8270.41 $\pm$ 249.25 &                              10699.76 $\pm$ 153.19 &                               9015.20 $\pm$ 189.65 &                               10432.59 $\pm$ 97.18 &                               8109.35 $\pm$ 140.40 &                               5023.73 $\pm$ 174.68 \\
SAT15-INDU             &                               7885.50 $\pm$ 583.23 &                    \textbf{7700.27} $\pm$ 310.65 &                  \underline{7856.08} $\pm$ 522.84 &                         7879.61 $\pm$ 546.81 &                             9501.14 $\pm$ 3536.88 &                       23644.43 $\pm$ 2846.82 &                              9691.58 $\pm$ 716.04 &                       11203.29 $\pm$ 3070.49 &                             12188.32 $\pm$ 3435.99 &                              10349.12 $\pm$ 897.36 &                               9599.63 $\pm$ 676.35 &                             10656.72 $\pm$ 3172.70 &                               8220.22 $\pm$ 525.13 \\
SAT18-EXP              &                  \underline{24942.85} $\pm$ 847.88 &                            25201.41 $\pm$ 681.42 &                    \textbf{24906.56} $\pm$ 540.36 &                       25015.28 $\pm$ 1031.38 &                            25310.89 $\pm$ 1669.61 &                       33234.71 $\pm$ 5109.02 &                             26762.26 $\pm$ 760.54 &                       29573.37 $\pm$ 2210.54 &                              27153.72 $\pm$ 924.36 &                              26709.51 $\pm$ 679.87 &                             26361.79 $\pm$ 1018.39 &                             30237.60 $\pm$ 1497.64 &                              25272.35 $\pm$ 881.19 \\
TSP-LION2015           &                      \textbf{1160.50} $\pm$ 396.73 &                             1226.11 $\pm$ 309.42 &                              1411.06 $\pm$ 329.16 &             \underline{1213.43} $\pm$ 370.00 &                             8324.41 $\pm$ 4816.93 &                        8436.53 $\pm$ 4575.03 &                              4810.21 $\pm$ 212.49 &                         9991.79 $\pm$ 294.01 &                              9168.26 $\pm$ 2612.01 &                               6255.37 $\pm$ 194.34 &                               4588.53 $\pm$ 191.98 &                               9422.52 $\pm$ 655.46 &                               1634.79 $\pm$ 112.29 \\ \midrule
avgrank                &                                           3.074074 &                                         3.074074 &                                          \underline{2.925926} &                                     \textbf{2.555556} &                                               8.0 &                                    11.518519 &                                         10.740741 &                                     9.481481 &                                          10.592593 &                                           8.074074 &                                           8.518519 &                                           8.777778 &                                           3.666667 \\
\bottomrule
\end{tabular}
    }
    }
    \caption{Average PAR10 scores (averaged over 10 seeds) and the corresponding standard deviation of all discussed approach variants and the Degroote approach.}
    \label{tab:full_results}
\end{table}

In order to represent the performance of our approaches in a more detailed way than it is the case in Table \ref{tab:full_results}, we have plotted in Figure \ref{fig:app_cumulative_regret_1} the averaged cumulative PAR10 regret curves (regret wrt. the oracle) of the best Thompson, the best LinUCB and the Degroote approach along with their standard deviation.
Here, the cumulative regret up to time $T$ of an approach $s$ is defined as $\sum_{t=1}^T l(i_t,s(h_t, i_t)) - \sum_{t=1}^T l(i_t,s^*(h_t, i_t)),$ where $s^*$ is the oracle and $l$ as in \eqref{eq:loss_at_t} with $\mathcal{P}(C)=10C$.
It is not difficult to see that LinUCB cannot compete with the other approaches in many cases and also features a much larger standard deviation than the others. However, in several cases such as Figures \ref{fig:app_cumulative_regret_qbf-2014}, \ref{fig:app_cumulative_regret_sat11-hand}, or  \ref{fig:app_cumulative_regret_sat18-exp}, the differences become much more subtle.
Comparing the Thompson variant with the Degroote approach, we see that the former is at least competitive with the latter on almost all scenarios, while being even better on some scenarios (e.g. Fig. \ref{fig:app_cumulative_regret_bnsl-2016}, \ref{fig:app_cumulative_regret_maxsat15-pms-indu}, \ref{fig:app_cumulative_regret_proteus-2014}, \ref{fig:app_cumulative_regret_qbf-2014}).
Of course, there are also a few scenarios where the Degroote approach performs better  (e.g. Fig. \ref{fig:app_cumulative_regret_qbf-2011}).
%
%On the one hand, Thompson variant is ahead of the Degroote approach in 12 out of 27 scenarios (e.g. Fig. \ref{fig:app_cumulative_regret_proteus-2014}), although there are several scenarios where the two behave very similar (e.g. Fig. \ref{fig:app_cumulative_regret_sat11-hand}). On the other hand, there are also a few cases where the Degroote approach performs better (e.g. Fig. \ref{fig:app_cumulative_regret_qbf-2011}).
%
\begin{figure}[htb]
	\centering
\begin{subfigure}{0.3\textwidth}
	 \includegraphics[width=\linewidth]{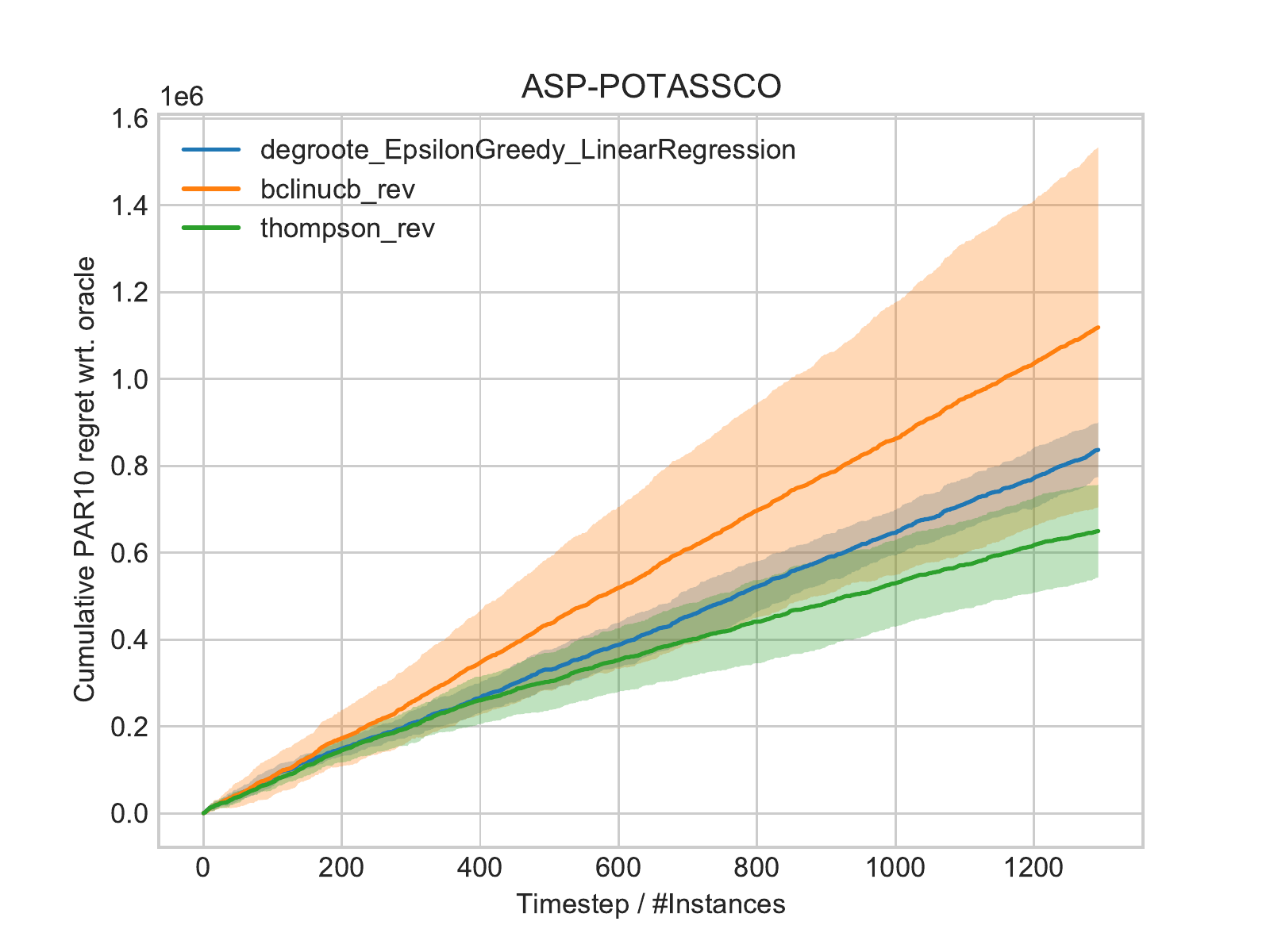}
	 \caption{}
	 \label{fig:app_cumulative_regret_asp-potassco}
\end{subfigure}
\begin{subfigure}{0.3\textwidth}
	 \includegraphics[width=\linewidth]{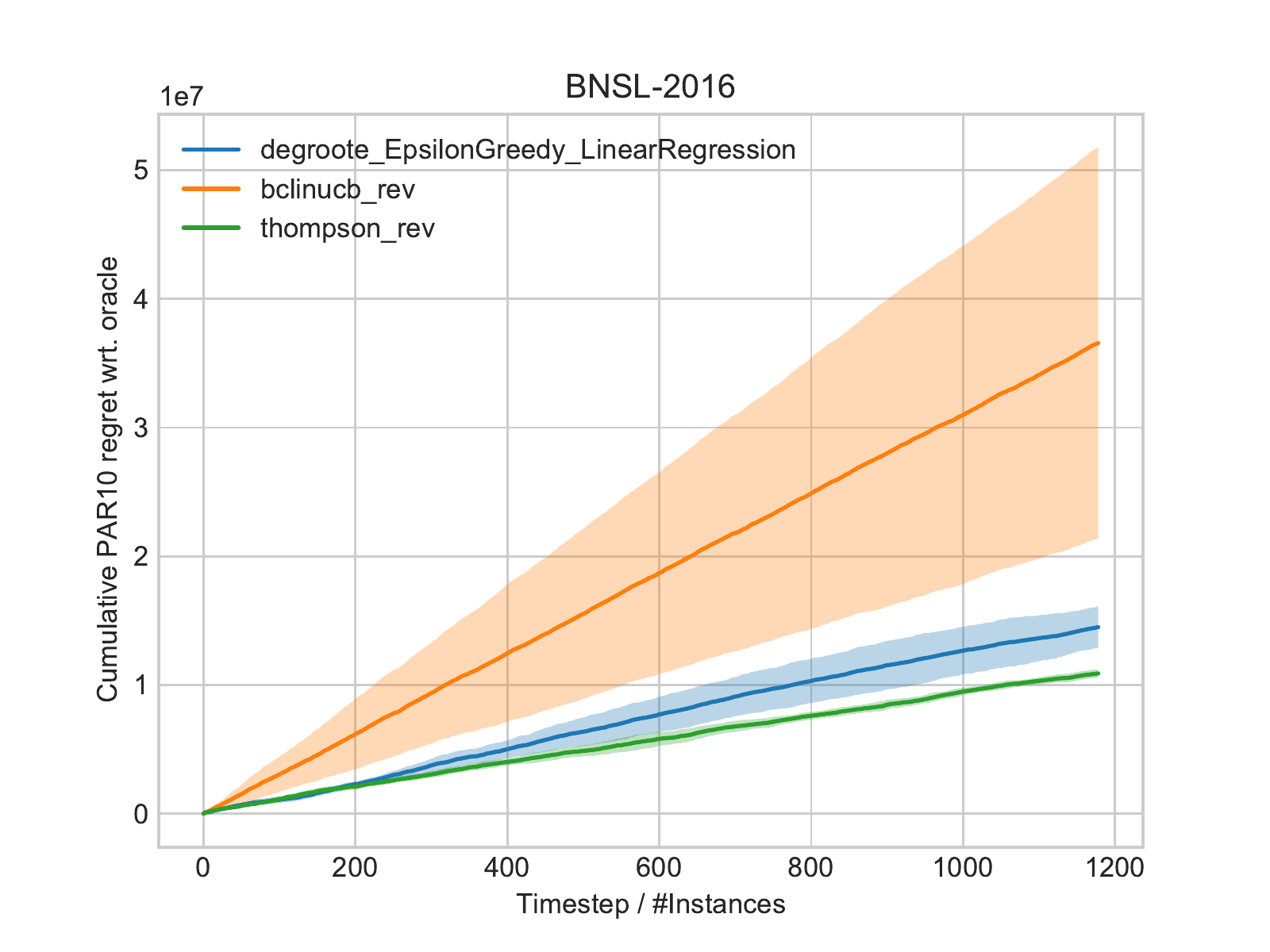}
	 \caption{}
	 \label{fig:app_cumulative_regret_bnsl-2016}
\end{subfigure}
\begin{subfigure}{0.3\textwidth}
	 \includegraphics[width=\linewidth]{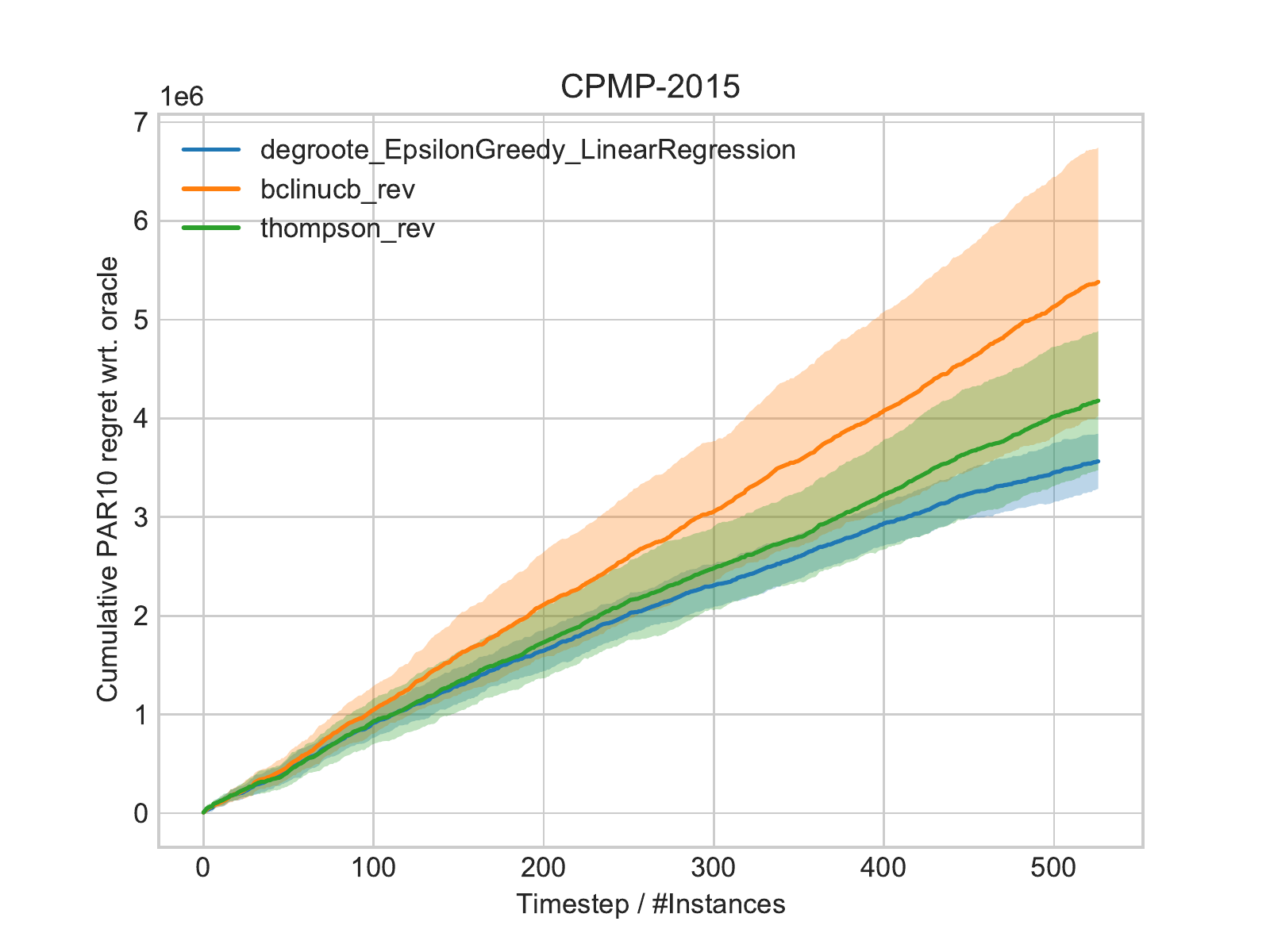}
	 \caption{}
	 \label{fig:app_cumulative_regret_cpmp-2015}
\end{subfigure}
\begin{subfigure}{0.3\textwidth}
	 \includegraphics[width=\linewidth]{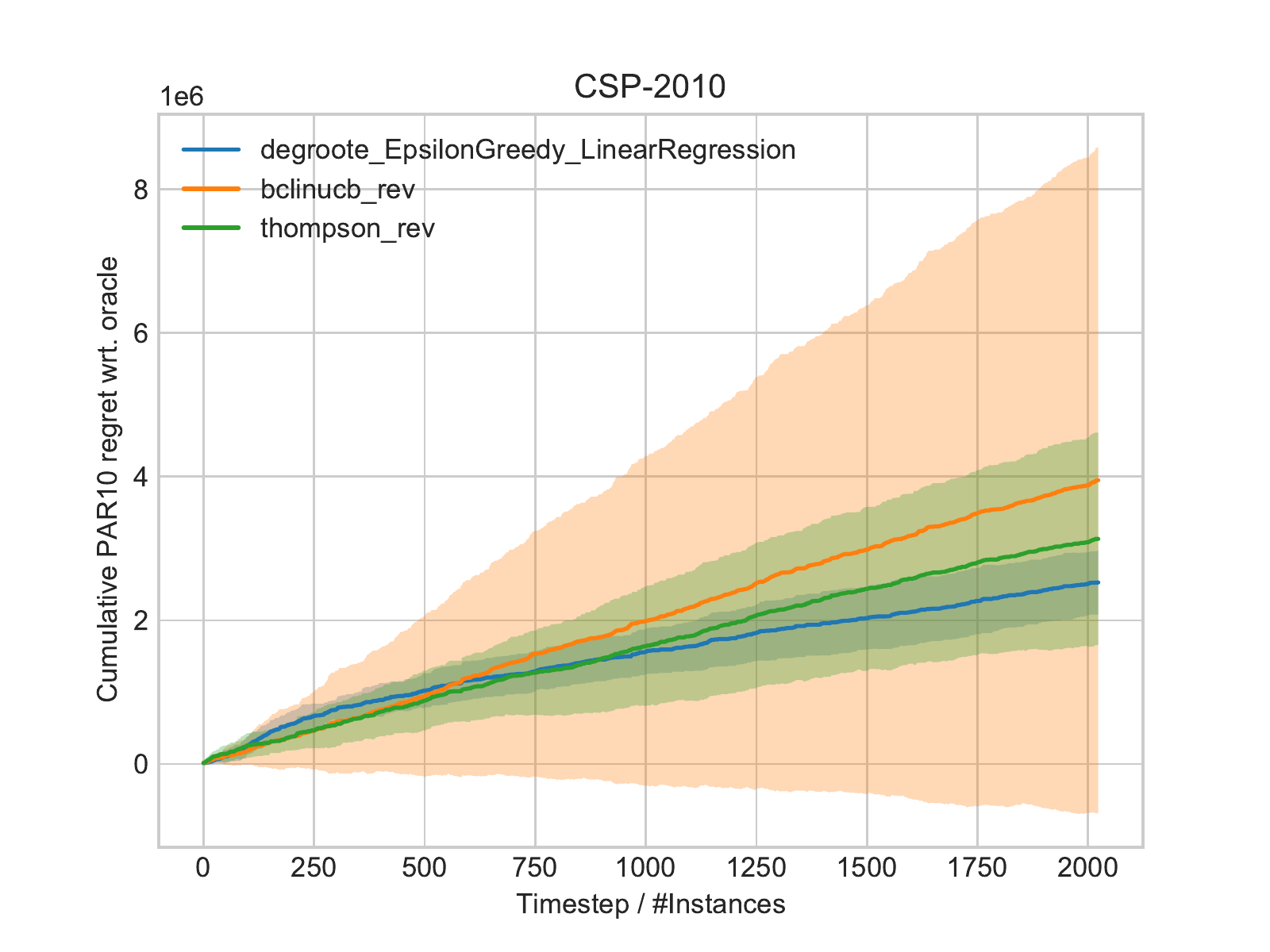}
	 \caption{}
	 \label{fig:app_cumulative_regret_csp-2010}
\end{subfigure}
\begin{subfigure}{0.3\textwidth}
	 \includegraphics[width=\linewidth]{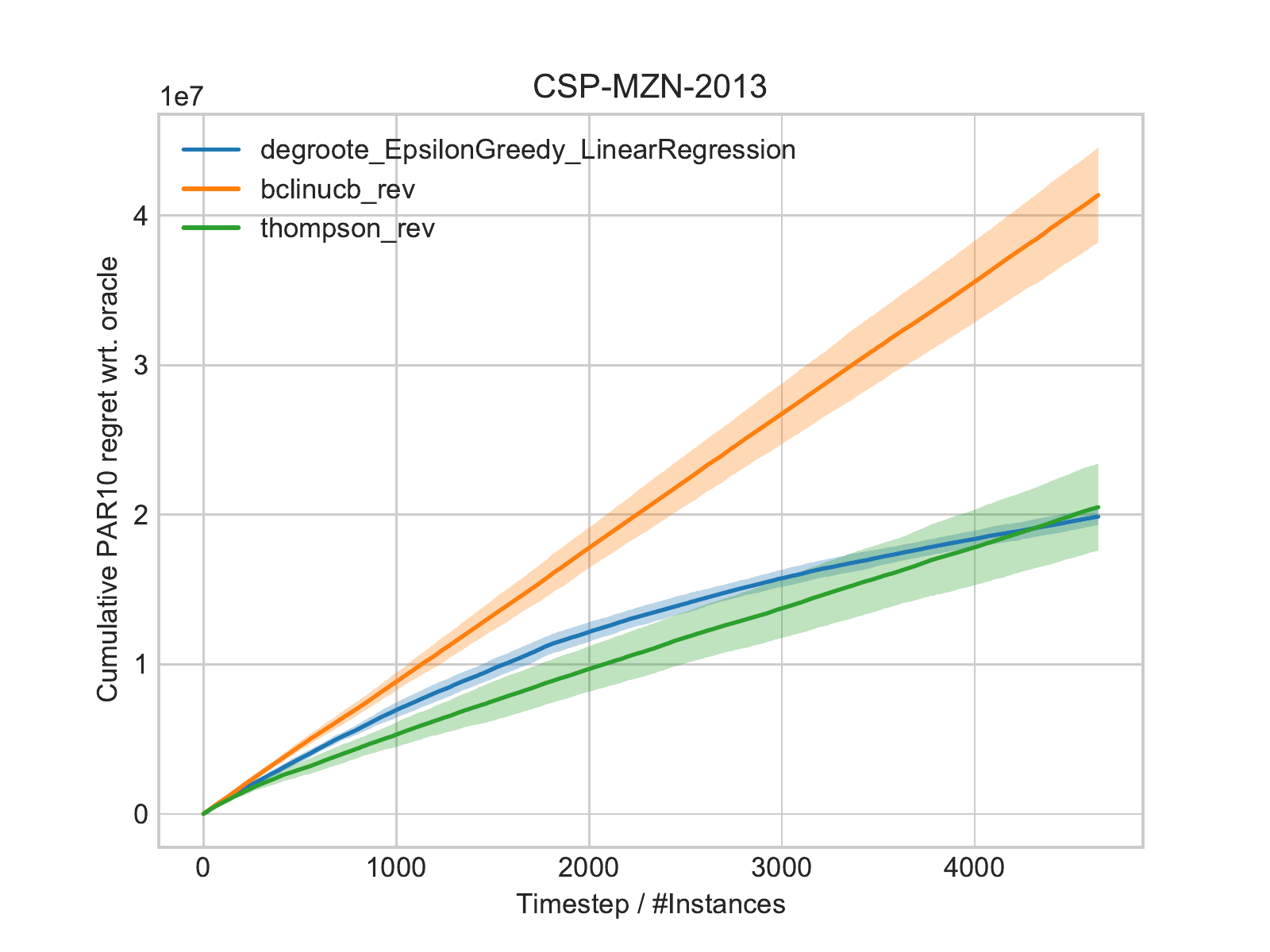}
	 \caption{}
	 \label{fig:app_cumulative_regret_csp-mzn-2013}
\end{subfigure}
\begin{subfigure}{0.3\textwidth}
	 \includegraphics[width=\linewidth]{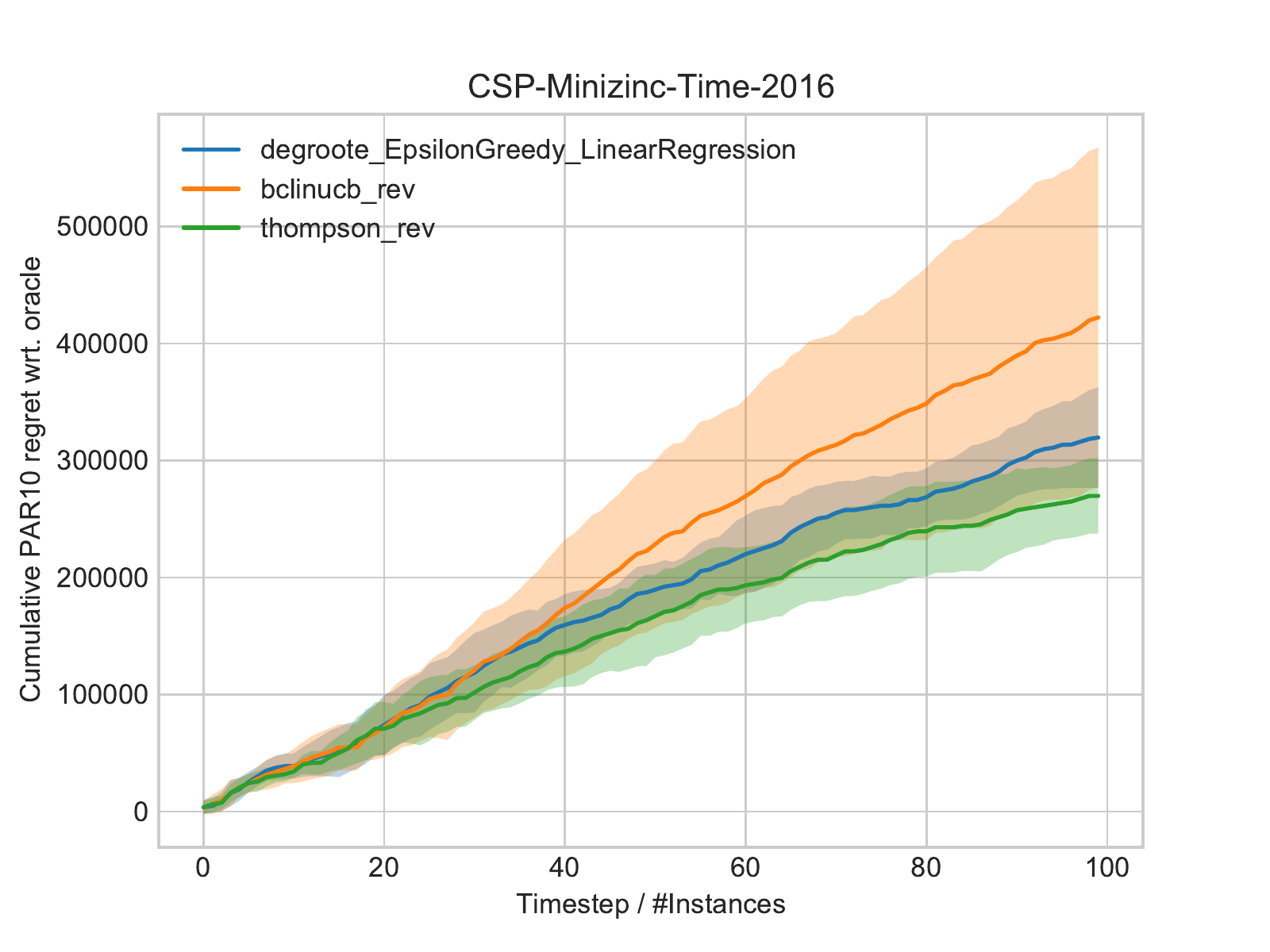}
	 \caption{}
	 \label{fig:app_cumulative_regret_csp-minizinc-time-2016}
\end{subfigure}
\begin{subfigure}{0.3\textwidth}
	 \includegraphics[width=\linewidth]{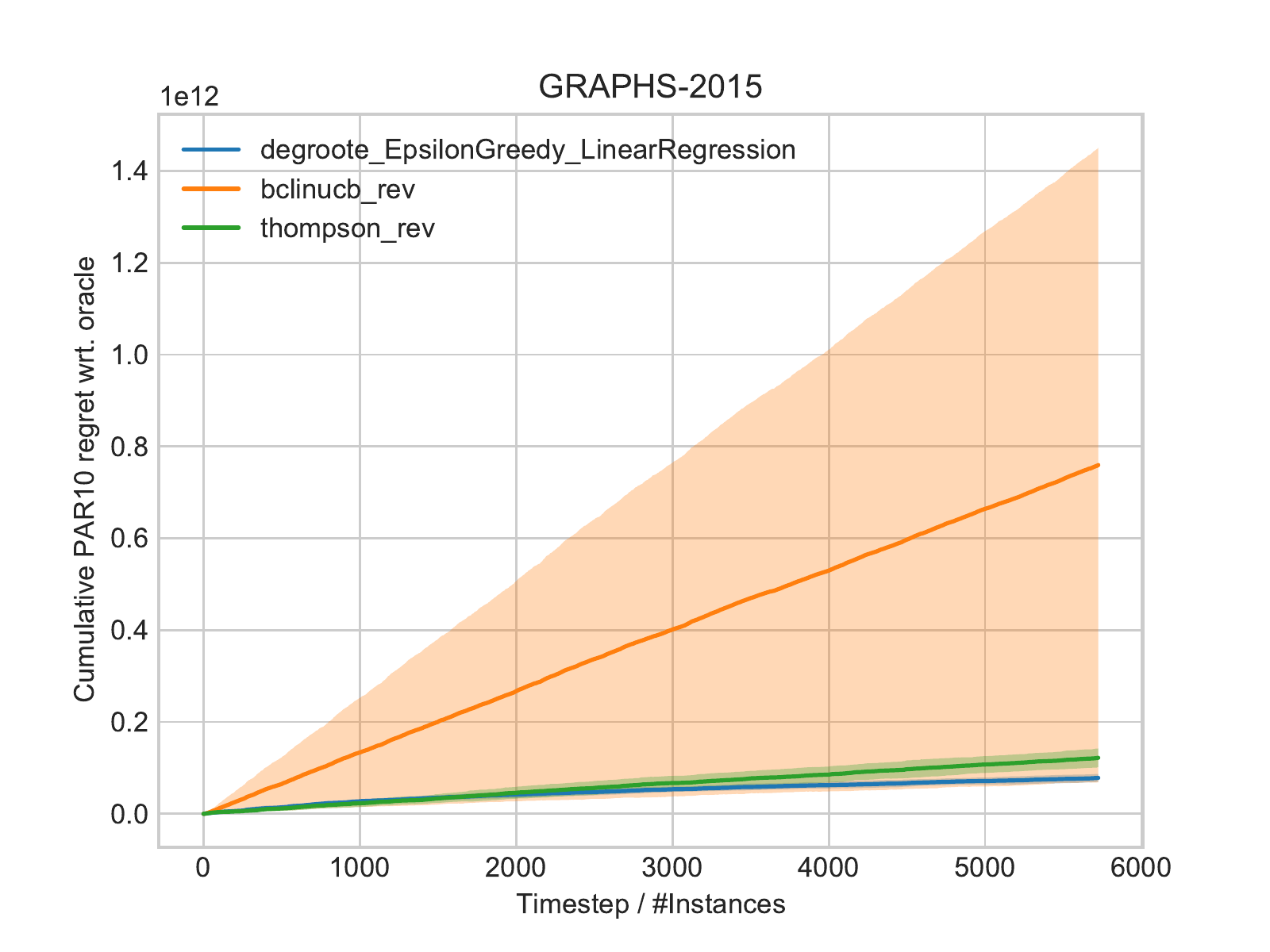}
	 \caption{}
	 \label{fig:app_cumulative_regret_graphs-2015}
\end{subfigure}
\begin{subfigure}{0.3\textwidth}
	 \includegraphics[width=\linewidth]{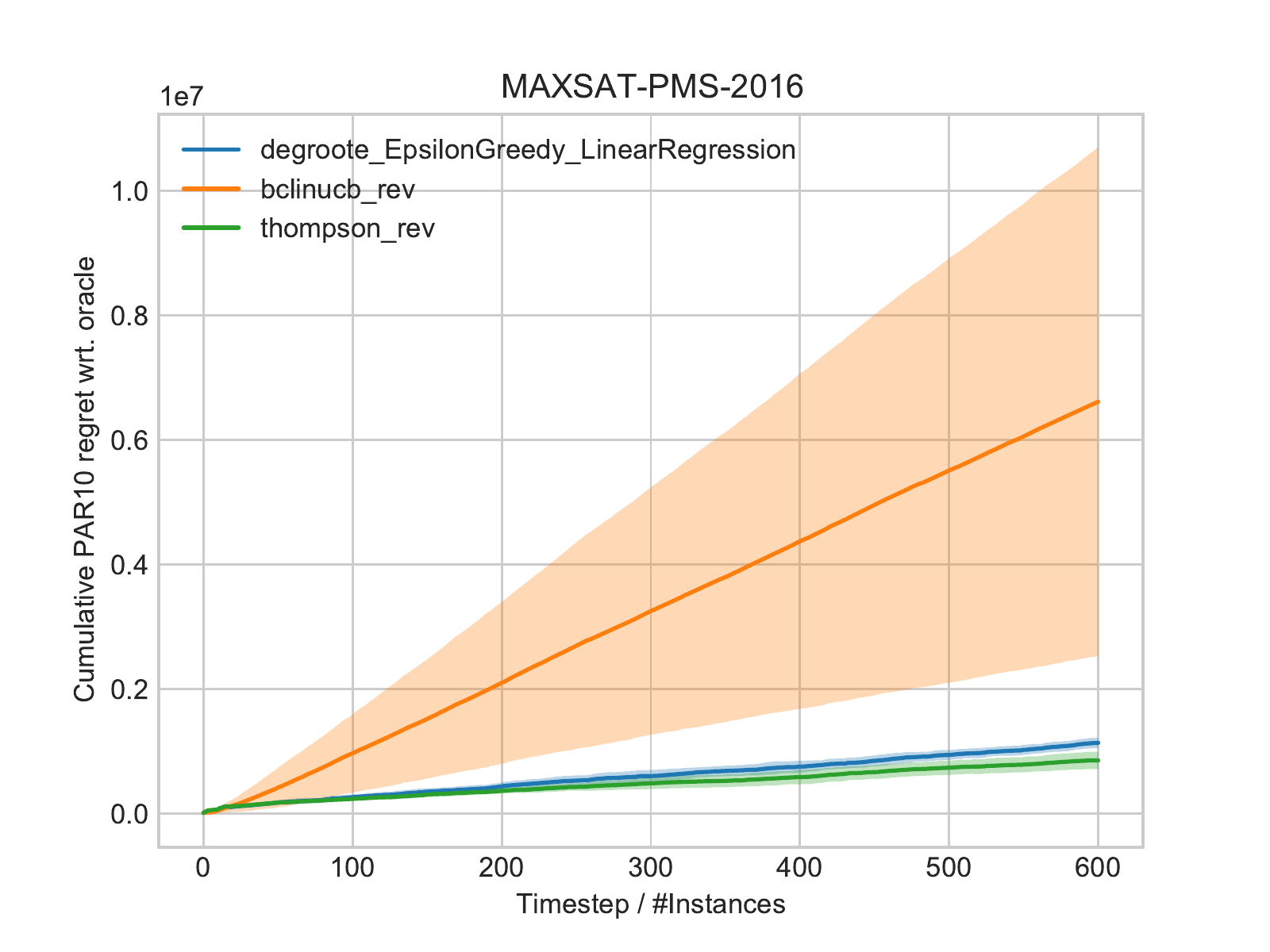}
	 \caption{}
	 \label{fig:app_cumulative_regret_maxsat-pms-2016}
\end{subfigure}
\begin{subfigure}{0.3\textwidth}
	 \includegraphics[width=\linewidth]{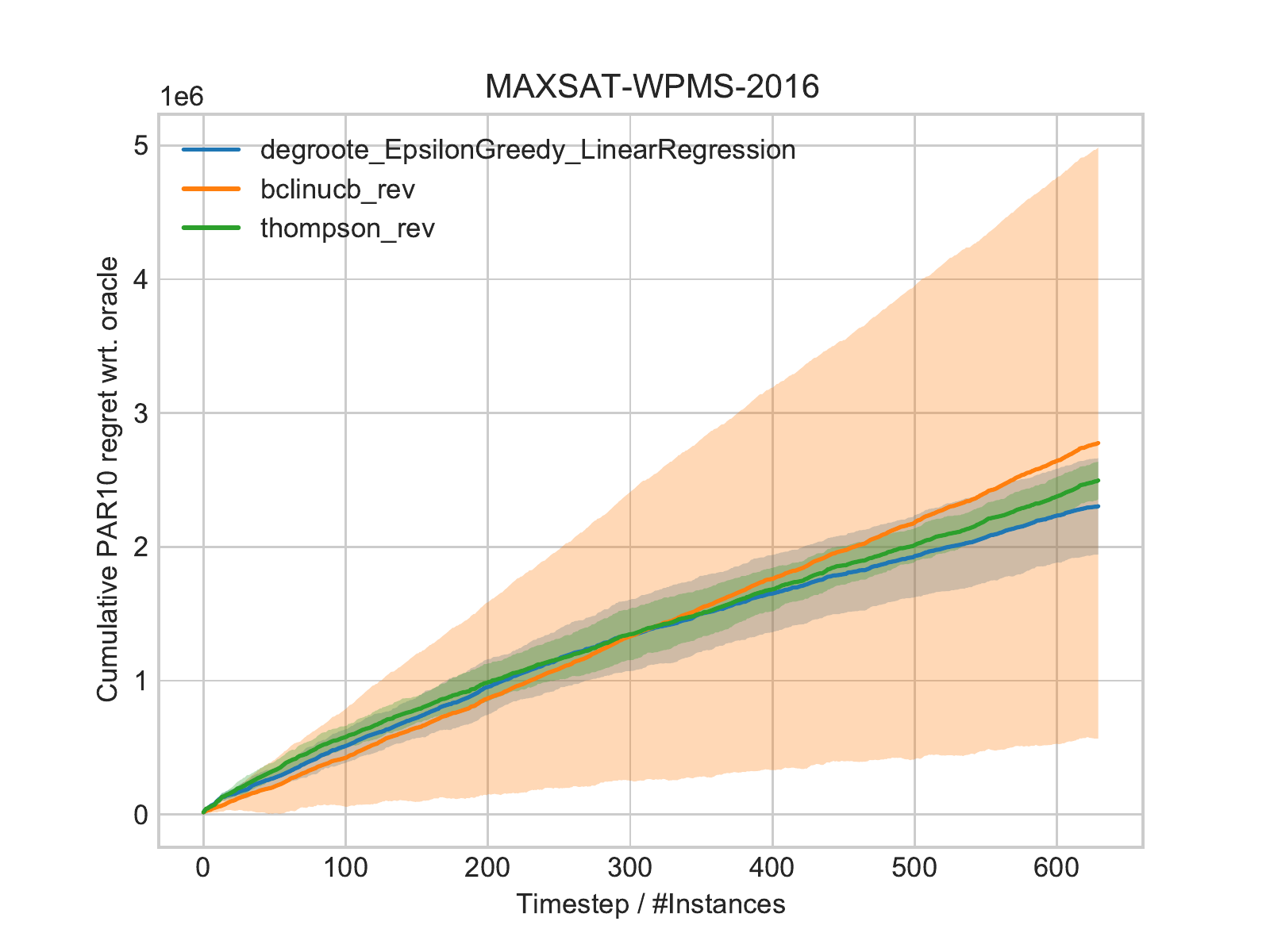}
	 \caption{}
	 \label{fig:app_cumulative_regret_maxsat-wpms-2016}
\end{subfigure}
\begin{subfigure}{0.3\textwidth}
	 \includegraphics[width=\linewidth]{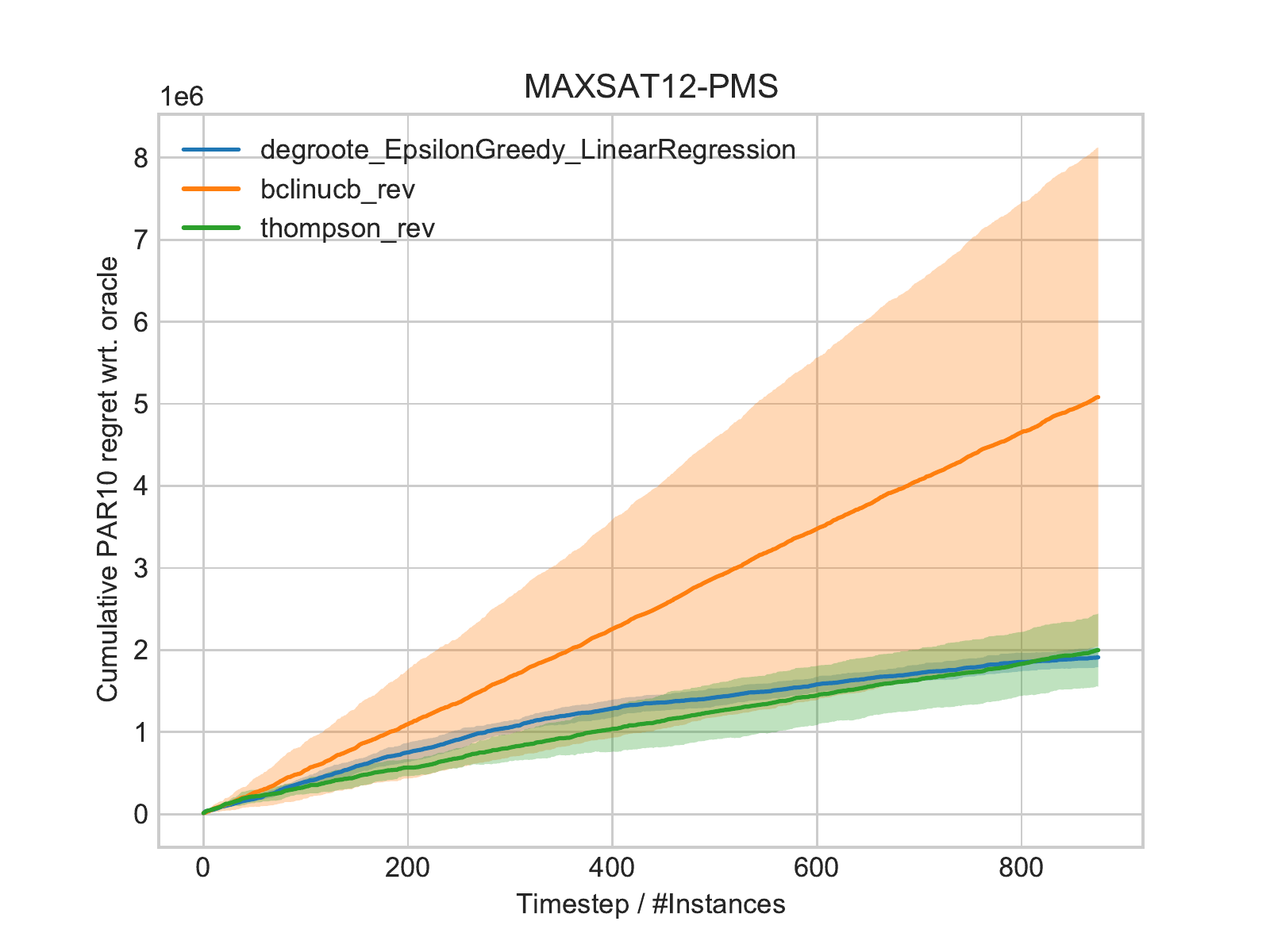}
	 \caption{}
	 \label{fig:app_cumulative_regret_maxsat12-pms}
\end{subfigure}
\begin{subfigure}{0.3\textwidth}
	 \includegraphics[width=\linewidth]{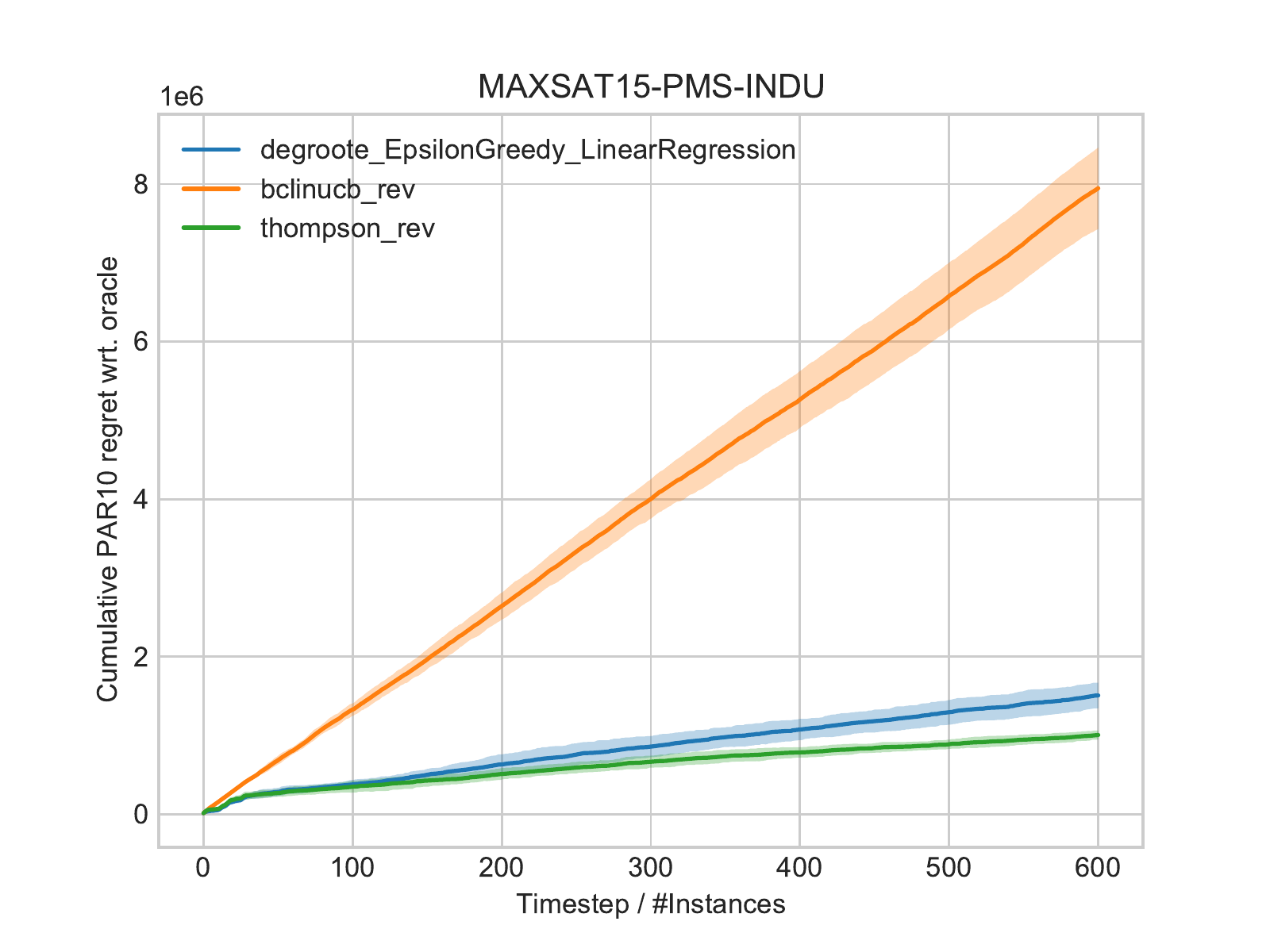}
	 \caption{}
	 \label{fig:app_cumulative_regret_maxsat15-pms-indu}
\end{subfigure}
\begin{subfigure}{0.3\textwidth}
	 \includegraphics[width=\linewidth]{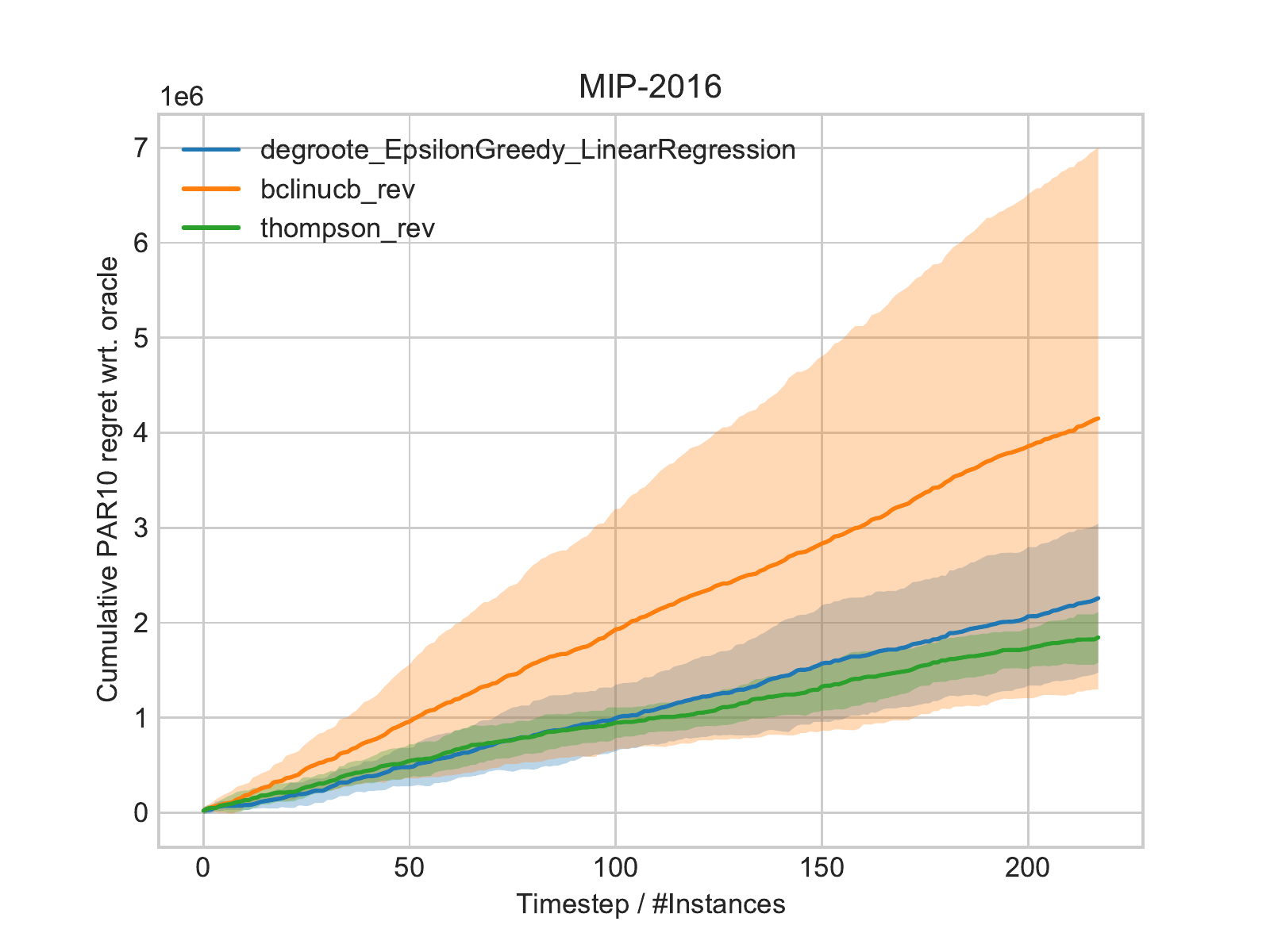}
	 \caption{}
	 \label{fig:app_cumulative_regret_mip-2016}
\end{subfigure}
\begin{subfigure}{0.3\textwidth}
	 \includegraphics[width=\linewidth]{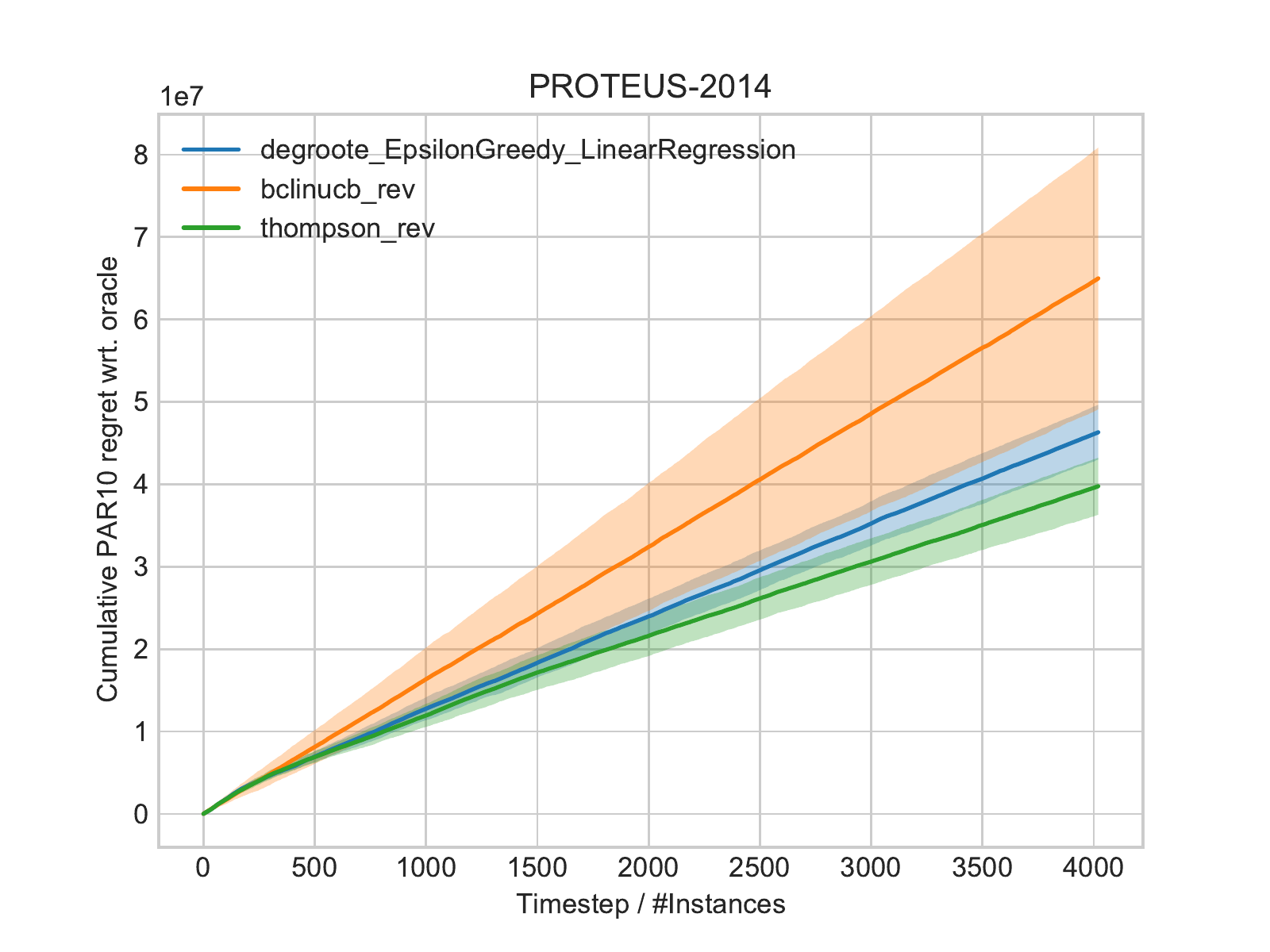}
	 \caption{}
	 \label{fig:app_cumulative_regret_proteus-2014}
\end{subfigure}
\begin{subfigure}{0.3\textwidth}
	 \includegraphics[width=\linewidth]{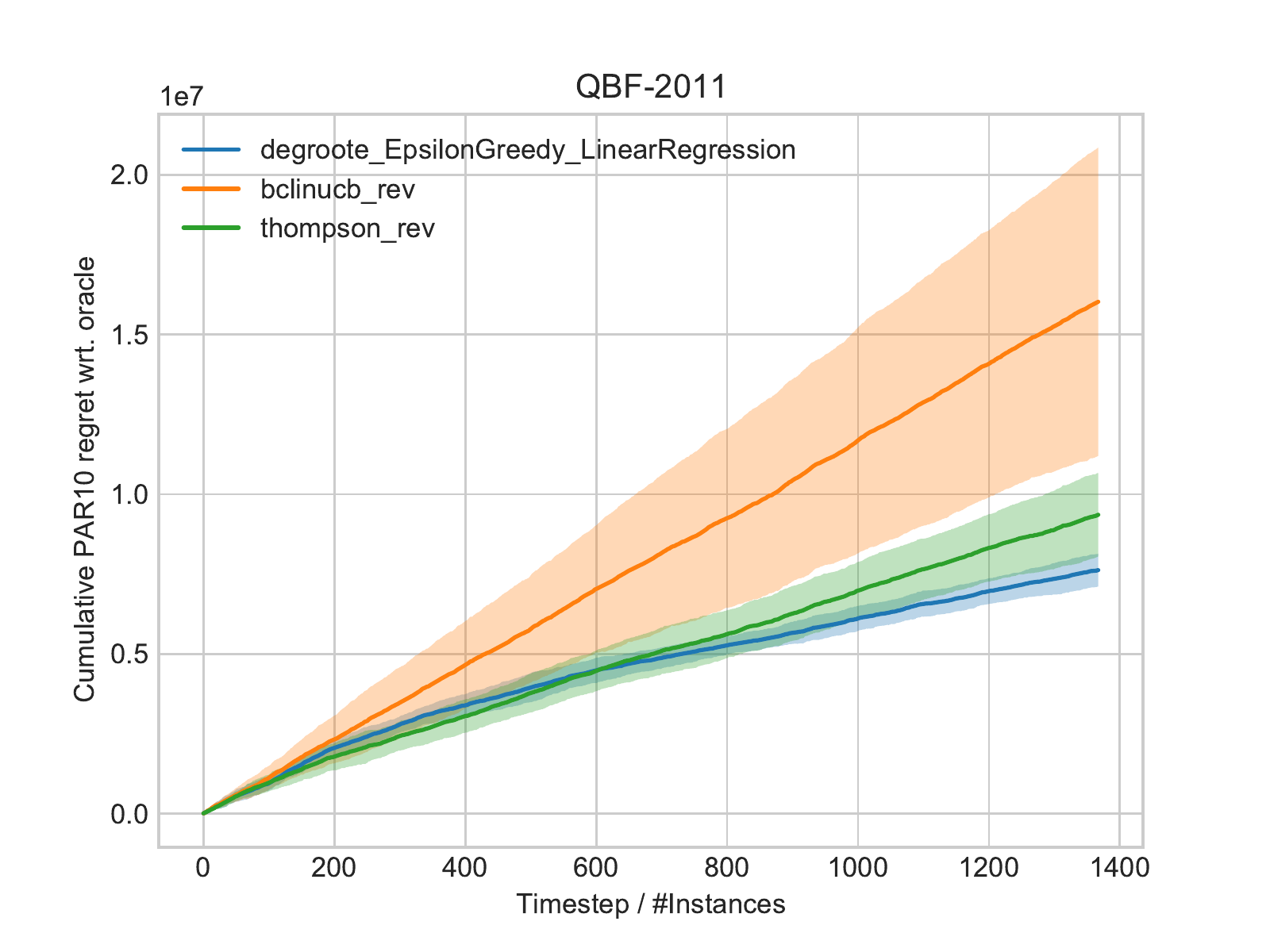}
	 \caption{}
	 \label{fig:app_cumulative_regret_qbf-2011}
\end{subfigure}
\begin{subfigure}{0.3\textwidth}
	 \includegraphics[width=\linewidth]{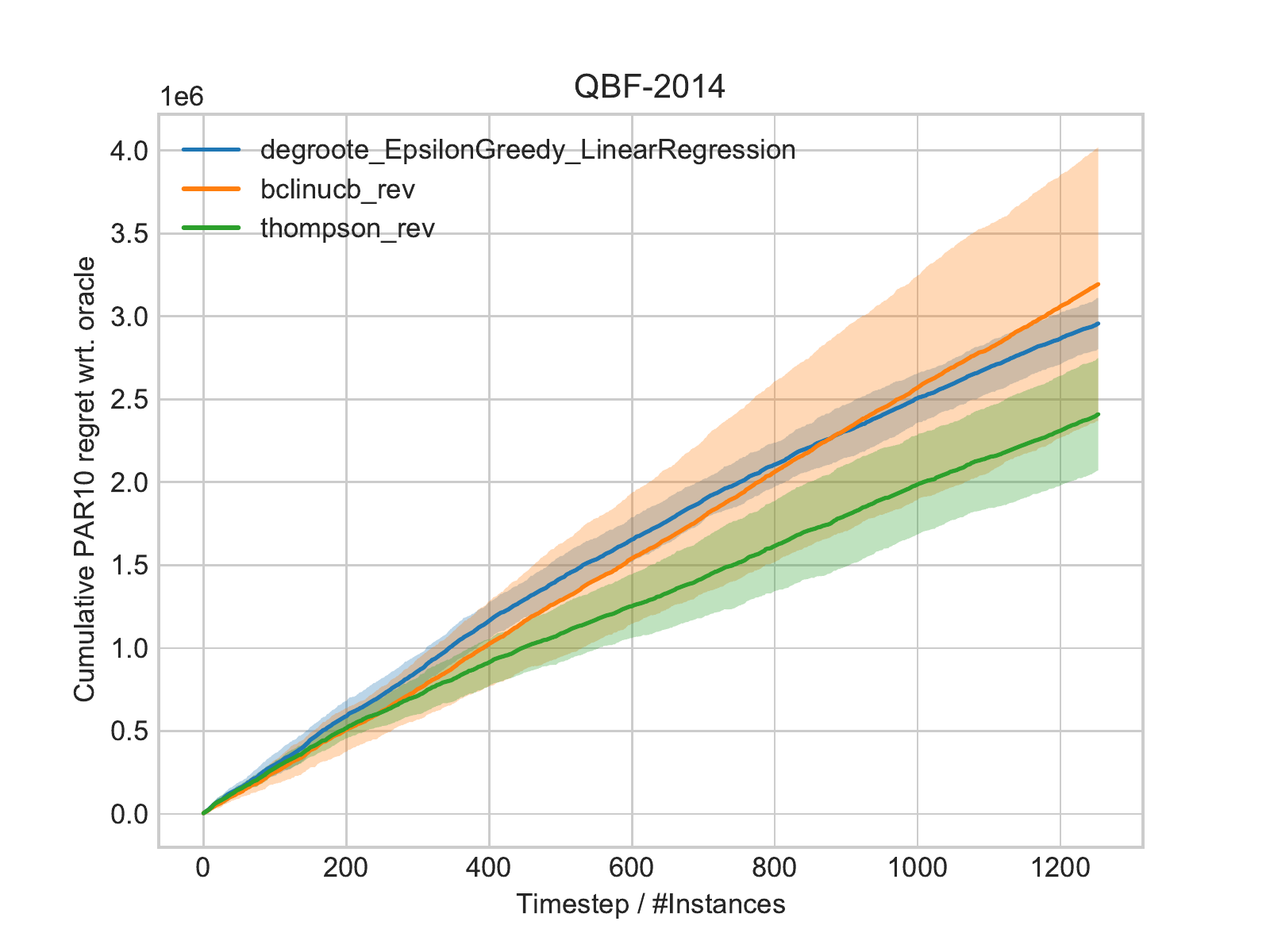}
	 \caption{}
	 \label{fig:app_cumulative_regret_qbf-2014}
\end{subfigure}
\caption{Cumulative PAR10 regret wrt. oracle.}
\label{fig:app_cumulative_regret}
\end{figure}
\begin{figure}[htb]
	\ContinuedFloat
	\centering
\begin{subfigure}{0.3\textwidth}
	 \includegraphics[width=\linewidth]{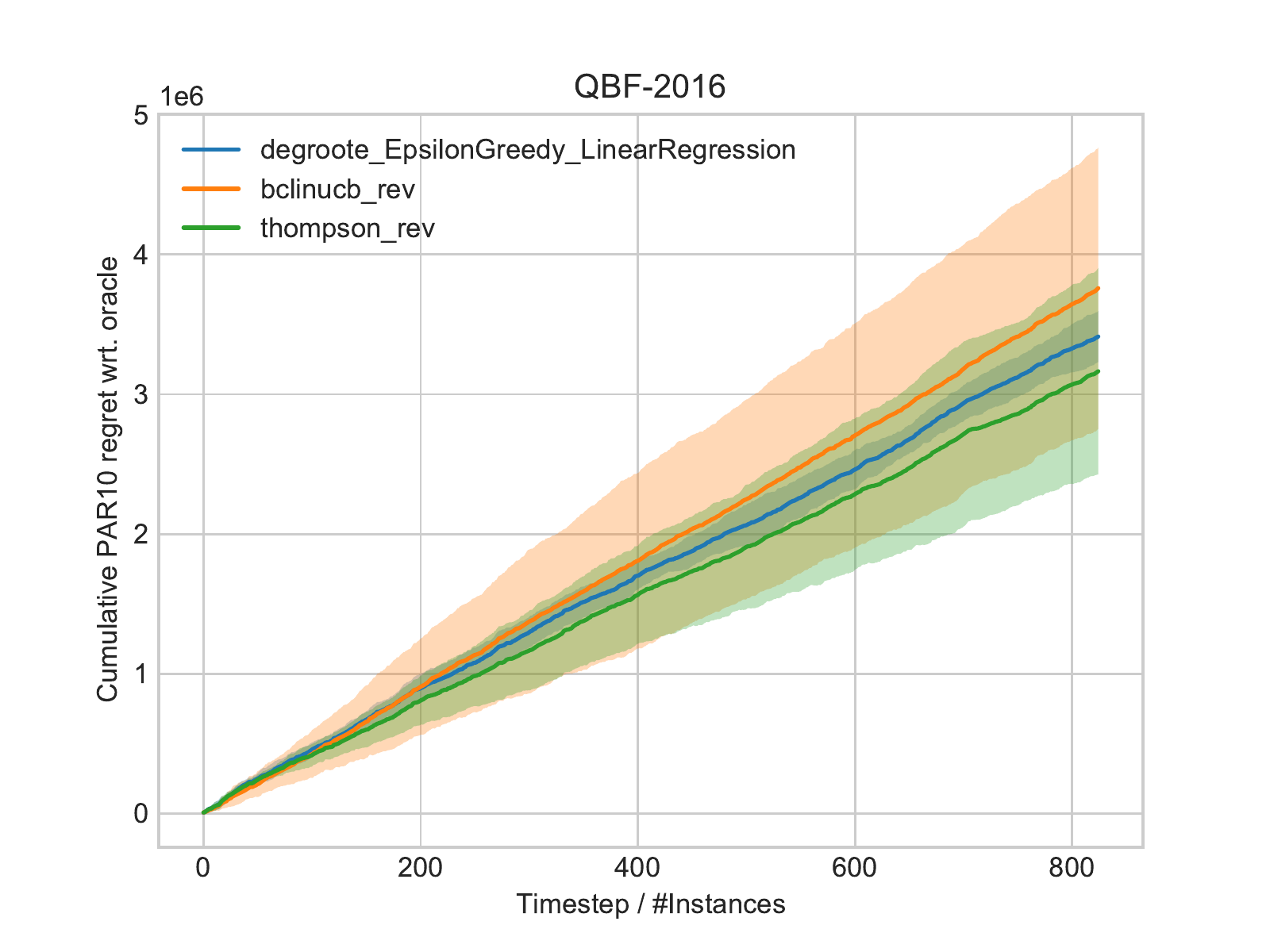}
	 \caption{}
	 \label{fig:app_cumulative_regret_qbf-2016}
\end{subfigure}
\begin{subfigure}{0.3\textwidth}
	 \includegraphics[width=\linewidth]{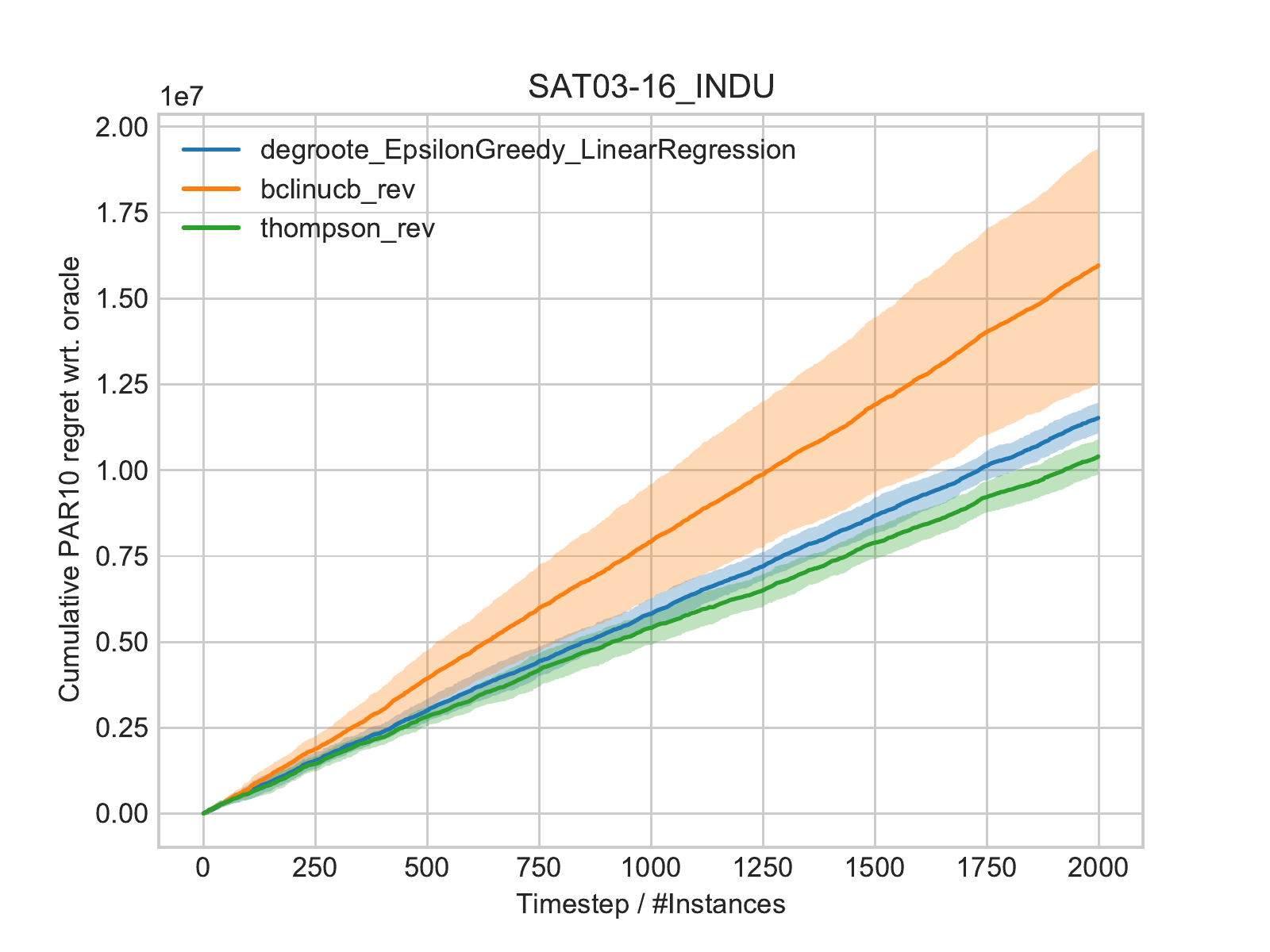}
	 \caption{}
	 \label{fig:app_cumulative_regret_sat03-16_indu}
\end{subfigure}
\begin{subfigure}{0.3\textwidth}
	 \includegraphics[width=\linewidth]{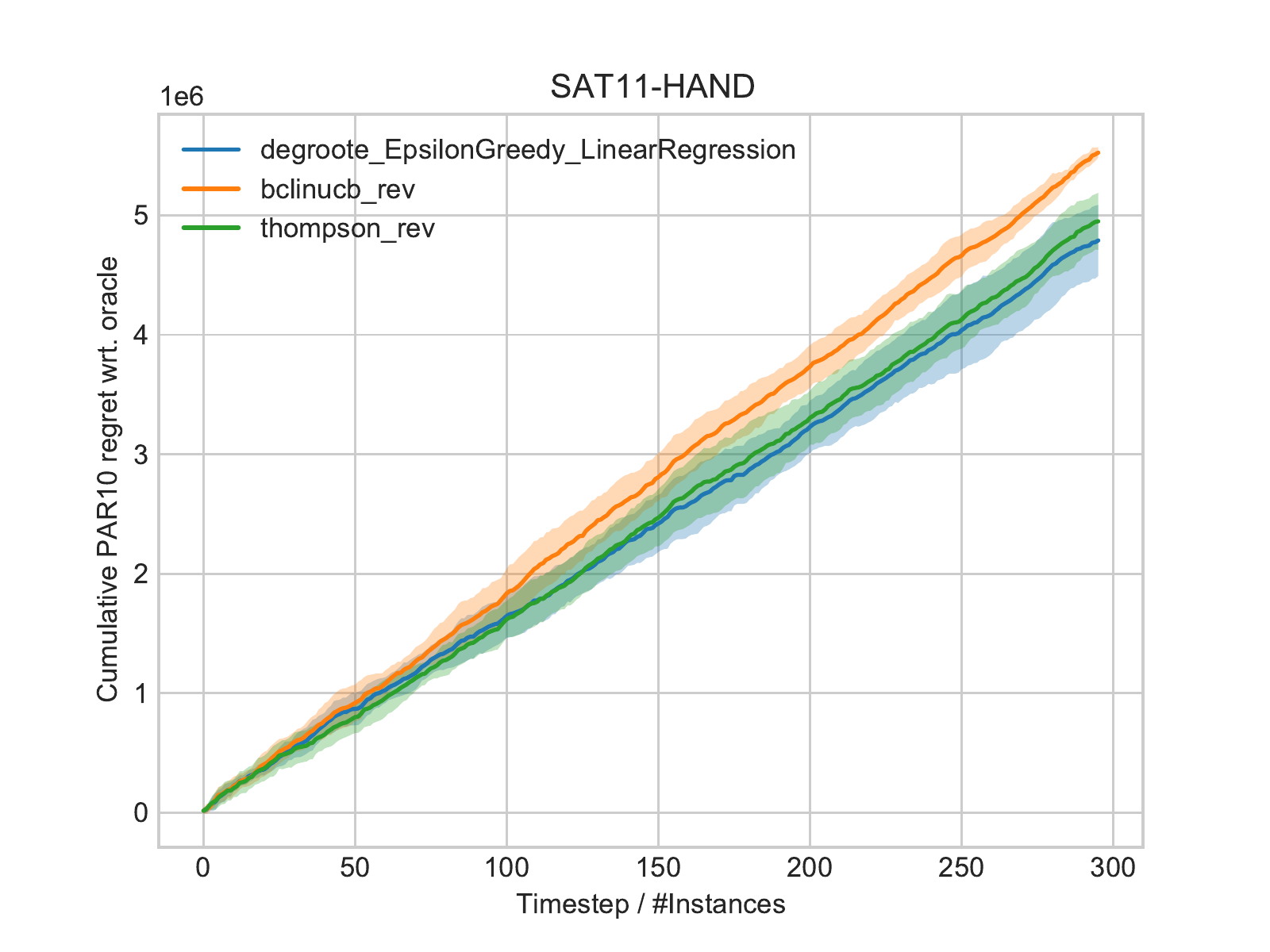}
	 \caption{}
	 \label{fig:app_cumulative_regret_sat11-hand}
\end{subfigure}
\begin{subfigure}{0.3\textwidth}
	 \includegraphics[width=\linewidth]{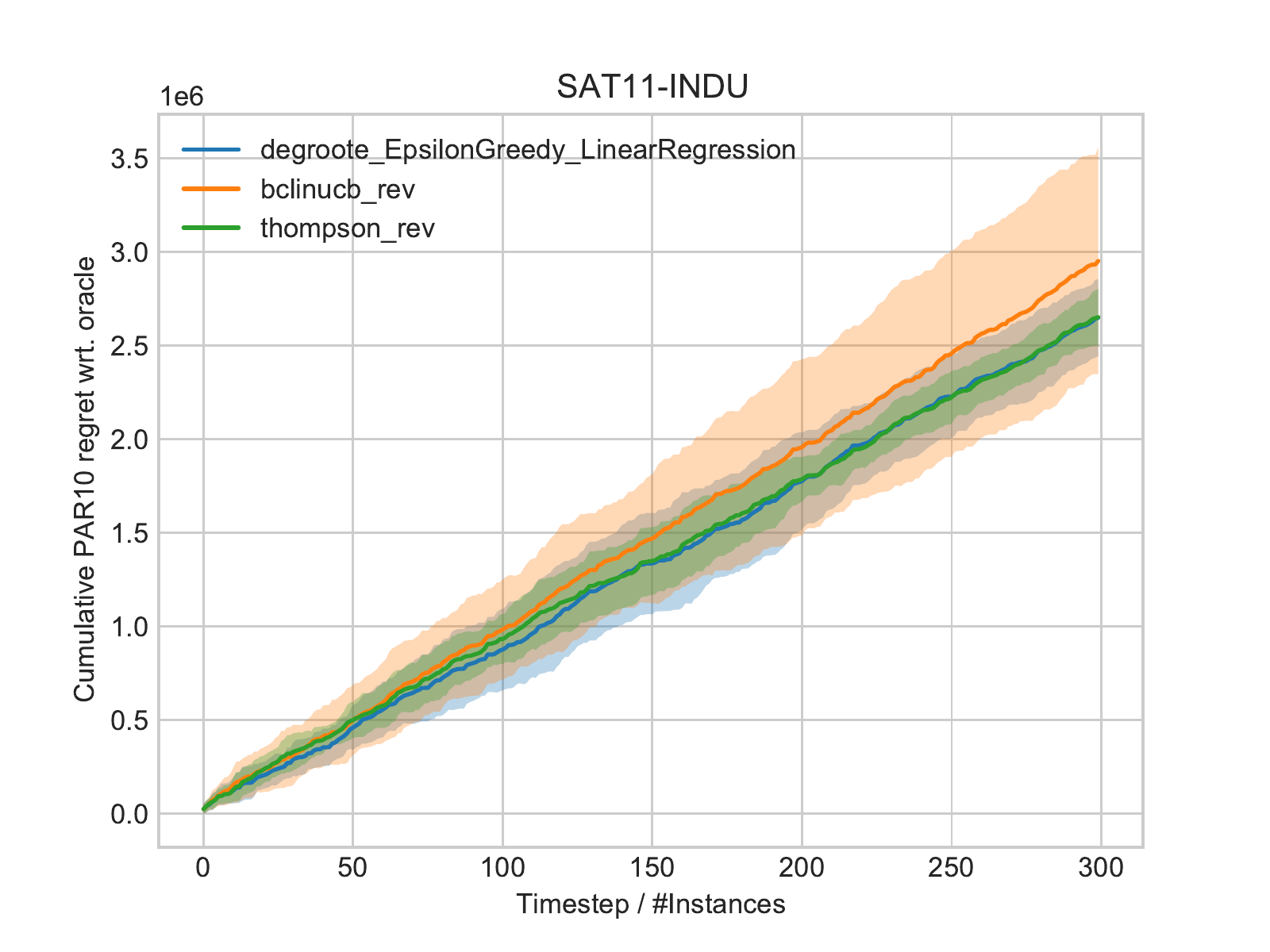}
	 \caption{}
	 \label{fig:app_cumulative_regret_sat11-indu}
\end{subfigure}
\begin{subfigure}{0.3\textwidth}
	 \includegraphics[width=\linewidth]{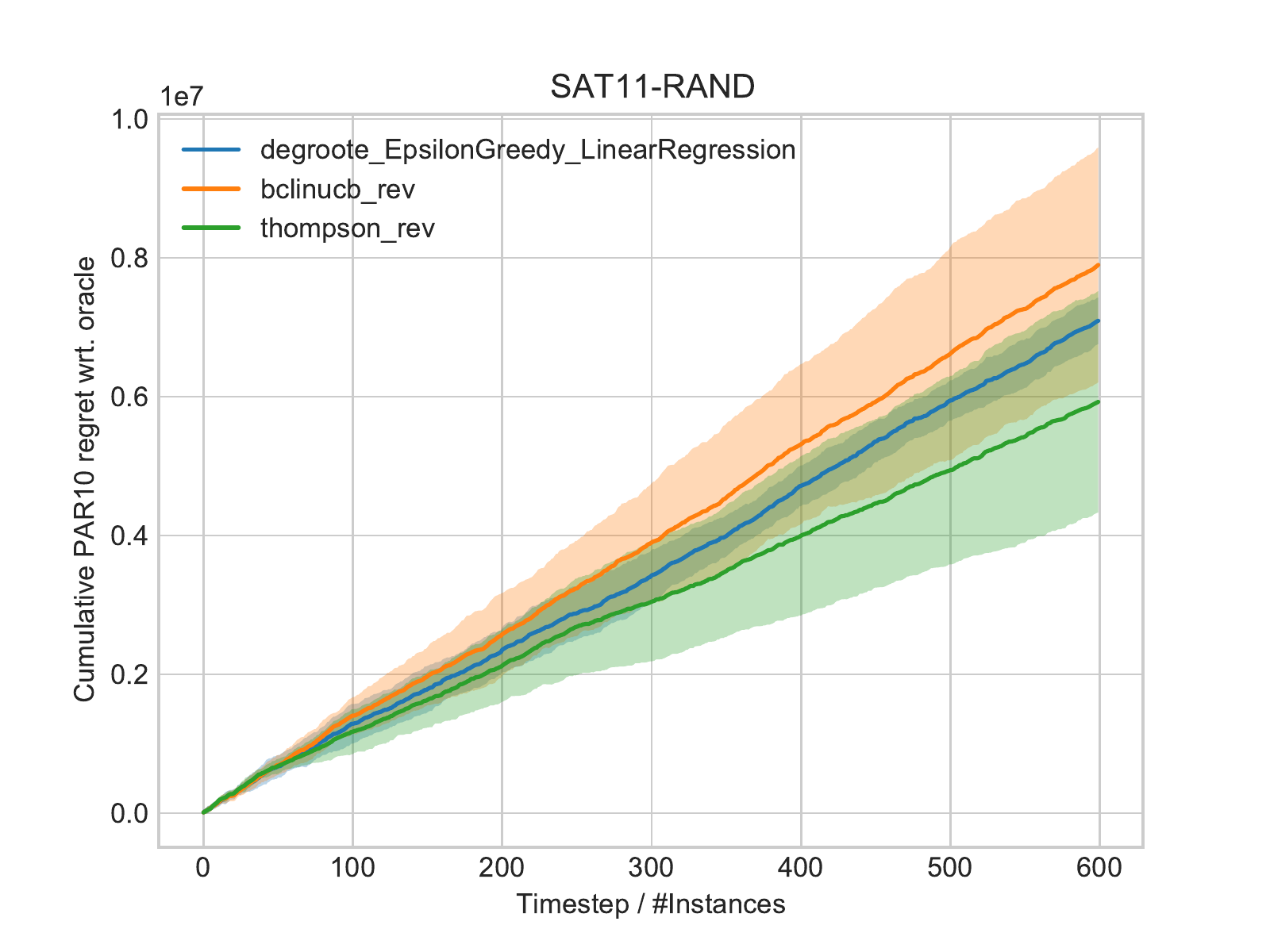}
	 \caption{}
	 \label{fig:app_cumulative_regret_sat11-rand}
\end{subfigure}
\begin{subfigure}{0.3\textwidth}
	 \includegraphics[width=\linewidth]{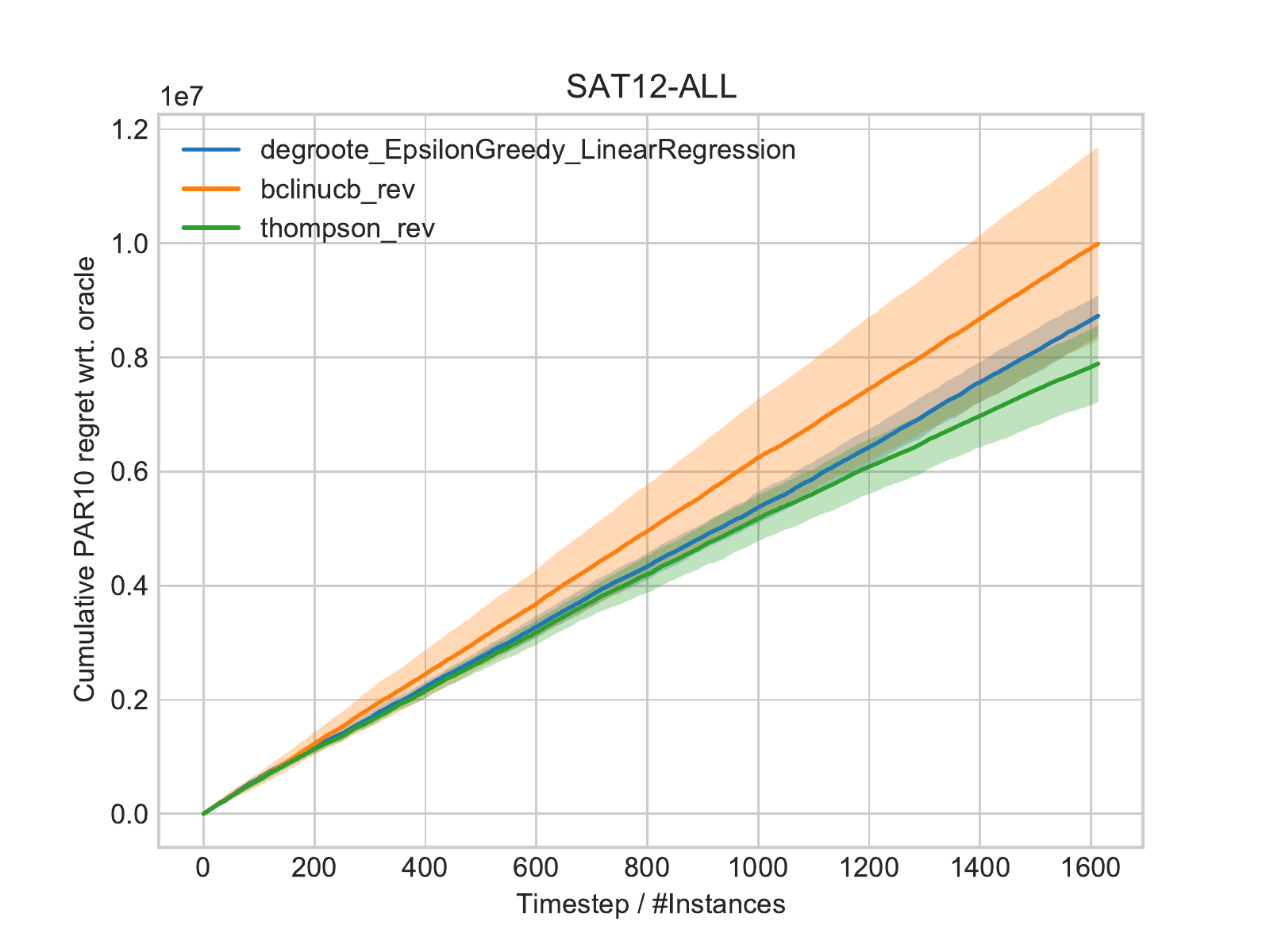}
	 \caption{}
	 \label{fig:app_cumulative_regret_sat12-all}
\end{subfigure}
\begin{subfigure}{0.3\textwidth}
	 \includegraphics[width=\linewidth]{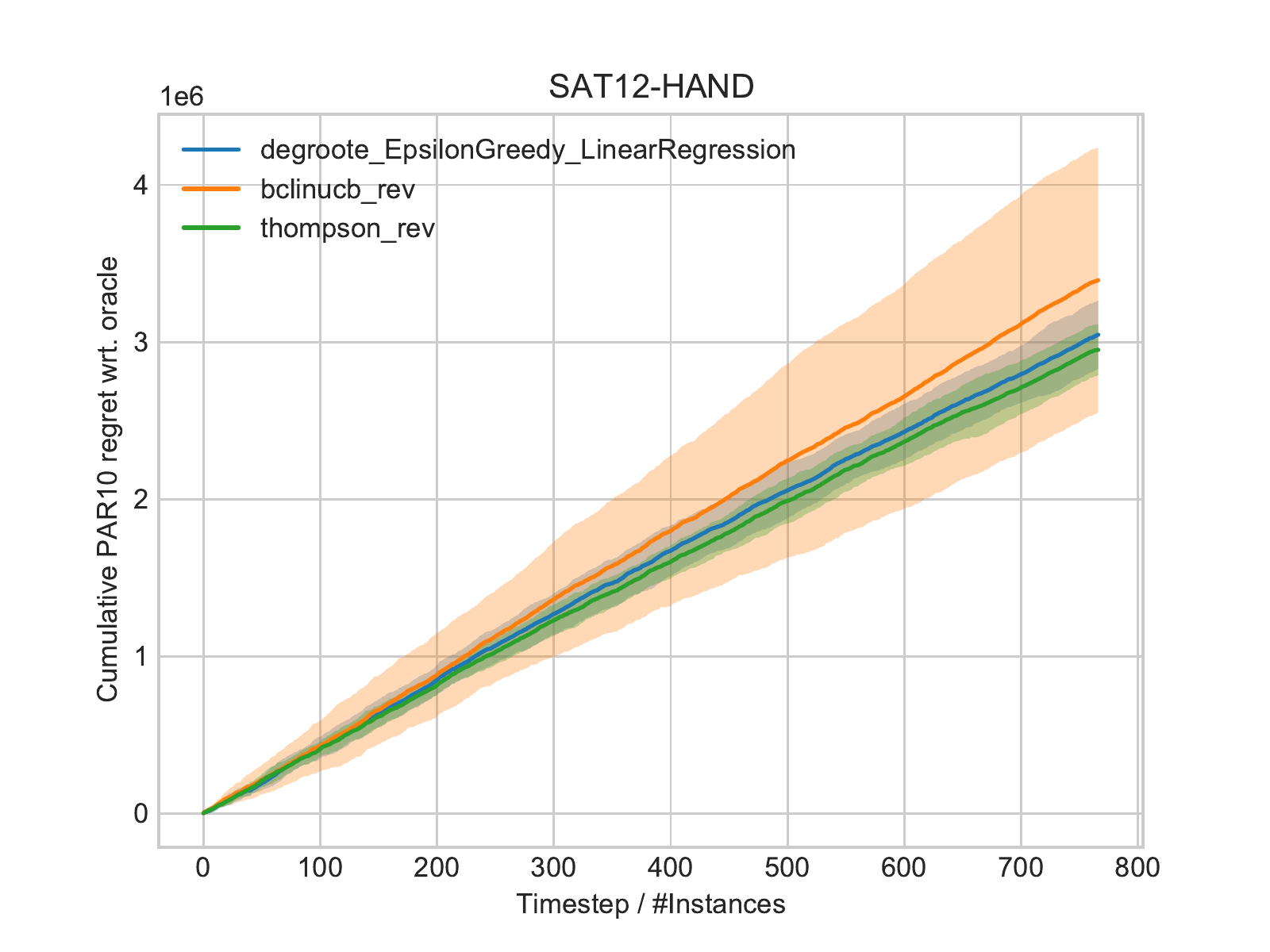}
	 \caption{}
	 \label{fig:app_cumulative_regret_sat12-hand}
\end{subfigure}
\begin{subfigure}{0.3\textwidth}
	 \includegraphics[width=\linewidth]{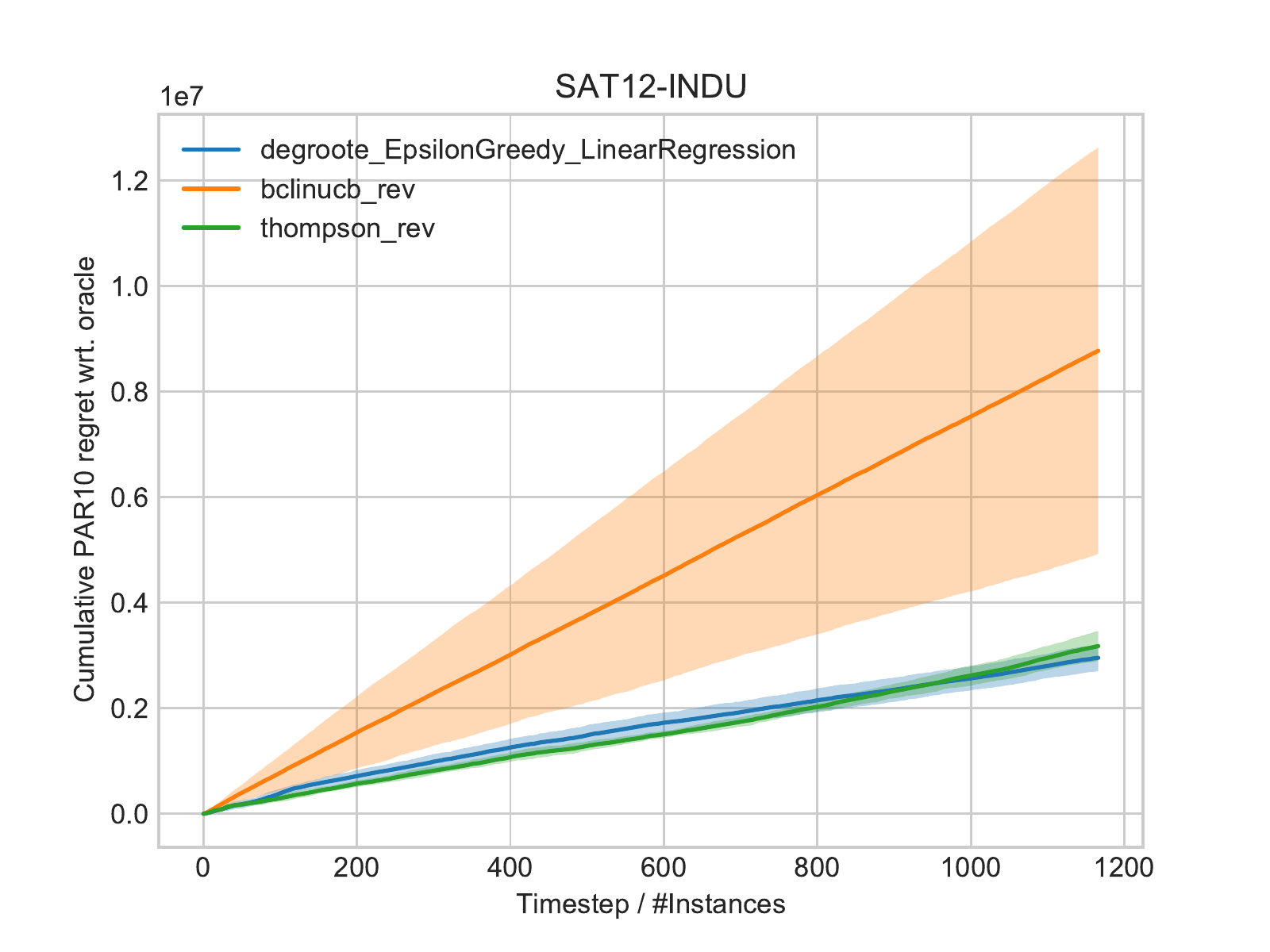}
	 \caption{}
	 \label{fig:app_cumulative_regret_sat12-indu}
\end{subfigure}
\begin{subfigure}{0.3\textwidth}
	 \includegraphics[width=\linewidth]{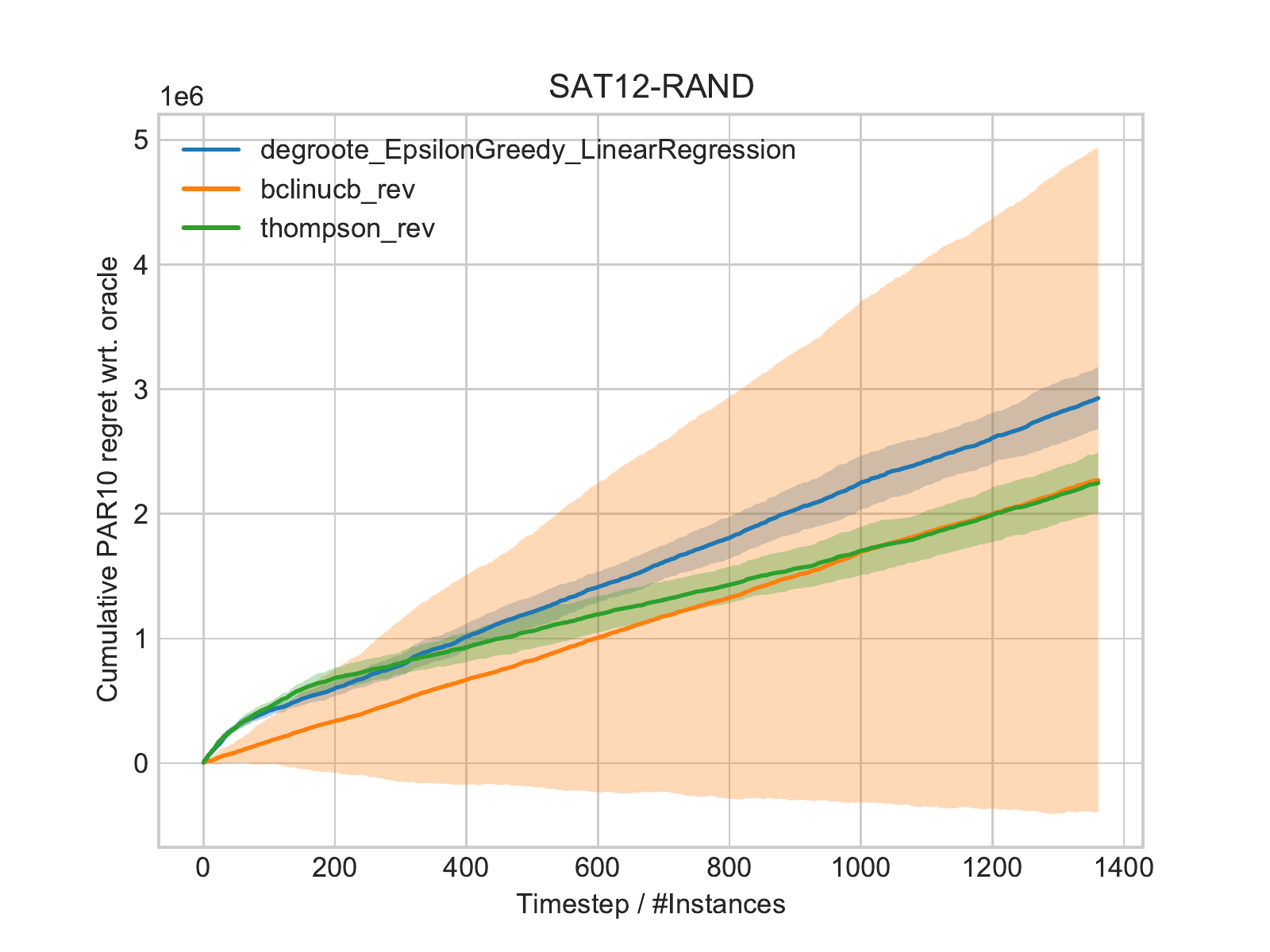}
	 \caption{}
	 \label{fig:app_cumulative_regret_sat12-rand}
\end{subfigure}
\begin{subfigure}{0.3\textwidth}
	 \includegraphics[width=\linewidth]{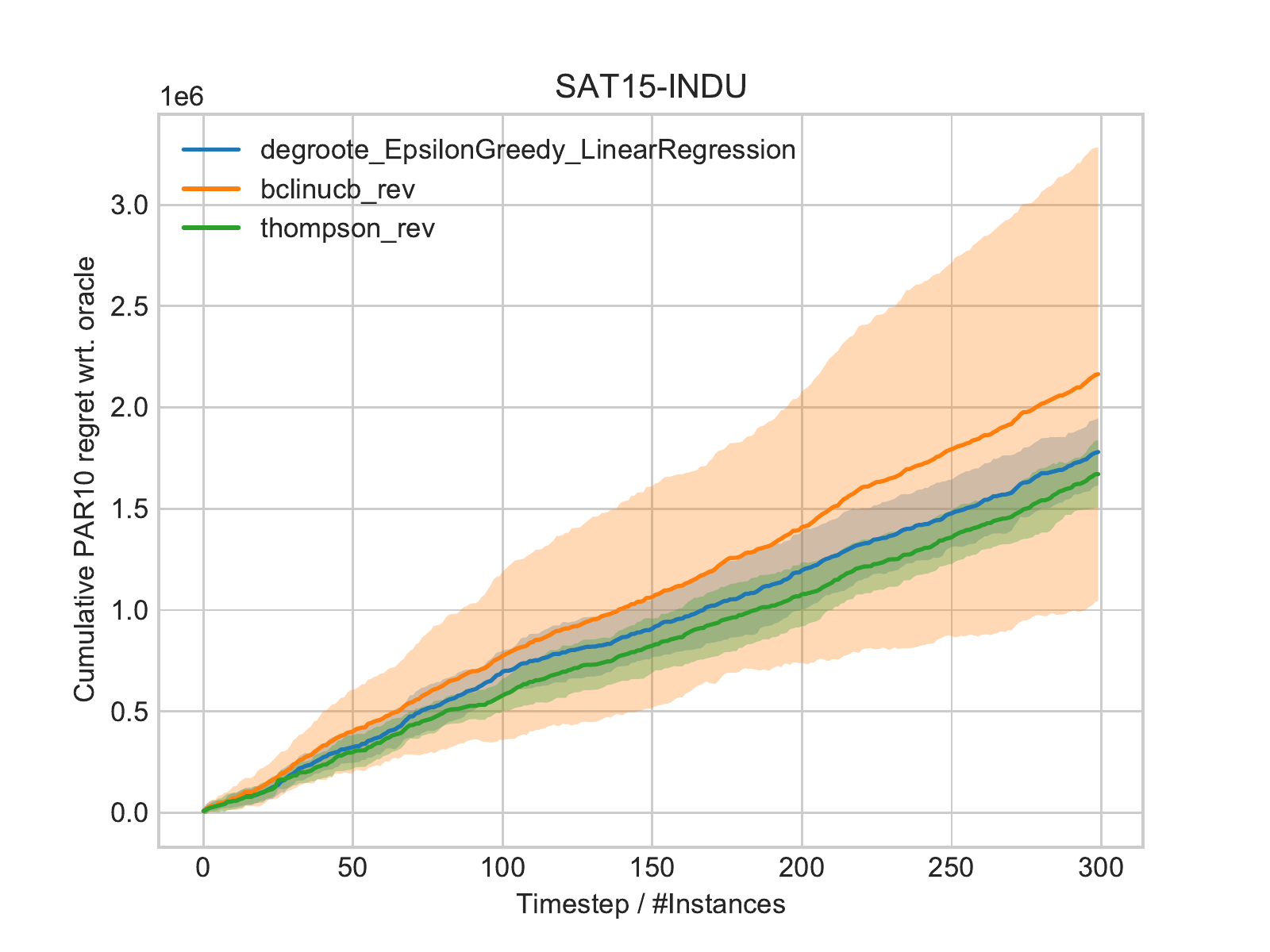}
	 \caption{}
	 \label{fig:app_cumulative_regret_sat15-indu}
\end{subfigure}
\begin{subfigure}{0.3\textwidth}
	 \includegraphics[width=\linewidth]{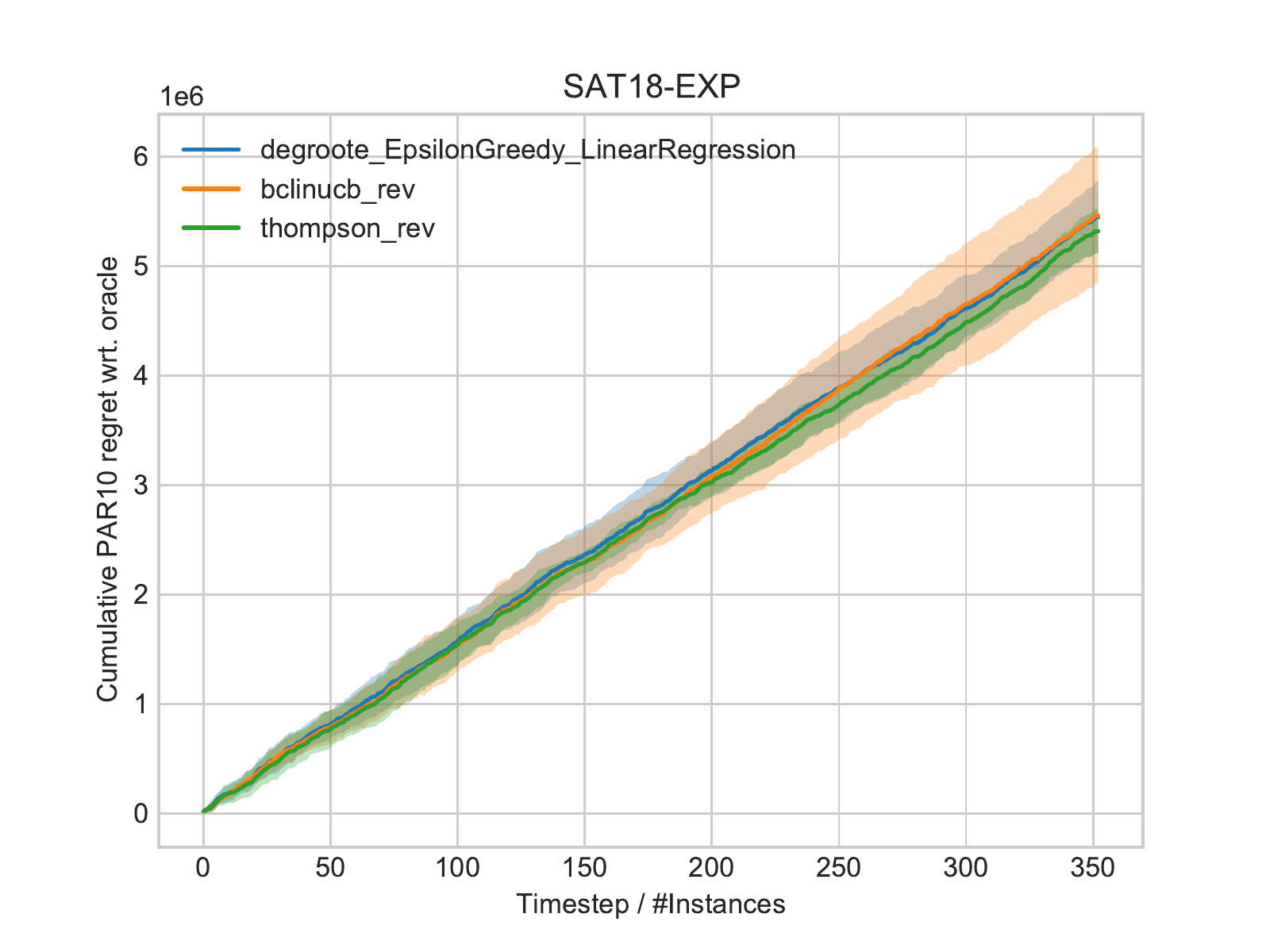}
	 \caption{}
	 \label{fig:app_cumulative_regret_sat18-exp}
\end{subfigure}
\begin{subfigure}{0.3\textwidth}
	 \includegraphics[width=\linewidth]{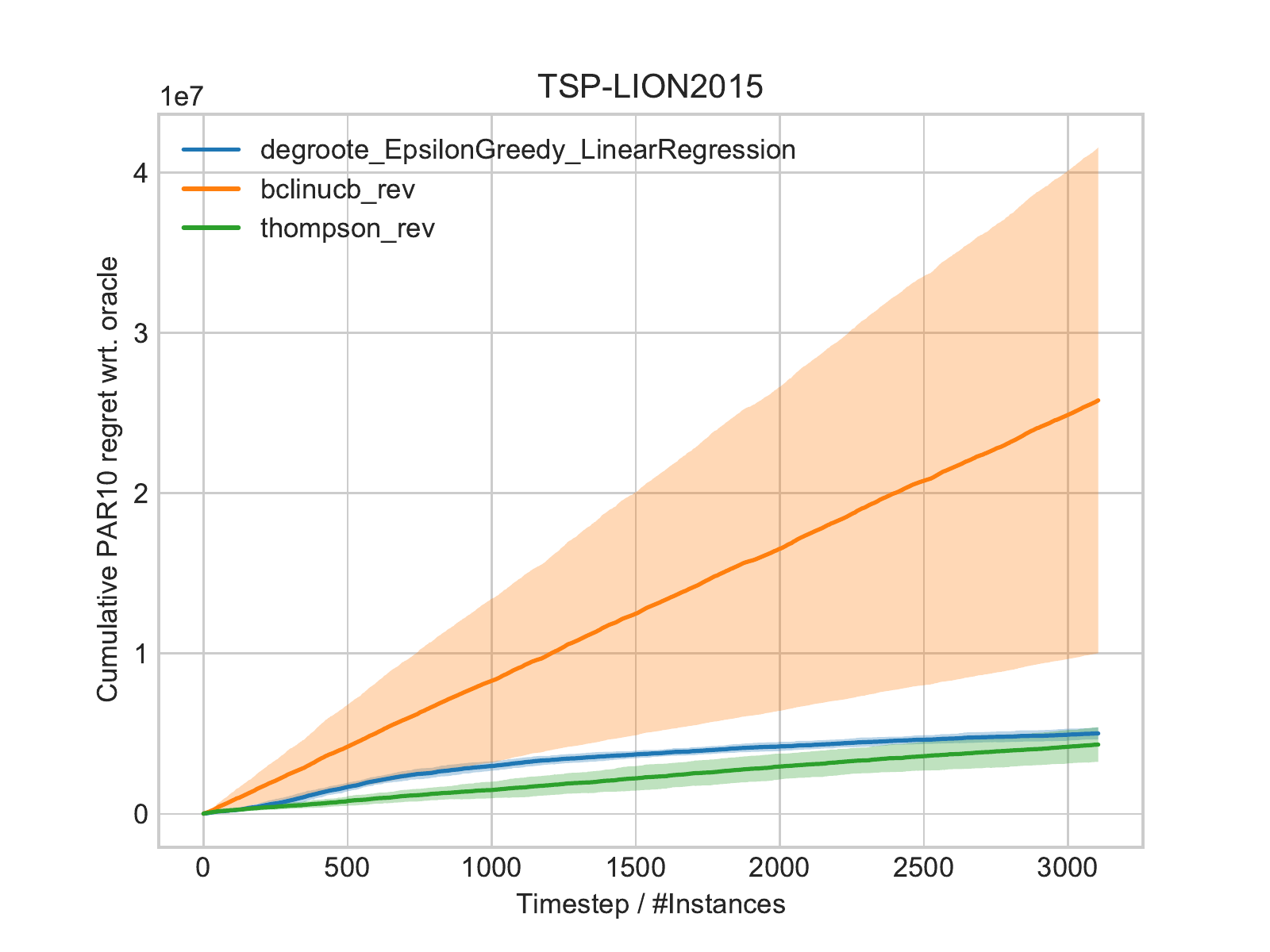}
	 \caption{}
	 \label{fig:app_cumulative_regret_tsp-lion2015}
\end{subfigure}
\caption{(Cont.) Cumulative PAR10 regret wrt. oracle.}
\label{fig:app_cumulative_regret_1}
\end{figure}

\end{document}